%% file: nips-adathink.tex
\documentclass{article}

\usepackage[preprint]{neurips_2026}

\usepackage[utf8]{inputenc}
\usepackage[T1]{fontenc}
\usepackage{lmodern}
\usepackage{hyperref}
\usepackage{url}
\usepackage{booktabs}
\usepackage{amsfonts}
\usepackage{amsmath}
\usepackage{amssymb}
\usepackage{amsthm}
\usepackage{nicefrac}
\usepackage{microtype}
\usepackage{xcolor}
\usepackage{colortbl}
\usepackage{graphicx}
\usepackage{subcaption}
\usepackage{multirow}
\usepackage{algorithm}
\usepackage{algpseudocode}
\usepackage{tablefootnote}
\usepackage{enumitem}
\usepackage{tikz}
\usetikzlibrary{shapes,arrows,positioning,fit,calc}
\usepackage{wrapfig}

\definecolor{improve}{RGB}{0,128,0}
\definecolor{decline}{RGB}{200,0,0}
\definecolor{easycolor}{RGB}{76,175,80}
\definecolor{medcolor}{RGB}{255,152,0}
\definecolor{hardcolor}{RGB}{244,67,54}
\definecolor{nothinkblue}{RGB}{31,119,180}
\definecolor{thinkorange}{RGB}{255,127,14}

\newcommand{\up}[1]{\textcolor{improve}{\textbf{+#1}}}

\newcommand{\ci}[2]{{\scriptsize[#1,\,#2]}}

\newcommand{\method}{\textsc{Mrsd}}
\newcommand{\fullmethod}{Split-Budget Generation via Multi-Round Self-Distillation}
\newcommand{\town}{\textsc{Town}}

\newcommand{\couplingtax}{\emph{coupling tax}}
\newcommand{\splitbudget}{\emph{split-budget generation}}

\newtheorem{theorem}{Theorem}
\newtheorem{proposition}{Proposition}
\newtheorem{corollary}{Corollary}
\newtheorem{definition}{Definition}
\newtheorem{assumption}{Assumption}
\theoremstyle{remark}
\newtheorem{remark}{Remark}

\graphicspath{{paper_figures/}{figures/}}

\title{The Coupling Tax: How Shared Token Budgets Undermine Visible Chain-of-Thought Under Fixed Output Limits}

\author{
\textbf{Wenhua Nie \quad Junlin Liu \quad Jianan Wu \quad Zijie Meng}\\
\textbf{Yilong Fan \quad Zhang Zijian \quad Haoran Zheng}\\
\textbf{Jyh-Shing Roger Jang}\\[3pt]
\normalfont Correspondence: Wenhua Nie, National Taiwan University\\
\normalfont \texttt{d13944014@ntu.edu.tw}
}

\begin{document}

\maketitle

\begin{abstract}
Chain-of-thought reasoning is often treated as a monotone way to improve language-model accuracy by letting a model think longer.
We identify a countervailing effect, the \couplingtax{}: when reasoning traces and final answers share one output-token budget, long traces can crowd out the answer they are meant to support.
Across GSM8K, MATH-500, and five BIG-Bench Hard tasks with Qwen3 models at three scales, non-thinking mode matches or outperforms thinking mode on GSM8K and MATH-500 at every budget up to 2048 tokens, while harder tasks shift the crossover to larger budgets.
We derive a truncation-waste decomposition,
$\mathrm{Acc}_{\mathrm{think}}(b)=\alpha_c F_L(b)+\alpha_t(1-F_L(b))$, that predicts this crossover from chain-length and accuracy statistics and explains inverse scaling within the Qwen family.
A DeepSeek-R1-Distill-Llama-8B replication shows the same pattern under a different thinking interface.
As a mitigation, \splitbudget{} decouples reasoning and answer budgets; on full MATH-500, IRIS reaches 74.0\% accuracy, a strengthened extraction variant reaches 78.8\%, and a fixed non-oracle SC+IRIS gate reaches 83.6\%.
The results show that test-time reasoning should be evaluated as a budget-allocation problem, not only as a question of whether longer traces are available.
\end{abstract}

\section{Introduction}
\label{sec:intro}

\input{sections/introduction_final}

\section{Background and Related Work}
\label{sec:related}

\input{sections/related_final}

\section{The Thinking Tax: An Empirical Study}
\label{sec:thinking-tax}

\input{sections/analysis_final}

\section{Diagnostic Analysis}
\label{sec:theory}

\input{sections/theory_final}

\section{An Exploratory Mitigation: \method{}}
\label{sec:method}

\input{sections/method_final}

\section{Experiments}
\label{sec:experiments}

\input{sections/experiments_final}

\section{Discussion and Conclusion}
\label{sec:conclusion}

\input{sections/conclusion_final}



\clearpage
\bibliographystyle{plainnat}
\bibliography{references}

\appendix
\input{sections/appendix_final}

\end{document}

%% file: sections/introduction_final.tex

\begin{figure}[t]
    \centering
    \includegraphics[width=\linewidth]{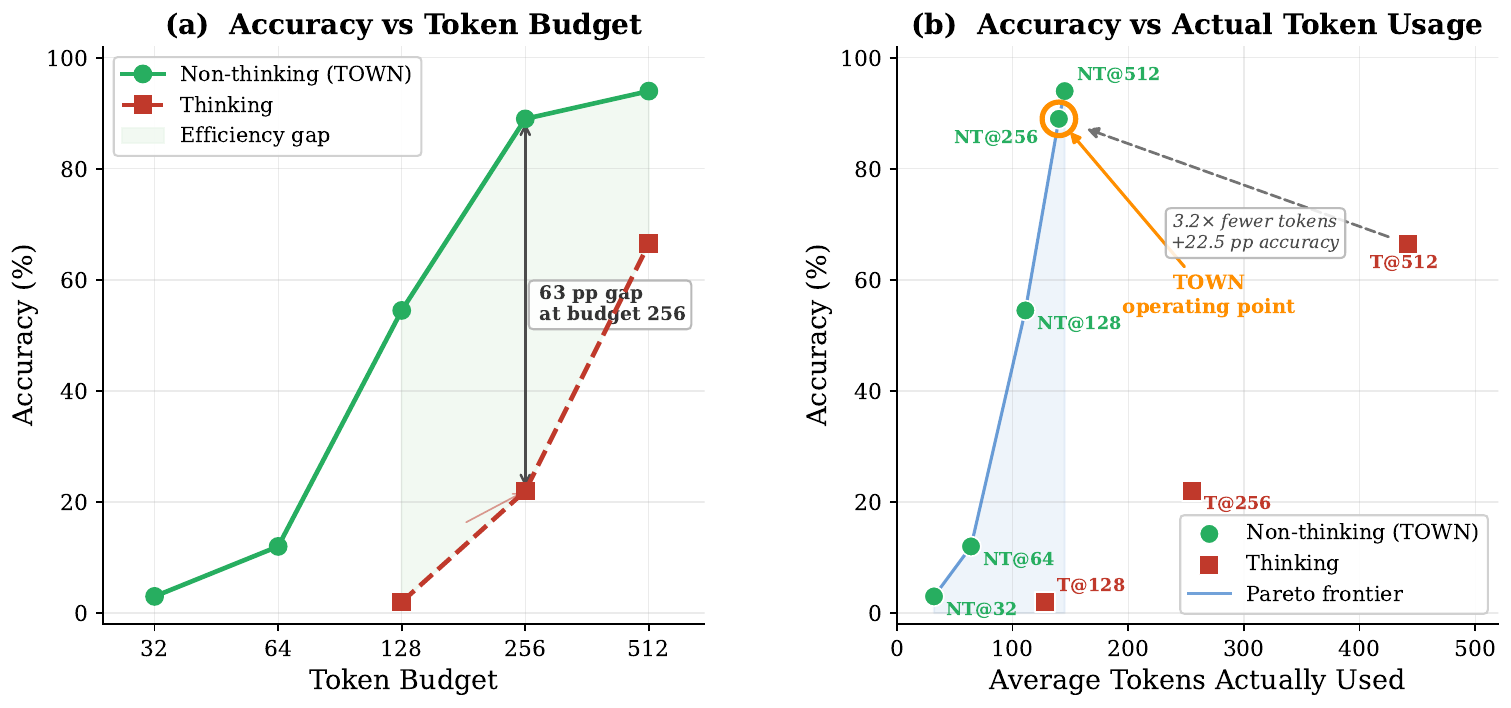}
    \caption{\textbf{The Coupling Tax.}
    Non-thinking mode (blue) dramatically outperforms thinking mode (orange) at every matched token budget $\leq$512 on GSM8K (Qwen3-8B, $n{=}1{,}319$).
    At budget 512, \texttt{nothink@512} achieves \textbf{93.1\%} while \texttt{think@512} reaches just \textbf{56.9\%}---a \textbf{+36.2\,pp} gap.
    The gap widens for the 27B model (Table~\ref{tab:model-size-scaling}).}
    \label{fig:nothink-vs-thinking}
\end{figure}

Chain-of-thought (CoT) reasoning~\citep{wei2022chain} has become the dominant paradigm for improving LLM performance on complex tasks, operationalized in systems like OpenAI o1~\citep{openai2024o1}, QwQ~\citep{qwq2024}, and DeepSeek-R1~\citep{deepseekr1}.
The implicit promise is simple: more thinking tokens, more accuracy.

We study this promise under fixed output-token budgets, the serving knob that directly controls latency, cost, and throughput.
The question is not whether cutting off a chain mid-sentence is harmful.
It is whether a system should invoke visible CoT at all when the answer must fit inside a fixed cap and the model also exposes a native direct-answer mode.
This choice must be made before the full chain length is known, and common deployment caps can fall below a task- and model-specific crossover.
The promise breaks down under two measurable conditions: (i)~reasoning chains typically exceed the budget, causing truncation waste; and (ii)~the model has an architecturally distinct non-thinking mode.
Because autoregressive decoding spends a forward step per generated token, simply raising the cap until every chain completes can multiply latency and throughput cost; the relevant question is therefore the accuracy--token tradeoff at the cap actually served.

\paragraph{The thinking tax.}
Through experiments on GSM8K, MATH-500, and five BBH tasks with three Qwen-family sizes (8B, 9B, 27B), we characterize a phenomenon also observed concurrently by \citet{ma2025nothinking}: \textbf{below the crossover budget, non-thinking mode outperforms thinking mode at every tested budget, often by a wide margin} (Figure~\ref{fig:nothink-vs-thinking}).
At budget 512, non-thinking achieves \textbf{93.1\%} using 152 avg tokens, while thinking reaches just \textbf{56.9\%} with 460 tokens.
On MATH-500, a same-H800 run at budget 2048 gives nothink@2048 = \textbf{68.4\%} vs.\ think@2048 = \textbf{54.8\%}, confirming that the below-crossover tax persists on harder math without cross-hardware comparison.
At 27B on GSM8K, the tax also persists at a 4096-token cap (nothink 98.0\% vs.\ think 87.5\%), showing that the issue is not restricted to extremely small budgets.

The root cause is \emph{truncation waste}: 98.6\% of thinking responses are truncated at $b{=}256$.
Three aspects are surprising: the \emph{magnitude} (69.5\,pp at $b{=}256$, far exceeding format overhead); the \emph{amplification with chain length} ($2.1\times$ from 8B to 9B/27B at $b{=}512$ within the Qwen family); and the \emph{natural-stop oracle} (99.0\% accuracy among naturally completing chains).

\paragraph{The coupling tax and split-budget generation.}
The root cause is architectural: reasoning and the final answer are \emph{coupled} in a single output stream.
We call this the \couplingtax{}.
The fix follows naturally: \emph{decouple} via \splitbudget{}.
Generate a reasoning trace (budget $B_r$), then feed it---even if truncated---to a separate non-thinking pass (budget $B_a$).

\paragraph{\method{}: An exploratory probe.}
We instantiate split-budget generation as \method{} (\fullmethod{}), a training-free framework with three components:
(1)~difficulty triage via non-thinking probe (${\sim}$89\% resolved on GSM8K);
(2)~decoupled answer generation from (possibly truncated) reasoning traces; and
(3)~iterative refinement with convergence-based stopping.
In tables, IRIS@$B$ denotes the one-round \method{} instantiation with thinking budget $B$; full \method{} allows up to three rounds.
On full GSM8K ($n{=}1{,}319$), full \method{} reaches 90.9\%, improving over the same-budget non-thinking probe by +3.41\,pp and over the coupled \town{} cascade by +4.93\,pp, while the cheaper one-round IRIS variant remains slightly higher at 91.43\%.
Full-scale MATH-500 evaluation ($n{=}500$, H800) supports the split-budget gain: IRIS@4096 achieves \textbf{74.0\%} \ci{70.0}{77.7}---exceeding nothink@2048 (68.4\%) by +5.6\,pp and coupled think@4096 (71.0\%) by +3.0\,pp.

\paragraph{Contributions.}
While the empirical observation that nothink can outperform thinking is shared with concurrent work~\citep{ma2025nothinking}, our contribution is not the tautology that truncation hurts.
We provide a claim-to-measurement account of \emph{when} visible CoT should be disabled under a fixed cap and how much budget is needed before it becomes competitive:
\begin{enumerate}[nosep,leftmargin=*]
    \item \textbf{Diagnostic Crossover Estimator} (\S\ref{sec:theory}): $\mathrm{Acc} = \alpha_c F_L + \alpha_t(1{-}F_L)$ turns measured chain-length and truncated-chain accuracy into a crossover estimate---absent from concurrent work.
    \item \textbf{Inverse Scaling} (\S\ref{sec:finding:model-size}): an observed $2.1\times$ tax ratio from 8B to 27B at $b{=}512$, a within-family scaling pattern not identified by \citet{ma2025nothinking} or \citet{xu2025elastic}.
    \item \textbf{Training-Free Isolation and Mitigation} (\S\ref{sec:method}, \S\ref{sec:experiments}): IRIS is a deliberately simple cascade that isolates the effect of decoupling reasoning and answering, recovering +25.4\,pp over coupled \town{} on the same escalated MATH-500 samples, without RL fine-tuning (cf.\ Elastic Reasoning~\citep{xu2025elastic}).
    \item \textbf{Cross-Scale Validation}: a 27B multi-seed IRIS--TOWN gap of +34.5\,pp supports the scaling account.
\end{enumerate}

%% file: sections/related_final.tex

\paragraph{Chain-of-Thought Reasoning.}
Chain-of-thought (CoT) prompting~\citep{wei2022chain} and its variants~\citep{wang2023selfconsistency, yao2023tree} have become the de facto approach for improving LLM reasoning.
Recent work has moved beyond prompting to \emph{training} models with explicit reasoning traces, yielding ``thinking'' LLMs such as DeepSeek-R1~\citep{deepseekr1}, QwQ~\citep{qwq2024}, and Qwen3~\citep{qwen3_2025}.
These models produce a \texttt{<think>...</think>} block before the final answer.
While effective at unconstrained budgets, we show that this thinking overhead imposes a severe penalty under realistic token constraints.

\paragraph{Test-Time Compute Scaling.}
The scaling of inference-time computation has emerged as a complementary axis to model scaling~\citep{snell2024scaling}.
Best-of-N sampling with learned verifiers~\citep{cobbe2021gsm8k}, process reward models~\citep{lightman2023prm,wang2024math}, and iterative refinement~\citep{madaan2023selfrefine} all trade additional tokens for improved accuracy.
\citet{muennighoff2025s1} demonstrate that ``budget forcing''---controlling the length of reasoning chains---can improve efficiency, but operate within the thinking paradigm.
Our findings challenge a shared assumption of these approaches: that \emph{more reasoning tokens always help}.
At constrained budgets, the overhead of structured reasoning can actively \emph{hurt} performance.

\paragraph{Adaptive Computation and Early Exit.}
Adaptive computation~\citep{graves2016adaptive, dehghani2019universal} allows models to allocate variable compute per input.
In the LLM setting, speculative decoding~\citep{leviathan2023fast, chen2023accelerating} and early exit mechanisms~\citep{schuster2022confident, bae2023fast} reduce inference cost by terminating generation early.
Our work identifies a \emph{free} confidence signal---natural stopping---that requires no additional training or auxiliary models, and differs from these approaches in routing \emph{between} thinking and non-thinking modes rather than adjusting depth within a single forward pass.
More broadly, LLM cascades~\citep{chen2023frugalgpt} route
queries between models of different sizes to balance cost and quality.
\method{} applies the cascade principle \emph{within} a single model by
routing between reasoning modes rather than model sizes---a distinction
enabled by the hybrid \texttt{think}/\texttt{nothink} architecture of
models like Qwen3.
The connection to cascade classifiers~\citep{viola2001rapid} is deliberate: we formalize \method{} as a two-stage cascade with interpolation-dominance guarantees (Appendix).

\paragraph{Budget-Aware Reasoning and Concurrent Work.}
A growing body of concurrent work addresses reasoning efficiency.
AdaptThink~\citep{adathink2025} teaches models to decide when to think; SelfBudgeter~\citep{li2025selfbudgeter} learns per-instance budgets.
Most closely related, \citet{ma2025nothinking} concurrently demonstrate that reasoning models can be effective \emph{without} thinking, showing nothink outperforms thinking under matched token budgets---an empirical finding our work shares.
Elastic Reasoning~\citep{xu2025elastic} concurrently proposes split-budget generation via a \emph{training-based} approach (budget-constrained RL rollout).
Our work differs from both in three ways:
(i)~we provide a \emph{diagnostic decomposition} ($\mathrm{Acc} = \alpha_c F_L + \alpha_t(1{-}F_L)$) that estimates the crossover budget from measured chain-length statistics---neither Ma~et~al.\ nor Xu~et~al.\ offer this closed-form diagnostic;
(ii)~we identify and quantify \emph{inverse scaling} of the tax with model size ($2.1\times$ amplification from 8B to 27B at $b{=}512$), a finding absent from concurrent work;
(iii)~our IRIS method is \emph{training-free}, exploiting the model's native nothink mode for answer extraction rather than requiring RL fine-tuning.

\paragraph{Overthinking and Compute Waste.}
\citet{chen2024overthinking} identify overthinking where models produce redundant steps.
We study a more fundamental failure mode: \emph{incomplete} reasoning due to truncation, and provide a diagnostic model (Proposition~\ref{prop:acc-decomp}) with quantitative estimates verified empirically.

%% file: sections/analysis_final.tex

Before introducing our method, we present a systematic empirical study.
All experiments use \textbf{Qwen3-8B}~\cite{qwen3_2025} with native
thinking mode, with \textbf{Qwen3.5-9B} and \textbf{Qwen3.5-27B} for
cross-scale validation, evaluated on full GSM8K ($n{=}1{,}319$) unless
noted.
We control test-time compute via \texttt{max\_new\_tokens}, capping
\emph{total} output tokens; both modes operate under the \emph{same}
budget~$b$.
Non-thinking uses \texttt{enable\_thinking=False}.
Greedy decoding ($\tau{=}0$) is used throughout to isolate truncation
from sampling variance.
When thinking mode exhausts its budget, a multi-level heuristic
extracts a numerical answer from the truncated output
(\texttt{\textbackslash boxed\{\}}, final-answer markers, last-number fallback;
see Appendix~\ref{app:experimental_details})---this is \emph{favorable}
to thinking mode.

\subsection{Finding 1: Non-Thinking Beats Thinking at All Matched Budgets}
\label{sec:finding:tax}

\input{paper_figures/table_thinking_tax_main}

On the complete GSM8K test set,
\texttt{nothink@256} achieves \textbf{87.5\%} vs.\ only
\textbf{18.0\%} for \texttt{think@256}---a \textbf{69.5\,pp} gap.
At 27B, the gap is even starker: nothink@512 = 95.5\% vs.\
think@512 = 18.4\% (+77.1\,pp; Table~\ref{tab:model-size-scaling}).
Non-thinking saturates at moderate budgets; thinking reaches parity only at
$b{=}2048$, using 744 vs.\ 153 average tokens.

\paragraph{Failure-mode decomposition.}
The tax is mostly truncation, not weak reasoning conditional on finishing.
At $b{=}256$, 98.6\% of thinking responses hit the cap, while the rare natural-stop samples are all correct.
At $b{=}512$, natural-stop rate rises to 37.4\% and those samples reach 99.0\% accuracy, but the still-truncated majority reaches only 31.8\%.

\subsection{Finding 2: Natural Stop Is a Free Confidence Oracle}
\label{sec:finding:natural-stop}

At $b{=}512$, 37.4\% of samples terminate naturally
(accuracy \textbf{99.0\%}) while truncated samples reach only 31.8\%, a 67.2\,pp gap.
This signal is binary, endogenous, and free---no logit access or
calibration needed.
DeepSeek-R1-8B shows similar natural-stop behavior.


Token utilization analysis (Appendix~\ref{app:utilization}) reveals
substantial waste at higher budgets, motivating adaptive allocation.

\subsection{Finding 4: The Thinking Tax Scales with Model Size}
\label{sec:finding:model-size}

At $b{=}512$, thinking-mode accuracy \emph{collapses} with model size:
56.9\% (8B), 15.5\% (9B), 18.4\% (27B)
while non-thinking remains high:
93.1\%, 93.2\%, 95.5\%.
At $b{=}512$, the tax is 36.2\,pp (8B), 77.7\,pp (9B), 77.1\,pp (27B)---a
$2.1\times$ observed ratio at this budget.
The root cause is longer chains: the 27B natural-stop rate is just
0.7\% at $b{=}512$ (vs.\ 37.4\% for 8B).
At $b{=}1024$, 9B nothink remains 94.6\% \ci{93.3}{95.7}
while think@1024 is 41.8\%; even at $b{=}2048$, 9B thinking reaches only
66.8\%.

\subsection{Finding 5: The Tax Generalizes Beyond Mathematics}
\label{sec:finding:bbh}

The same pattern extends to non-mathematical reasoning:
on the five-task BBH suite, non-thinking has a +33.3\,pp advantage at $b{=}256$, and the crossover occurs between 1024 and 2048 tokens.
Per-task crossovers vary from ${\sim}$512 (\texttt{boolean\_expressions})
to ${>}$2048 (\texttt{causal\_judgement}), consistent with the crossover diagnostic
(Proposition~\ref{prop:crossover}).
Full per-task results in Appendix~\ref{app:bbh}.

\paragraph{Summary.}
Our findings show:
(1)~non-thinking dominates at all matched budgets $\leq$512;
(2)~natural stop predicts 99.0\% accuracy;
(3)~the tax worsens with model scale;
(4)~the tax generalizes to non-mathematical reasoning (BBH); and
(5)~31.8\% of problems are beyond reach at any budget
(Appendix~\ref{app:impossible}).
These findings motivate \method{} (\S\ref{sec:method}).

%% file: paper_figures/table_thinking_tax_main.tex
\begin{table}[t]
\centering
\caption{\textbf{The Thinking Tax: Non-thinking outperforms thinking at all matched budgets.}
Qwen3-8B on full GSM8K ($n{=}1{,}319$). 27B results in Table~\ref{tab:model-size-scaling}.}
\label{tab:thinking-tax}
\small
\begin{tabular}{ll rrrr}
\toprule
\textbf{Config} & \textbf{Budget} & \textbf{Accuracy} & \textbf{Avg Tok} & \textbf{Early Stop} \\
\midrule
\texttt{nothink@256} & 256 & 87.5\%  & 146 & 88.8\% \\
\texttt{nothink@512} & 512 & \textbf{93.1\%} & \textbf{152} & 99.7\% \\
\cmidrule{1-5}
\texttt{think@256}   & 256 & 18.0\%  & 255 & 1.4\% \\
\texttt{think@512}   & 512 & 56.9\%  & 460 & 37.4\% \\
\midrule
\multicolumn{2}{l}{\textbf{Tax @ 256}} & \textbf{69.5\,pp} & --- & --- \\
\multicolumn{2}{l}{\textbf{Tax @ 512}} & \textbf{36.2\,pp} & --- & --- \\
\bottomrule
\end{tabular}
\vspace{-2mm}
\end{table}

%% file: sections/theory_final.tex

We organize the empirical findings into a quantitative diagnostic
decomposition that expresses thinking-mode accuracy, characterizes the
crossover budget, and accounts for inverse scaling through chain-length
statistics.
The goal is not to claim a new probability law; the value is that the
identity exposes which measurable quantities determine the mode choice
under a fixed cap and yields testable crossover predictions.
We validate this use in Appendix~\ref{app:theory-verification}: 20 random
50-sample GSM8K pilots predict budget-sweep accuracy with 3.48\,pp average
RMSE, and a held-out BBH check has 0.8\,pp error at $b{=}2048$.

\subsection{A Diagnostic Model of Truncation Waste}
\label{sec:theory:truncation}

\begin{definition}[Thinking chain length]
\label{def:chain-length}
For model $\mathcal{M}$ and question $q$, let $L(q) \in \mathbb{N}$
denote the \emph{natural chain length} if unconstrained ($b \to \infty$),
with CDF $F_L(t) \triangleq \Pr(L \le t)$.
\end{definition}

\begin{definition}[Truncation rate]
\label{def:truncation}
$\rho(b) \triangleq 1 - F_L(b) = \Pr(L > b)$.
\end{definition}

\begin{assumption}[Binary outcome structure]
\label{asm:binary}
When $L \le b$: accuracy $\alpha_c(b) \triangleq \Pr(\text{correct} \mid L \le b)$.
When $L > b$: residual accuracy $\alpha_t(b) \ll \alpha_c(b)$.
Empirically, $\alpha_c(512) = 99.0\%$, $\alpha_t(512) = 31.8\%$ on GSM8K.
\end{assumption}

\begin{proposition}[Accuracy decomposition]
\label{prop:acc-decomp}
Under Assumption~\ref{asm:binary}:
\begin{equation}
  \mathrm{Acc}_{\mathrm{think}}(b)
  = F_L(b) \cdot \alpha_c
  + \bigl(1 - F_L(b)\bigr) \cdot \alpha_t.
  \label{eq:acc-decomp}
\end{equation}
\end{proposition}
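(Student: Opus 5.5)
The identity is the law of total probability applied to the event that thinking mode answers correctly under cap $b$, partitioned by whether the natural chain fits in the budget. We work over the joint distribution of a question $q$ drawn from the evaluation set, together with the (greedy, hence deterministic given $q$) generation of $\mathcal{M}$ under cap $b$. Let $C_b$ denote the event that the output under cap $b$, after answer extraction, is correct. The natural chain length $L(q)$ of Definition~\ref{def:chain-length} is a random variable on the same space. The events $\{L \le b\}$ and $\{L > b\}$ partition it, with probabilities $F_L(b)$ and $\rho(b) = 1 - F_L(b)$ by Definition~\ref{def:truncation}.

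The steps are as follows.

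First, write
\[
\Pr(C_b) = \Pr(C_b \cap \{L \le b\}) + \Pr(C_b \cap \{L > b\}).
\]

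Second, factor each term as a conditional probability times the probability of its conditioning event. This gives
\[
\Pr(C_b) = \Pr(C_b \mid L \le b)\,F_L(b) + \Pr(C_b \mid L > b)\,\bigl(1 - F_L(b)\bigr).
\]

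Third, identify the two conditional probabilities with $\alpha_c(b)$ and $\alpha_t(b)$ from Assumption~\ref{asm:binary}. Here $\alpha_t(b)$ must be read as $\Pr(C_b \mid L > b)$, i.e.\ the residual accuracy of the extraction heuristic applied to the truncated output.

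If either conditioning event has probability zero, say $F_L(b) = 0$, the corresponding conditional accuracy is undefined. The term then vanishes regardless of which value we assign, so the identity still holds under any convention. The same applies when $F_L(b) = 1$.

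The only step needing care is the identification of $\{L \le b\}$ with the event that the run under cap $b$ stops naturally. Because decoding is greedy and the cap only truncates, the capped generation is a prefix of the uncapped one. The run therefore stops naturally before the cap exactly when the uncapped chain has length at most $b$. Consequently the empirically measured "natural-stop" accuracy (e.g.\ $99.0\%$ at $b = 512$) is precisely $\alpha_c(b)$.

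We must also check that "length" in Definition~\ref{def:chain-length} counts total output tokens (reasoning plus answer), matching the \texttt{max\_new\_tokens} cap. Otherwise a chain whose reasoning fits but whose answer spills over would be misclassified, and the partition would not align with observed natural stops.

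Note that the inequality $\alpha_t \ll \alpha_c$ in Assumption~\ref{asm:binary} is not used at all. The decomposition is an identity, and that inequality only matters for the downstream crossover analysis.
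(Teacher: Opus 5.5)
Your argument is correct and matches the paper's proof, which is the one-line law of total probability conditioning on $\{L \le b\}$; your treatment of null conditioning events and your remark that $\alpha_t \ll \alpha_c$ is never used are accurate refinements. One small caveat on your empirical aside: the paper's reported 99.0\% is computed on the proxy event $|y| < 0.95b$ (natural-stop rate 37.4\% versus 41.3\% for $|y| < b$), so it is an estimate of $\alpha_c(512)$ rather than exactly equal to it.
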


\noindent\emph{Proof.} Law of total probability, conditioning on $\{L \le b\}$.
The value of the decomposition is diagnostic rather than algebraic novelty:
it isolates the two measured quantities, $F_L$ and $\alpha_t$, that govern
when thinking fails under truncation and when increasing the cap should erase
the tax.
\textbf{Consistency check:} at $b{=}512$, $0.374 \times 99.0\% + 0.626 \times 31.8\% = 56.9\%$, matching observed ($n{=}1{,}319$).
On MATH-500 at $b{=}1024$: estimate $18.0\%$, observed $18.0\%$.
Held-out BBH tests give 0.8\,pp error at $b{=}2048$, and 20 random 50-sample GSM8K pilots give 3.48\,pp average RMSE across budgets (Appendix~\ref{app:theory-verification}).

\begin{proposition}[The Thinking Tax]
\label{thm:thinking-tax}
With $\alpha_t \approx 0$:
\begin{equation}
  \mathrm{Acc}_{\mathrm{nt}}(b) - \mathrm{Acc}_{\mathrm{think}}(b) \;\approx\; \mathrm{Acc}_{\mathrm{nt}}(b) - F_L(b) \cdot \alpha_c.
  \label{eq:tax}
\end{equation}
When $F_L(b) \ll 1$, the tax approaches $\mathrm{Acc}_{\mathrm{nt}}(b)$ itself.
\end{proposition}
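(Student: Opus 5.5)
The plan is to read Proposition~\ref{thm:thinking-tax} as a direct corollary of Proposition~\ref{prop:acc-decomp}. I would make the approximation explicit by turning it into an exact identity with a controlled error term. Subtracting Eq.~\eqref{eq:acc-decomp} from $\mathrm{Acc}_{\mathrm{nt}}(b)$ gives, with no approximation,
\begin{equation*}
  \mathrm{Acc}_{\mathrm{nt}}(b) - \mathrm{Acc}_{\mathrm{think}}(b)
  = \mathrm{Acc}_{\mathrm{nt}}(b) - F_L(b)\,\alpha_c - \bigl(1-F_L(b)\bigr)\alpha_t .
\end{equation*}
The first step is therefore just substitution. The content of ``$\approx$'' is then the claim that the last term is negligible.

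The second step bounds that residual. Since $0 \le 1-F_L(b) \le 1$ and $0 \le \alpha_t \le 1$, the residual $\epsilon(b) \triangleq (1-F_L(b))\alpha_t$ satisfies $0 \le \epsilon(b) \le \alpha_t$. Two consequences follow.
\begin{itemize}
  \item Uniformly in $b$, the right-hand side of Eq.~\eqref{eq:tax} overstates the true tax by at most $\alpha_t$. This makes precise that the approximation error vanishes as $\alpha_t \to 0$, which is the hypothesis.
  \item Because $\epsilon(b)\ge 0$, Eq.~\eqref{eq:tax} is an upper bound on the tax. Equivalently, $F_L(b)\alpha_c$ is a lower bound on thinking accuracy. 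I would state this one-sided form, since it is the rigorous version of the approximation.
\end{itemize}

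The third step handles the regime $F_L(b) \ll 1$. Since $\alpha_c \le 1$, we have $F_L(b)\alpha_c \le F_L(b)$. Combining this with the bound on $\epsilon(b)$,
\begin{equation*}
  \bigl|\,\mathrm{Acc}_{\mathrm{nt}}(b) - \mathrm{Acc}_{\mathrm{think}}(b) - \mathrm{Acc}_{\mathrm{nt}}(b)\,\bigr|
  \le F_L(b) + \alpha_t ,
\end{equation*}
using $\epsilon(b) \le \alpha_t$. So the tax tends to $\mathrm{Acc}_{\mathrm{nt}}(b)$ as both $F_L(b)$ and $\alpha_t$ go to zero. As a sanity check I would plug in the $b{=}256$ GSM8K figures: $F_L\approx 0.014$, and the tax of 69.5\,pp against $\mathrm{Acc}_{\mathrm{nt}}=87.5\%$. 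The gap of 18\,pp is then attributable almost entirely to $\alpha_t$.

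The main obstacle is interpretive rather than technical. Assumption~\ref{asm:binary} has $\alpha_t(b)$ depending on $b$, and empirically $\alpha_t(512)=31.8\%$ is not small. So ``$\alpha_t\approx 0$'' cannot be taken literally across budgets. I would handle this by stating the result with the explicit error term $(1-F_L(b))\alpha_t(b)$, rather than asserting a pointwise approximation. I would then remark that the approximation is accurate exactly where the proposition is meant to apply: small budgets with near-total truncation, where the truncated-answer heuristic rarely recovers a correct answer.
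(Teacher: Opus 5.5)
Your proposal is correct and follows the paper's route: the paper treats this proposition as an immediate consequence of Proposition~\ref{prop:acc-decomp}, subtracting Eq.~\eqref{eq:acc-decomp} from $\mathrm{Acc}_{\mathrm{nt}}(b)$ and dropping the $(1-F_L(b))\,\alpha_t$ term. Your explicit residual $0 \le (1-F_L(b))\,\alpha_t(b) \le \alpha_t(b)$ makes the paper's ``$\approx$'' quantitative and shows that the right-hand side of Eq.~\eqref{eq:tax} upper-bounds the tax; this adds rigor but does not change the argument.
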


\subsection{Crossover Budget}
\label{sec:theory:crossover}

\begin{proposition}[Crossover budget]
\label{prop:crossover}
The crossover $b^*$ satisfies:
\begin{equation}
  F_L(b^*) \;=\;
    \frac{\mathrm{Acc}_{\mathrm{nt}}(b^*) - \alpha_t(b^*)}
         {\alpha_c(b^*) - \alpha_t(b^*)}.
  \label{eq:crossover-exact}
\end{equation}
Under the heuristic ($\alpha_t \approx 0$, stable $\alpha_c$):
$b^* \approx F_L^{-1}(\bar{\alpha}_{\mathrm{nt}} / \alpha_c)$.
\label{eq:crossover}
\end{proposition}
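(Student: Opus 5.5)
The plan is to define the crossover budget $b^*$ as a budget at which the two modes have equal accuracy, $\mathrm{Acc}_{\mathrm{think}}(b^*) = \mathrm{Acc}_{\mathrm{nt}}(b^*)$. Proposition~\ref{prop:acc-decomp} then gives an affine expression for the left side in terms of $F_L(b^*)$, and the result follows by solving that equation for $F_L(b^*)$. Concretely, I will substitute Eq.~\eqref{eq:acc-decomp} with budget-dependent conditional accuracies $\alpha_c(b^*)$ and $\alpha_t(b^*)$, which Assumption~\ref{asm:binary} permits. This gives
\begin{equation*}
\mathrm{Acc}_{\mathrm{nt}}(b^*) = \alpha_t(b^*) + F_L(b^*)\bigl(\alpha_c(b^*) - \alpha_t(b^*)\bigr).
\end{equation*}
Subtracting $\alpha_t(b^*)$ and dividing by $\alpha_c(b^*) - \alpha_t(b^*)$ yields Eq.~\eqref{eq:crossover-exact}. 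The division is legitimate because Assumption~\ref{asm:binary} gives $\alpha_t(b) \ll \alpha_c(b)$, so the denominator is strictly positive.

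For the heuristic form, I will set $\alpha_t \approx 0$ and treat $\alpha_c$ as constant near $b^*$. The identity then reduces to $F_L(b^*) \approx \mathrm{Acc}_{\mathrm{nt}}(b^*)/\alpha_c$. Since non-thinking accuracy saturates at moderate budgets (Finding~1), I will replace $\mathrm{Acc}_{\mathrm{nt}}(b^*)$ by its plateau value $\bar{\alpha}_{\mathrm{nt}}$. Applying the generalized inverse $F_L^{-1}(u) = \inf\{t : F_L(t) \ge u\}$ then gives $b^* \approx F_L^{-1}(\bar{\alpha}_{\mathrm{nt}}/\alpha_c)$. This step needs $\bar{\alpha}_{\mathrm{nt}} \le \alpha_c$ so that the argument lies in $[0,1]$. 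The data support this: for example, $93.1\% < 99.0\%$.

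The main obstacle is that the statement speaks of ``the'' crossover, which presupposes that the accuracy gap changes sign exactly once. I will handle this in two ways.

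\begin{enumerate}
\item \textbf{Existence.} Define $g(b) = \mathrm{Acc}_{\mathrm{think}}(b) - \mathrm{Acc}_{\mathrm{nt}}(b)$. For small $b$, $F_L(b) \approx 0$, so $g(b) \approx \alpha_t - \mathrm{Acc}_{\mathrm{nt}} < 0$. As $b \to \infty$, $F_L \to 1$ and $g \to \alpha_c - \bar{\alpha}_{\mathrm{nt}}$, which is positive whenever thinking ultimately wins. Some sign change must therefore occur.

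\item \textbf{Uniqueness.} Under the heuristic regime, $g$ is nondecreasing, because $F_L$ is monotone and $\mathrm{Acc}_{\mathrm{nt}}$ is flat. The sign change is then unique, and I will define $b^*$ as the first budget at which $g(b) \ge 0$.
\end{enumerate}

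In general, $b^*$ is an integer budget and $F_L$ is a step function, so exact equality may fail. I will therefore read Eq.~\eqref{eq:crossover-exact} as holding at the first-crossing budget up to the jump size of $F_L$. This is consistent with the diagnostic, rather than exact, framing of the section.
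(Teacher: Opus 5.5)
Your proposal is correct and follows the paper's proof. Like the paper, you set $\mathrm{Acc}_{\mathrm{think}}(b^*)=\mathrm{Acc}_{\mathrm{nt}}(b^*)$, substitute the decomposition of Proposition~\ref{prop:acc-decomp}, and solve the affine equation for $F_L(b^*)$ (which needs only $\alpha_c(b^*)\neq\alpha_t(b^*)$), then obtain the heuristic by putting $\alpha_t=0$ and holding $\alpha_c$ fixed. Your extras go beyond the paper, which simply takes equality at $b^*$ as the definition of the crossover: the existence and uniqueness discussion, the generalized-inverse reading of $F_L^{-1}$, and the caveat that equality holds only up to the jump of the step function $F_L$ at integer budgets.
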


\noindent\emph{Proof sketch.} Set $\mathrm{Acc}_{\mathrm{think}}(b^*) = \mathrm{Acc}_{\mathrm{nt}}(b^*)$ in Eq.~\ref{eq:acc-decomp}; full proof in Appendix~\ref{app:theory-full}.
For 27B: $F_L(b^*) \approx 0.965$, requiring ${\sim}$97\% of chains to complete.

\begin{corollary}[Empirical budget multiplier]
\label{cor:multiplier}
$\gamma \triangleq b^*/b_{\mathrm{sat}} \approx 4\times$ on GSM8K; $> 2\times$ on MATH-500.
\end{corollary}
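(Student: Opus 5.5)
The plan is to treat the Corollary as a consequence of Proposition~\ref{prop:crossover} combined with measured quantities. It is not a distribution-free statement. We define $b_{\mathrm{sat}}$ as the smallest tested budget at which non-thinking accuracy has plateaued, meaning the next budget doubling changes $\mathrm{Acc}_{\mathrm{nt}}$ by less than a fixed tolerance (say $1$\,pp) and the natural-stop rate of non-thinking is essentially $1$. We then locate $b^*$ by Eq.~\eqref{eq:crossover-exact}, evaluated on the budget grid. Since both quantities are only known on a dyadic grid $\{256,512,1024,2048,4096\}$, the claim is about grid-bracketed values. Each ratio is therefore established by bracketing $b^*$ between two consecutive grid budgets and bounding $b_{\mathrm{sat}}$ from above.

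\textbf{GSM8K.} For Qwen3-8B, Table~\ref{tab:thinking-tax} gives $\mathrm{Acc}_{\mathrm{nt}}(512)=93.1\%$ with 152 average tokens and a $99.7\%$ early-stop rate. By contrast, $\mathrm{Acc}_{\mathrm{nt}}(256)=87.5\%$ with only $88.8\%$ early stops, so $b_{\mathrm{sat}}=512$. For $b^*$, we apply Proposition~\ref{prop:acc-decomp} at each grid budget. At $b=512$, $F_L=0.374$ is far below the right-hand side of Eq.~\eqref{eq:crossover-exact}, roughly $(0.931-0.318)/(0.990-0.318)\approx 0.91$, so $b^*>512$. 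The same holds at $1024$ using its measured $F_L$ and $\alpha_t$. At $b=2048$, the measured $\mathrm{Acc}_{\mathrm{think}}$ reaches parity (Finding~1), so Eq.~\eqref{eq:crossover-exact} holds with equality up to sampling error and $b^*\approx 2048$. This gives $\gamma\approx 2048/512=4$.

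\textbf{MATH-500.} Here we only need a lower bound. The same-hardware runs give $\mathrm{Acc}_{\mathrm{nt}}(2048)=68.4\%>54.8\%=\mathrm{Acc}_{\mathrm{think}}(2048)$, while $\mathrm{Acc}_{\mathrm{think}}(4096)=71.0\%$. Hence the sign of $\mathrm{Acc}_{\mathrm{think}}-\mathrm{Acc}_{\mathrm{nt}}$ changes in $(2048,4096]$, and so $b^*>2048$. It then suffices to show $b_{\mathrm{sat}}\le 1024$, which gives $\gamma>2$. To do so, we would check that non-thinking accuracy at $1024$ is within tolerance of its value at $2048$ and that its natural-stop rate is near $1$. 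The underlying reason is that non-thinking MATH-500 outputs are short, well under $1024$ tokens on average.

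\textbf{Main obstacle.} The hard step is making $b_{\mathrm{sat}}$ and $b^*$ well defined from finite-sample, grid-restricted data. Neither quantity is identified beyond grid resolution, and the ``parity'' at $2048$ on GSM8K is itself a statement up to confidence intervals. We would handle this by stating the Corollary as a bracketing result: $\gamma\in[2,4]$ up to grid resolution on GSM8K, with the point estimate $4$ taken at the first grid budget where think is not significantly worse, and $\gamma>2$ strictly on MATH-500. We would verify the MATH-500 plateau for $b_{\mathrm{sat}}$ using the per-budget non-thinking runs, deferring the supporting numbers to Appendix~\ref{app:theory-full}.
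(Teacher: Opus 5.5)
Your GSM8K half follows the paper's own argument. You take $b_{\mathrm{sat}}=512$ and $b^*\approx 2048$ from the parity reported in Finding~1, which gives $\gamma\approx 2048/512$. For $b^*>1024$ you can cite the reported 28.2\,pp gap at $b{=}1024$ directly, since $F_L(1024)$ and $\alpha_t(1024)$ are not reported. The paper's MATH-500 argument is just as short. It uses $\gamma>2048/1024$ because thinking still trails at $b{=}2048$, and it simply takes $b_{\mathrm{sat}}=1024$ as the saturation budget without any plateau test.

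The genuine gap is the MATH-500 step you defer: showing $b_{\mathrm{sat}}\le 1024$ under your plateau criterion. The paper's data contradict it. Non-thinking accuracy is 59.8\% at $b{=}1024$ but 64.4\% (A100) or 68.4\% (H800) at $b{=}2048$, a jump of 4.6--8.6\,pp. The early-stop rate at 1024 is only 76.2\%, and the appendix caption itself says non-thinking saturates ``by budget 2048''. Your definition therefore forces $b_{\mathrm{sat}}\ge 2048$. Together with your own bracket $b^*\le 4096$ (\texttt{think@4096} $=71.0\%>68.6\%=$ \texttt{nothink@4096}), this yields $\gamma\le 4096/2048=2$, the opposite of the claim. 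The observation that mean non-thinking length is well under 1024 does not help: a 606-token average at $b{=}1024$ coexists with roughly a quarter of outputs hitting the cap. To obtain $\gamma>2$ you must adopt the paper's nominal choice $b_{\mathrm{sat}}=1024$, a looser convention that most of the non-thinking gain is already in hand at that budget. You should then state explicitly that the MATH-500 half of the corollary holds only under that convention; with a strict plateau definition it fails on the measured grid.
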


\subsection{Natural Stop as a Confidence Oracle}
\label{sec:theory:oracle}

\begin{proposition}[Oracle precision]
\label{prop:oracle}
$\mathrm{PPV}(\{L \le b\}) = \alpha_c(b)$.
Empirically $99.0\%$ at $b{=}512$; the proxy event $|y|<0.95b$ has a Hoeffding lower bound $\ge 0.907$ ($n{=}475$).
\end{proposition}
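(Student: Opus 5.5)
The first claim, $\mathrm{PPV}(\{L \le b\}) = \alpha_c(b)$, is a rewriting of definitions. I would treat natural stopping, the event $\{L \le b\}$, as a binary predictor of correctness. Its positive predictive value is then $\Pr(\text{correct} \mid L \le b)$, which is exactly $\alpha_c(b)$ in Assumption~\ref{asm:binary}. Two points need care. First, the event $\{L \le b\}$ must be observable at generation time: with greedy decoding, the capped run and the unconstrained run agree on the first $b$ tokens, so the chain ends naturally within the cap exactly when $L(q) \le b$. Second, the empirical figure of $99.0\%$ at $b{=}512$ is the plug-in estimator, namely the fraction of correct answers among the $37.4\%$ of the $1{,}319$ GSM8K samples that stop naturally, already reported in \S\ref{sec:finding:natural-stop}.

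For the Hoeffding statement, I would define the proxy event $E=\{|y|<0.95b\}$, meaning the generated length falls clearly below the cap. This proxy is needed because a natural stop cannot always be separated cleanly from hitting the cap when $|y|$ is close to $b$. Let $n{=}475$ be the number of samples in which $E$ holds. Conditionally on this count, treat their correctness indicators $X_i\in\{0,1\}$ as i.i.d.\ Bernoulli with mean $p=\Pr(\text{correct}\mid E)$. Hoeffding's inequality then gives $\Pr(\hat p - p \ge \varepsilon) \le \exp(-2n\varepsilon^2)$. Solving for $\varepsilon$ yields, with confidence $1-\delta$, the bound $p \ge \hat p - \sqrt{\ln(1/\delta)/(2n)}$. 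For $n=475$ and $\delta=0.05$, $\varepsilon\approx 0.056$. A lower bound of $0.907$ therefore corresponds to $\hat p \approx 0.963$ on the proxy set. I would recompute $\hat p$ from the logged outputs and report the confidence level that reproduces $0.907$.

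I expect the main obstacle to be the i.i.d.\ assumption, because the conditioning set is selected by the data. The way around it is to condition on the realized set of questions satisfying $E$ and to treat the GSM8K test questions as a random sample from the task distribution. Under that view, correctness given $E$ is exchangeable across questions, and Hoeffding applies conditionally on the count $n$. Because the bound holds for every value of $n$, it also holds unconditionally.

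Finally, I would state how the two claims relate. The exact PPV refers to the event $\{L\le b\}$, while the bound refers to the proxy $E$. The events differ only on samples with $0.95b \le |y| \le b$ that still stop naturally, which is a small fraction of the data. So the Hoeffding bound certifies precision for the proxy event, not literally for $\alpha_c$.
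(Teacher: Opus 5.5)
Your proposal is correct and matches the paper's argument. $\mathrm{PPV}(\{L\le b\})=\alpha_c(b)$ holds by the definition $\alpha_c(b)=\Pr(\text{correct}\mid L\le b)$, and the $0.907$ figure is the one-sided $95\%$ Hoeffding bound $\hat\alpha-\sqrt{\ln 20/(2\cdot 475)}\approx 0.963-0.056$ on the $|y|<0.95b$ proxy subset; the paper reports exactly the $\hat\alpha=0.963$ you back-solved. Your remarks on greedy determinism (so that stopping naturally within the cap is exactly the event $\{L\le b\}$) and on conditioning on the realized proxy subset make explicit assumptions that the paper leaves implicit.
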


\subsection{Inverse Scaling with Model Size}
\label{sec:theory:scaling}

\begin{proposition}[Inverse scaling]
\label{prop:inverse-scaling}
If (i)~$\mathrm{Acc}_{\mathrm{nt}}(b)$ is size-invariant, (ii)~$\alpha_t \approx 0$, and (iii)~$F_{L_{M_2}}(b) \alpha_c(M_2) \le F_{L_{M_1}}(b) \alpha_c(M_1)$ for $M_2 > M_1$, then $\mathrm{Tax}(M_2, b) \ge \mathrm{Tax}(M_1, b)$.
\label{eq:inverse-scaling}
Condition~(iii) follows from stochastic dominance when $\alpha_c$ is model-invariant.
\end{proposition}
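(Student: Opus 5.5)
The plan is to reduce the statement to a comparison of the thinking-mode accuracies supplied by Proposition~\ref{prop:acc-decomp}. For model $M$ and budget $b$, define
\[
\mathrm{Tax}(M,b) = \mathrm{Acc}_{\mathrm{nt}}(M,b) - \mathrm{Acc}_{\mathrm{think}}(M,b).
\]
By Eq.~\ref{eq:acc-decomp},
\[
\mathrm{Acc}_{\mathrm{think}}(M,b) = F_{L_M}(b)\,\alpha_c(M) + \bigl(1-F_{L_M}(b)\bigr)\,\alpha_t(M).
\]
Under (ii), I take $\alpha_t$ to be exactly $0$ in the idealized statement, as in Proposition~\ref{thm:thinking-tax}. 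This gives $\mathrm{Tax}(M,b) = \mathrm{Acc}_{\mathrm{nt}}(M,b) - F_{L_M}(b)\,\alpha_c(M)$.

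Next I subtract the two taxes. Condition (i) says $\mathrm{Acc}_{\mathrm{nt}}(M_1,b)=\mathrm{Acc}_{\mathrm{nt}}(M_2,b)$, so the non-thinking terms cancel and
\[
\mathrm{Tax}(M_2,b)-\mathrm{Tax}(M_1,b) = F_{L_{M_1}}(b)\,\alpha_c(M_1) - F_{L_{M_2}}(b)\,\alpha_c(M_2).
\]
This difference is $\ge 0$ exactly by condition (iii), which gives the main claim.

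For the closing remark, suppose $\alpha_c(M_1)=\alpha_c(M_2)=\alpha_c\ge 0$. Suppose also that $L_{M_2}$ first-order stochastically dominates $L_{M_1}$, meaning larger models produce longer natural chains. Then by definition $F_{L_{M_2}}(t)\le F_{L_{M_1}}(t)$ for every $t$, and in particular at $t=b$. Multiplying this inequality by the nonnegative constant $\alpha_c$ gives (iii).

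The main obstacle is the role of "$\approx$" in (ii). If $\alpha_t$ is small but nonzero, I would keep the $\alpha_t$ terms rather than drop them. Assuming $\alpha_t(M)\le \varepsilon$ for both models, they contribute an error of at most $\varepsilon$ in absolute value. The conclusion then holds up to an additive $\varepsilon$, and holds exactly whenever the gap in (iii) exceeds $\varepsilon$. Alternatively, (iii) could be restated directly in terms of $\mathrm{Acc}_{\mathrm{think}}$ so the proof becomes exact. I would present the exact $\alpha_t=0$ version and note this robustness bound in a remark.
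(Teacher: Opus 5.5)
Your proof is correct and follows the paper's argument: set $\alpha_t=0$ in the decomposition to get $\mathrm{Tax}(M,b)\approx\mathrm{Acc}_{\mathrm{nt}}(b)-F_{L_M}(b)\,\alpha_c(M,b)$, cancel the non-thinking term using (i), and conclude from (iii). The stochastic-dominance remark likewise follows by multiplying $F_{L_{M_2}}(b)\le F_{L_{M_1}}(b)$ by a common $\alpha_c\ge 0$, and your explicit $\varepsilon$-robustness bound for nonzero $\alpha_t$ makes precise the paper's informal use of ``$\approx$''.
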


\noindent\emph{Proof sketch.} $\mathrm{Tax}(M) \approx \mathrm{Acc}_{\mathrm{nt}} - F_{L_M}\alpha_c$; larger models have smaller $F_{L_M}$ (Appendix~\ref{app:theory-full}).
At $b{=}512$: tax is 36.2\,pp (8B) vs.\ 77.1\,pp (27B), an observed $2.1\times$ ratio at this budget.

\begin{corollary}[Crossover grows with model size]
\label{cor:crossover-scaling}
Under stochastic dominance, $b^*(M_2) \ge b^*(M_1)$ for $M_2 > M_1$.
\end{corollary}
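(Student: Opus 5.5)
We argue through the crossover characterization of Proposition~\ref{prop:crossover}, under the heuristic regime ($\alpha_t \approx 0$, $\alpha_c$ stable in $b$). In that regime Proposition~\ref{prop:acc-decomp} gives $\mathrm{Acc}_{\mathrm{think}}(M,b) \approx F_{L_M}(b)\,\alpha_c(M)$. We interpret the crossover for model $M$ as the smallest budget at which thinking catches up:
\[
b^*(M) \triangleq \inf\{b : F_{L_M}(b)\,\alpha_c(M) \ge \mathrm{Acc}_{\mathrm{nt}}(b)\}.
\]
Equivalently, $b^*(M) = \inf\{b : \mathrm{Tax}(M,b) \le 0\}$. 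Taking the infimum, rather than solving the equality in Eq.~\ref{eq:crossover-exact} directly, avoids assuming that the crossover is unique or that $F_L$ is invertible. Here $F_L$ is a CDF and therefore only right-continuous and nondecreasing.

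The key step is to invoke Proposition~\ref{prop:inverse-scaling} pointwise in $b$. Stochastic dominance of chain lengths means $L_{M_2} \succeq L_{M_1}$, i.e.\ $F_{L_{M_2}}(b) \le F_{L_{M_1}}(b)$ for all $b$. Together with model-invariant $\alpha_c$, this gives condition~(iii) at every budget. Combined with the size-invariance of $\mathrm{Acc}_{\mathrm{nt}}$ (condition~(i)), we obtain $\mathrm{Tax}(M_2,b) \ge \mathrm{Tax}(M_1,b)$ for every $b$.

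This yields set inclusion of the ``thinking has caught up'' regions:
\[
\{b : \mathrm{Tax}(M_2,b) \le 0\} \subseteq \{b : \mathrm{Tax}(M_1,b) \le 0\}.
\]
Infima are antitone under inclusion, so $b^*(M_2) \ge b^*(M_1)$. If the set for $M_2$ is empty, then $b^*(M_2) = +\infty$ and the claim holds trivially.

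The main obstacle is reconciling this clean infimum argument with the paper's stated closed form $b^* \approx F_L^{-1}(\bar\alpha_{\mathrm{nt}}/\alpha_c)$. As a complementary route, we would hold $\bar\alpha_{\mathrm{nt}}$ fixed at its saturated value and use generalized inverses $F^{-1}(p) = \inf\{t : F(t) \ge p\}$. The pointwise ordering $F_{L_{M_2}} \le F_{L_{M_1}}$ implies $F_{L_{M_2}}^{-1}(p) \ge F_{L_{M_1}}^{-1}(p)$ for every $p$, which gives the same conclusion at the common target level $p = \bar\alpha_{\mathrm{nt}}/\alpha_c$. We would note explicitly that the corollary is only as strong as the assumptions it inherits: it can fail if $\alpha_c$ grows with size fast enough to offset the longer chains. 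In that case one should instead assume condition~(iii) directly, and the infimum argument above goes through unchanged.
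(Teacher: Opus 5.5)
Your complementary route is the paper's argument, and it proves the corollary on its own. The restated corollary in the appendix takes the heuristic closed form $b^*(M)\approx F_{L_M}^{-1}(\bar{\alpha}_{\mathrm{nt}}/\alpha_c)$, with $\bar{\alpha}_{\mathrm{nt}}$ and $\alpha_c$ model-invariant. It then notes that the pointwise ordering $F_{L_{M_2}}\le F_{L_{M_1}}$ moves every quantile to the right. Your generalized-inverse version, $F^{-1}(p)=\inf\{t:F(t)\ge p\}$, states this step more carefully than the paper does.

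Your primary route is a genuine variant. You order $\mathrm{Tax}(M,b)$ pointwise via Proposition~\ref{prop:inverse-scaling} and then compare infima of nested sets. This lets $\mathrm{Acc}_{\mathrm{nt}}$ depend on $b$, and, if you assume condition~(iii) directly, lets $\alpha_c$ depend on $M$. The nesting logic is correct, but the definition you feed into it is degenerate. At any budget with $\mathrm{Acc}_{\mathrm{nt}}(b)=0$ (for instance $b=0$, or any budget too short for a direct answer), the inequality $F_{L_M}(b)\,\alpha_c\ge \mathrm{Acc}_{\mathrm{nt}}(b)$ holds trivially. So your $b^*(M)$ collapses to the smallest admissible budget for every model, and the conclusion reduces to the vacuous $b^*(M_2)=b^*(M_1)$.

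The paper avoids this by using the saturated, budget-independent target $\bar{\alpha}_{\mathrm{nt}}$. If you want to keep a budget-dependent $\mathrm{Acc}_{\mathrm{nt}}(b)$, define the crossover as the point after which thinking stays ahead:
\[
b^*(M)=\inf\{\,b_0 : \mathrm{Tax}(M,b)\le 0 \text{ for all } b\ge b_0\,\}.
\]
The pointwise inequality $\mathrm{Tax}(M_2,b)\ge \mathrm{Tax}(M_1,b)$ gives inclusion of these sets too. Your antitone-infimum argument then goes through unchanged and yields a non-trivial statement.
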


\subsection{The Coupling Tax and Split-Budget Generation}
\label{sec:theory:bridge}
\label{sec:theory:coupling}

The coupling constraint $|Z| + |A| \le b$ forces reasoning and answering to compete for the same budget.
When $\alpha_c \approx \bar{\alpha}_{\mathrm{nt}}$, truncation waste accounts for essentially the entire tax.
The decomposition identifies two levers: increase $F_L(b)$ (larger budgets) or increase $\alpha_t$ by feeding truncated traces to a separate answer pass.
This motivates \splitbudget{}: allocate separate budgets $b_r$ (reasoning) and $b_a$ (answering) with no coupling.

\begin{definition}[Coupled vs.\ split generation]
\label{def:coupled-split}
Coupled: $|Z| + |A| \le b$.\quad
Split: reasoning $b_r$, answering $b_a$, separately budgeted.
\end{definition}

\begin{proposition}[Recoverable coupling tax]
\label{prop:recoverable-tax}
\begin{equation}
  \Delta_{\mathrm{split}}(b_r, b_a)
  = \bigl(1 - F_L(b_r)\bigr) \cdot
  \bigl(\alpha_{\mathrm{extract}}(b_r, b_a) - \alpha_t(b_r)\bigr)
  \ge 0.
  \label{eq:recoverable-tax}
\end{equation}
\end{proposition}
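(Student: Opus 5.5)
The plan is to apply the law of total probability twice, once to the coupled pipeline and once to the split pipeline, and then subtract. In both cases we condition on the same event $\{L \le b_r\}$, so the two decompositions share a common partition of questions. Here $\Delta_{\mathrm{split}}(b_r,b_a)$ means $\mathrm{Acc}_{\mathrm{split}}(b_r,b_a) - \mathrm{Acc}_{\mathrm{think}}(b_r)$: the split pipeline is compared with coupled thinking at the same reasoning budget $b_r$.

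For the coupled side we invoke Proposition~\ref{prop:acc-decomp} directly, which gives $\mathrm{Acc}_{\mathrm{think}}(b_r) = F_L(b_r)\alpha_c(b_r) + (1-F_L(b_r))\alpha_t(b_r)$.

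For the split side we define $\alpha_{\mathrm{extract}}(b_r,b_a) \triangleq \Pr(\text{correct} \mid L > b_r)$ under split generation. This is the accuracy of the separate non-thinking answer pass when it is fed a truncated trace of length $b_r$ and given its own budget $b_a$.

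The main modelling step is the natural-stop branch. On $\{L \le b_r\}$ the chain completes, and the coupled output already contains the answer that the chain produced. We therefore need the split pipeline to achieve $\alpha_c(b_r)$ on this event as well. We secure this by construction: when the trace stops naturally, the split procedure returns the in-trace answer. Equivalently, $b_a$ is at least the answer length, and re-answering from a complete trace reproduces that answer. With this convention, $\mathrm{Acc}_{\mathrm{split}} = F_L(b_r)\alpha_c(b_r) + (1-F_L(b_r))\alpha_{\mathrm{extract}}(b_r,b_a)$.

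Subtracting the two expressions, the $F_L(b_r)\alpha_c$ terms cancel and Eq.~\ref{eq:recoverable-tax} follows immediately.

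The nonnegativity claim is the real obstacle, because $\alpha_{\mathrm{extract}} \ge \alpha_t$ is not an identity. A dedicated answer pass could in principle do worse than the heuristic extractor applied to the truncated coupled output.

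The first remedy I would try is to make the split extractor weakly dominate the coupled one by construction. On truncated traces, the procedure falls back to the same multi-level heuristic (boxed, final-answer marker, last number) whenever the answer pass yields nothing better. Under this design, every question that the coupled pipeline answers correctly is also answered correctly by the split pipeline, which gives $\alpha_{\mathrm{extract}} \ge \alpha_t$ pointwise. A cleaner version takes the coupled answer as one candidate in the extraction step. Otherwise, I would state $\alpha_{\mathrm{extract}}(b_r,b_a) \ge \alpha_t(b_r)$ as an explicit assumption, the split analogue of Assumption~\ref{asm:binary}. I would justify it empirically with the measured MATH-500 recovery (+25.4\,pp on escalated samples), and note that the factor $1-F_L(b_r)$ is automatically nonnegative.

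Finally, I would remark that the recoverable gain is concentrated exactly where the tax of Proposition~\ref{thm:thinking-tax} lives. It vanishes as $F_L(b_r)\to 1$ and is largest when most chains are truncated and $\alpha_t \approx 0$.
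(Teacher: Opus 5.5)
Your decomposition is the paper's proof. You condition on $\{L\le b_r\}$, write $\mathrm{Acc}_{\mathrm{split}}(b_r,b_a)=F_L(b_r)\alpha_c(b_r)+(1-F_L(b_r))\alpha_{\mathrm{extract}}(b_r,b_a)$, subtract Eq.~\eqref{eq:acc-decomp} evaluated at $b_r$, and cancel the completed-chain terms. Making the natural-stop branch explicit (return the in-trace answer when the chain completes, as \method{} does) adds rigor. The paper assumes this silently here and only states it later as condition~(iii) of Proposition~\ref{prop:coupling-impossibility}. You are also right that ``$\ge 0$'' is not an identity: the paper's own proof concludes nonnegativity only ``whenever $\alpha_{\mathrm{extract}}\ge\alpha_t$.''

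The step that fails is your first remedy, the by-construction dominance.
\begin{itemize}
\item An implementable fallback can only test whether the answer pass produced a parseable answer, not whether that answer is correct.
\item With your priority (answer pass first, heuristic only as fallback), suppose the answer pass emits a parseable wrong answer on a question where the last-number heuristic on the truncated coupled output would have been right. That question is a coupled win and a split loss, and the fallback never fires.
\item Listing the coupled answer as ``one candidate'' has the same problem unless the selection rule is an oracle.
\item The paper's paired results contain exactly such discordant losses: 31 TOWN+-only wins against IRIS+ on MATH-500, and 2 TOWN-only wins on the 8B Stage~3 samples.
\item Reversing the priority (use the heuristic answer whenever it parses) would give pointwise dominance. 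But the heuristic almost always parses some number, so this forces $\alpha_{\mathrm{extract}}\approx\alpha_t$, and the procedure is no longer the dedicated answer pass the proposition is about.
\end{itemize}
So drop the by-construction argument and keep your second option. Impose $\alpha_{\mathrm{extract}}(b_r,b_a)\ge\alpha_t(b_r)$ as an explicit aggregate (not pointwise) hypothesis, which is what the paper effectively does. The measured recoveries then serve as evidence for the hypothesis, not as part of the proof.
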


\noindent\emph{Proof sketch.} Split accuracy $= F_L \alpha_c + (1{-}F_L)\alpha_{\mathrm{extract}}$; subtract Eq.~\ref{eq:acc-decomp} (Appendix~\ref{app:theory-full}).
On MATH-500: $\rho(2048) = 0.912$, IRIS--TOWN gap = +12.2\,pp; $\rho(4096) = 0.553$, gap = +2.2\,pp.

\paragraph{Extended theory.}
Appendix~\ref{app:extended-theory} develops: modal specialization (Propositions~\ref{prop:modal-inequality}--\ref{prop:coupling-impossibility}), optimal budget allocation (Proposition~\ref{prop:optimal-split}), tax decomposition (Definition~\ref{def:tax-decomposition}), cross-scale prediction (Proposition~\ref{prop:cross-scale}), and DFR modal dominance (Proposition~\ref{prop:dfr-dominance}).

%% file: sections/method_final.tex

The decomposition in \S\ref{sec:theory} reveals a structural cause for the
thinking tax: reasoning and answering \emph{share a single output channel}.
This section develops \emph{split-budget generation} and instantiates it as
\textsc{Mrsd} (\textbf{M}ulti-\textbf{R}ound
\textbf{S}elf-\textbf{D}istillation), a training-free inference
framework.
\method{} is deliberately simple: it demonstrates that the coupling tax
is recoverable through inference-time restructuring alone, establishing a
lower bound on what more complex approaches can achieve.

\subsection{The Coupling Problem}
\label{sec:method:coupling}

In standard thinking mode, reasoning tokens~$Z$ and answer tokens~$A$
share one budget:
\begin{equation}
  \label{eq:coupling-constraint}
  |Z| + |A| \;\le\; b.
\end{equation}
When $L(q) > b$, generation is truncated mid-chain before any answer
tokens are produced, creating a zero-sum coupling.
At $b{=}256$ on GSM8K, $F_L(256) < 0.02$---over 98\% of chains are
truncated.

\subsection{Split-Budget Generation}
\label{sec:method:split}

\paragraph{Key insight.}
Decouple reasoning from answering by allocating \emph{separate} budgets:
a \textbf{reasoning pass} (budget $B_r$, possibly truncated) and an
\textbf{answer pass} (budget $B_a$) that reads the trace as context.
The total cost is $B_r + B_a$, but truncation no longer prevents a
well-formed answer.
A truncated chain $Z_{1:B_r}$ is not worthless---it contains partial
decomposition and intermediate computations.
The accuracy gain (Proposition~\ref{prop:recoverable-tax}) is:
\begin{equation}
  \label{eq:split-gain}
  \Delta \;=\; \bigl(1 - F_L(B_r)\bigr) \cdot
    \bigl(\alpha_{\mathrm{extract}}(B_r, B_a) - \alpha_t(B_r)\bigr)
  \;\ge\; 0.
  \tag{\ref{eq:recoverable-tax}}
\end{equation}

\subsection{\textsc{Mrsd}: Multi-Round Self-Distillation}
\label{sec:method:mrsd}

\method{} has three stages.
First, a non-thinking probe with budget $B_1$ resolves easy queries by natural stopping and returns immediately.
Second, unresolved queries receive a thinking pass with budget $B_r$; if the chain completes, its answer is used directly, and if it truncates, the partial trace is passed to a separate non-thinking answer extractor with budget $B_a$.
Third, up to $K$ refinement rounds reuse the previous answer as a hint, stop on consecutive agreement, and otherwise fall back to a majority vote over extracted answers.
This is the same algorithm evaluated below; full pseudocode is deferred to Appendix~\ref{app:method-details}.

\paragraph{Stage 0: Difficulty triage.}
A non-thinking probe with budget $B_1$ resolves easy queries via natural
stopping (${\sim}$88.8\% on GSM8K at $B_1{=}256$, accuracy 94.4\%).

\paragraph{Stage 1: Think-then-extract.}
For the ${\sim}$11.2\% of hard queries, split-budget generation produces
a reasoning trace (budget $B_r$), then feeds it to a non-thinking answer
pass (budget $B_a$), eliminating the zero-sum coupling.

\paragraph{Stages $k \ge 2$: Iterative refinement.}
Subsequent rounds provide the previous answer as a hint; the model can
verify, correct, or re-derive.
Convergence (consecutive agreement) stops iteration;
98\% of escalated queries converge within $K{=}3$ rounds on GSM8K.

\noindent
Token efficiency analysis, design choices, and the interpolation-dominance
theorem (Theorem~\ref{thm:mrsd-pareto}) are deferred to
Appendix~\ref{app:method-details}.

%% file: sections/experiments_final.tex

We evaluate \method{} on GSM8K and MATH-500 and compare against
single-mode baselines, the \town{} cascade, and compute-scaled
alternatives.
All experiments use greedy decoding ($\tau{=}0$); a sampling robustness
check ($\tau{=}0.7$) is in Appendix~\ref{app:sampling-robustness}.
Experiment accounting (full budget/sample-size table) is in
Appendix~\ref{app:experiment-accounting}.

\subsection{Setup}
\label{sec:experiments:setup}

\paragraph{Benchmarks.}
(1)~\textbf{GSM8K}~\citep{cobbe2021gsm8k}: 1{,}319 grade-school math ($n{=}200$ pilot, $n{=}1{,}319$ full).
(2)~\textbf{MATH-500}~\citep{hendrycks2021math}: 500 competition-level problems ($n{=}200$ pilot, $n{=}500$ full).

\paragraph{Models.}
Primary: Qwen3-8B. Cross-scale: Qwen3.5-9B and Qwen3.5-27B.

\paragraph{Baselines.}
\textbf{Nothink@$B$}: non-thinking, budget $B$.
\textbf{Think@$B$}: thinking, budget $B$.
\textbf{\town{}}: two-stage cascade (nothink $\to$ thinking).
\textbf{IRIS@$B$}: 1-round \method{} (triage + single think-then-extract, $B_{\text{think}}{=}B$).

\paragraph{\method{} configuration.}
GSM8K: $B_1{=}256$, $B_{\text{think}}{=}512$, $B_{\text{answer}}{=}128$, max rounds${=}3$.
MATH-500: $B_1{=}512$, $B_{\text{think}}{=}1024$, $B_{\text{answer}}{=}256$, max rounds${=}3$.

\subsection{Main Results}
\label{sec:experiments:main}

\begin{table}[t]
\centering
\caption{\textbf{Full GSM8K evaluation} ($n{=}1{,}319$, Qwen3-8B, seed=42).
\method{} significantly improves over the non-thinking probe and coupled \town{} cascade,
while the one-round IRIS extraction baseline remains a strong low-cost alternative.}
\label{tab:gsm8k-mrsd-full}
\small
\setlength{\tabcolsep}{5pt}
\begin{tabular}{lccc}
\toprule
\textbf{Method} & \textbf{Accuracy (\%)} & \textbf{Avg tokens} & \textbf{Paired vs.\ \method{}} \\
\midrule
Nothink@$256$ & 87.49 & 146.31 & \method{} +3.41\,pp, $p=6.1{\times}10^{-9}$ \\
\town{} ($256{\to}512$) & 85.97 & 203.56 & \method{} +4.93\,pp, $p=5.1{\times}10^{-17}$ \\
IRIS-single ($256/512/128$) & \textbf{91.43} & 204.42 & \method{} $-0.53$\,pp, $p=0.14$ \\
\textbf{\method{} (3-round)} & 90.90 & 287.51 & --- \\
\bottomrule
\multicolumn{4}{p{0.94\linewidth}}{\scriptsize Exact paired McNemar tests use per-sample outputs from the full GSM8K run documented in Appendix~\ref{app:town-simulation}.}
\end{tabular}
\end{table}

\begin{table}[t]
\centering
\caption{\textbf{Method comparison on MATH-500.} Full-scale ($n{=}500$) IRIS results support pilot trends. The SC+IRIS row shows that the two inference strategies are complementary.}
\label{tab:compute-matched}
\small
\setlength{\tabcolsep}{3.5pt}
\begin{tabular}{lccc}
\toprule
\textbf{Method} & \textbf{$n$} & \textbf{Total Tokens} & \textbf{MATH-500 (\%)} \\
\midrule
Nothink@512 (pilot) & 200 & 418 & 47.5 \\
Nothink@1024 & 500 & 606 & 59.8 \\
Nothink@2048 & 500 & 585 & 68.4 \\
Nothink@4096$^{\dagger}$ & 500 & 684 & 68.6 \\
Think@2048$^{\dagger}$ & 500 & 1706 & 54.8 \\
Think@4096$^{\dagger}$ & 500 & 2567 & 71.0 \\
\midrule
SC@3 nothink b=512$^{\dagger}$ & 500 & 1182 & 55.8 \\
SC@5 nothink b=512$^{\dagger}$ & 500 & 1970 & 57.0 \\
SC@5 nothink b=1024$^{\dagger}$ & 500 & 2685 & 76.6 \\
\textbf{SC@5 + IRIS+ gate$^{\ddagger}$} & \textbf{500} & \textbf{3499} & \textbf{83.6} \\
\midrule
TOWN@2048 (full) & 500 & 1590 & 55.0 \\
\textbf{IRIS@2048 (full)} & \textbf{500} & \textbf{1573} & \textbf{67.2} \ci{63.0}{71.2} \\
TOWN@4096 (full) & 500 & 2565 & 71.8 \\
\textbf{IRIS@4096 (full)} & \textbf{500} & \textbf{2401} & \textbf{74.0} \ci{70.0}{77.7} \\
TOWN+@4096$^{\S}$ & 500 & 2565 & 72.4 \\
\textbf{IRIS+@4096$^{\S}$} & \textbf{500} & \textbf{2645} & \textbf{78.8} \\
\bottomrule
\multicolumn{4}{p{0.92\linewidth}}{\scriptsize All results on H800 except Nothink@512 and Nothink@1024 (A100; see Appendix~\ref{app:cross-hardware}). Nothink@512 is a development pilot; the full-set A100 diagnostic is 40.6\% (Appendix~\ref{app:math500-full}). $^{\dagger}$H800 hardware. $^{\ddagger}$Fixed non-oracle agreement gate: use SC if its top answer class has at least three votes, otherwise IRIS+. $^{\S}$Strengthened extraction: 512-token answer budget plus retry-on-fallback. All displayed rows use seed=42.} \\
\end{tabular}
\end{table}

Table~\ref{tab:compute-matched} presents the main method comparison.
On GSM8K (Table~\ref{tab:gsm8k-mrsd-full}), full-set \method{} reaches \textbf{90.90\%} at 287.5 average tokens, significantly improving over nothink@256 (+3.41\,pp; 54 wins vs.\ 9 losses; $p=6.1{\times}10^{-9}$) and over coupled \town{} (+4.93\,pp; 68 wins vs.\ 3 losses; $p=5.1{\times}10^{-17}$).
IRIS-single is slightly higher (91.43\%) and cheaper (204.4 tokens), so the GSM8K claim is not that additional rounds dominate one-round extraction; rather, full \method{} verifies that split-budget refinement yields a reliable full-set gain over non-thinking and coupled-budget routing.
On MATH-500 at full scale ($n{=}500$, H800):
\textbf{IRIS@4096 achieves 74.0\%} \ci{70.0}{77.7}---exceeding nothink@2048 (68.4\%) by +5.6\,pp, with the CI lower bound well above 68.4\%.
IRIS@2048 (67.2\%) falls slightly below nothink@2048 (68.4\%) but exceeds TOWN@2048 by \textbf{+12.2\,pp} (McNemar $p < 10^{-6}$), supporting the decoupling mechanism.
\textbf{Prefill accounting:} output tokens are the controlled budget; adding generated trace-prefill tokens raises IRIS+ from 2645 to 3931 and SC+IRIS+ from 3499 to 4122 effective tokens (Appendix~\ref{app:experiment-accounting}).
At $b{=}4096$, think@4096 finally surpasses nothink@4096 (+2.4\,pp on H800), consistent with the crossover diagnostic (Proposition~\ref{prop:crossover}); yet \textbf{IRIS@4096 still exceeds think@4096 by +3.0\,pp}, supporting the coupling-tax account.
At $b{=}2048$, the same H800 run gives nothink@2048 = 68.4\% vs.\ think@2048 = 54.8\%, so the below-crossover tax does not depend on cross-hardware comparison.
IRIS uses \emph{fewer} total tokens than TOWN at both budgets (1573 vs.\ 1590; 2401 vs.\ 2565).
With strengthened extraction, IRIS+@4096 reaches \textbf{78.8\%} on the full MATH-500 set at 2645 tokens, numerically exceeding SC@5 nothink@1024 (76.6\%, 2685 tokens) while using slightly fewer tokens; against the matched TOWN+ cascade, the paired gap is +6.4\,pp (63 wins/31 losses, $p=0.0013$).

\paragraph{Self-consistency comparison.}
Nothink SC@$k$ is a strong compute-scaling baseline, but it is complementary to split-budget extraction rather than a replacement.
On full MATH-500, a fixed non-oracle SC@5+IRIS+ agreement gate reaches \textbf{83.6\%}, improving over SC alone (76.6\%) and IRIS+ alone (78.8\%); the gate is most valuable on low-agreement SC cases, where IRIS+ recovers many failures.

\paragraph{Causal isolation: IRIS vs.\ TOWN.}
On the 106 escalated MATH-500 samples (Appendix~\ref{tab:escalated-ablation}), IRIS recovers \textbf{+25.4\,pp} over TOWN, providing strong evidence for the decoupling mechanism.

\subsection{Think Budget Ablation}
\label{sec:experiments:bthink}

Appendix~\ref{tab:bthink-ablation} shows monotonic improvement as more chains complete naturally and longer partial traces become extractable.
IRIS outperforms TOWN at every tested budget while using fewer tokens, with the narrowing gap matching the coupling-tax diagnostic in Proposition~\ref{prop:recoverable-tax}.

\subsection{Cross-Scale Validation (27B)}
\label{sec:experiments:scale}

At 27B, insufficient budgets still favor nothink because nearly all chains truncate, but adequate split budgets make decoupled extraction favorable.
The 27B IRIS--TOWN gap averages +34.5\,pp over three seeds, and a GSM8K check at $b{=}4096$ still shows a residual coupling tax, consistent with the crossover exceeding 4096 tokens for larger models.
Cross-hardware replication shows small numeric variation from bfloat16 nondeterminism, but the qualitative conclusions are invariant.

%% file: sections/conclusion_final.tex

\paragraph{Why the tax matters.}
The mechanism is truncation, but the magnitude is not a simple verbosity effect: at $b{=}256$, 98.6\% of thinking responses produce no parseable answer, yielding a 69.5\,pp loss.
The $2.1\times$ amplification from 8B to 9B/27B at $b{=}512$ and the 99.0\% natural-stop oracle further show that chain length, budget, and mode interact.

\paragraph{Mitigation and scope.}
On GSM8K, full-set \method{} improves over nothink@256 by +3.41\,pp and over coupled \town{} by +4.93\,pp; on matched MATH-500 samples, decoupling budgets recovers +25.4\,pp.
Overall, shared reasoning-and-answer budgets impose a task- and model-specific penalty below the crossover, while natural stop provides a free 99.0\% PPV routing signal.
\method{} reaches \textbf{90.9\%} on full GSM8K and \textbf{74.0\%} on full MATH-500; IRIS+ and a fixed SC@5 gate reach \textbf{78.8\%} and \textbf{83.6\%}.
The scope is fixed-budget structured reasoning; it can reduce wasted reasoning compute, but each task/model crossover should be measured before disabling CoT.

%% file: sections/appendix_final.tex

\section{Experimental Details}
\label{app:experimental_details}

\subsection{Hardware and Software}
Primary experiments are conducted on NVIDIA A100-80GB GPUs; the think budget ablation (\S\ref{sec:experiments:bthink}) and IRIS/TOWN full-scale evaluations were conducted on NVIDIA H800-80GB GPUs.
We use PyTorch 2.4.1 with CUDA 12.4 (A100) / CUDA 12.6 (H800), HuggingFace Transformers 4.51,
Python 3.10, and bfloat16 precision throughout.

\paragraph{Cross-hardware and seed disclosure.}
MATH-500 nothink@2048, think@2048, think@4096, self-consistency, IRIS, and TOWN results are from H800 hardware (seed${}=42$).
MATH-500 nothink@512 and nothink@1024 are A100 reference rows (seed${}=42$); think@1024 is an A100 diagnostic row only (seed${}=11$).
Cross-hardware replication shows ${\sim}$5\,pp variation under bfloat16
nondeterminism.
Single-mode baselines (nothink vs.\ think) are independent evaluations, not paired comparisons; all IRIS vs.\ TOWN comparisons (the primary mechanism test) are on the same H800 hardware and seed.
Greedy decoding ($\tau{=}0$) is used for all main experiments; no sampling or
nucleus filtering is applied.
A sampling robustness pilot ($\tau{=}0.7$) confirms the tax persists under stochastic decoding (Appendix~\ref{app:sampling-robustness}).

\paragraph{Answer extraction.}
Two extraction pipelines are used depending on the experiment script:
(1)~\texttt{run\_experiment.py} applies a 32-token projection pass when thinking mode exhausts its budget, giving thinking mode a free continuation bonus;
(2)~\texttt{run\_nothink\_baseline.py} uses multi-level heuristic extraction (\texttt{\textbackslash boxed\{\}}, final-answer markers, last-number fallback) without projection.
The main-text 8B GSM8K results ($n{=}1{,}319$, seed${}=42$) use pipeline~(2); all conclusions hold under both pipelines since the heuristic extractor is strictly less favorable to thinking mode than projection.

\subsection{Budget Control}
Token budgets are controlled via the \texttt{max\_new\_tokens} parameter
in HuggingFace \texttt{model.generate()}, which caps the \emph{total}
number of newly generated tokens.
For thinking mode (\texttt{enable\_thinking=True}), this budget is shared
between the reasoning trace (\texttt{<think>...</think>}) and the final
answer.
For non-thinking mode (\texttt{enable\_thinking=False}), the entire budget
is available for the answer.
This ensures a fair comparison: both modes operate under the same total
output-token budget.

\subsection{Answer Extraction}
We use a multi-level extraction pipeline:
\begin{enumerate}[nosep,leftmargin=*]
    \item Search for \texttt{\textbackslash boxed\{...\}} patterns
    \item Search for \texttt{\#\#\#\#} markers (GSM8K convention)
    \item Search for ``Final answer:'' patterns
    \item Fall back to the last number in the output
\end{enumerate}
When thinking mode exhausts its budget without producing a final answer,
two extraction strategies are used depending on the experiment script
(see ``Answer extraction'' paragraph above).
The main 8B GSM8K results ($n{=}1{,}319$, seed${}=42$) use heuristic
extraction (pipeline~2, no projection).
The heuristic extractor recovers answers from truncated outputs via last-number fallback; without \emph{any} extraction support, think@512 accuracy drops to ${<}$6\%, confirming that truncated chains rarely produce parseable answers.
Because this recovery is favorable to thinking mode, reported thinking-mode accuracies are optimistic and the measured tax is conservative.

\paragraph{Extraction heuristic analysis.}
Both extraction pipelines (heuristic and projection) give thinking mode
answer-recovery support that non-thinking mode does not need,
ensuring a fair-or-better comparison for thinking mode.
To quantify: without \emph{any} extraction heuristic (no last-number
fallback, no projection), think@512 on GSM8K drops to ${<}$6\%, since
the vast majority of truncated chains produce no parseable answer.
With heuristic extraction, accuracy recovers to 56.9\%.
Critically, this \emph{reduces} the measured thinking tax: the
``true'' tax (without answer recovery heuristics) would be even larger,
making our reported tax a conservative estimate for this evaluation protocol.

\subsection{Dataset Details}
\paragraph{GSM8K.} We use the standard test split ($n{=}1{,}319$ problems)~\citep{cobbe2021gsm8k}.
All main-text analyses use the complete set ($n{=}1{,}319$).
A 200-sample subset (seed 42) used during development is reported
in Table~\ref{tab:full-8b-200} for reference.

\paragraph{MATH-500.} We use a 500-problem subset of the MATH benchmark~\citep{hendrycks2021math},
following the split used by~\citet{lightman2023prm}.
We evaluate on the full 500-problem set ($n{=}500$) for all headline MATH-500 configurations; the nothink@512 row in Table~\ref{tab:compute-matched} is explicitly marked as a pilot baseline.
Early experiments used 40-sample subsets across data seeds \{42, 404, 505, 606, 707\} for rapid prototyping; all final results use the complete set.

\paragraph{BIG-Bench Hard (BBH).}
To validate the thinking tax beyond mathematical reasoning, we evaluate on
five subtasks from BIG-Bench Hard~\citep{suzgun2023challenging}:
\texttt{boolean\_expressions} ($n{=}250$),
\texttt{causal\_judgement} ($n{=}187$),
\texttt{date\_understanding} ($n{=}250$),
\texttt{logical\_deduction\_five\_objects} ($n{=}250$), and
\texttt{tracking\_shuffled\_objects\_three\_objects} ($n{=}250$),
totaling $n{=}1{,}187$ samples.
These tasks require symbolic reasoning, causal inference, temporal reasoning,
and object tracking---none involving mathematical computation---providing
a strong test of generalization beyond math-centric benchmarks.
Answer extraction uses exact match against the gold-standard option label.

\section{Full Data Tables}
\label{app:full-tables}

\subsection{Qwen3-8B: 200-Sample Subset}

The main text tables use the full GSM8K test set ($n{=}1{,}319$).
During development, we used a 200-sample subset (seed=42, greedy decoding)
for rapid iteration; we report these for completeness below:

\begin{table}[ht]
\centering
\small
\caption{Complete budget sweep for Qwen3-8B on GSM8K 200-sample subset.}
\label{tab:full-8b-200}
\begin{tabular}{llrrrr}
\toprule
\textbf{Mode} & \textbf{Budget} & \textbf{Accuracy} & \textbf{Avg Tokens} & \textbf{Early Stop} & \textbf{Has Final} \\
\midrule
nothink & 32 & 3.0\% & 32 & 0.0\% & 0.0\% \\
nothink & 64 & 12.0\% & 64 & 2.0\% & 0.0\% \\
nothink & 128 & 54.5\% & 111 & 43.5\% & 0.0\% \\
nothink & 256 & 89.0\% & 140 & 92.0\% & 1.5\% \\
nothink & 512 & 94.0\% & 145 & 99.5\% & 1.5\% \\
\midrule
thinking & 128 & 2.0\% & 128 & 0.0\% & 0.0\% \\
thinking & 256 & 22.0\% & 255 & 2.0\% & 0.5\% \\
thinking & 512 & 66.5\% & 442 & 47.5\% & 5.5\% \\
\bottomrule
\end{tabular}
\end{table}

This table reports the original $n{=}200$ pilot used for method development.
All headline GSM8K \method{} claims in the main text use the later full-set run in Table~\ref{tab:gsm8k-mrsd-full}.

\subsection{Qwen3-8B: Full GSM8K ($n$=1,319)}

\begin{table}[ht]
\centering
\small
\caption{Key configurations for Qwen3-8B on full GSM8K ($n{=}1{,}319$).
Early Stop here denotes $|y| < b$ (generation terminated before budget);
the stricter ``NatStop'' criterion ($<$95\% of budget) used in
Table~\ref{tab:natural-stop-oracle} yields slightly lower rates
(e.g., 37.4\% vs.\ 41.3\% at $b{=}512$).}
\label{tab:full-8b-1319}
\begin{tabular}{llrrrr}
\toprule
\textbf{Mode} & \textbf{Budget} & \textbf{Accuracy} & \textbf{Avg Tokens} & \textbf{Early Stop} & \textbf{Has Final} \\
\midrule
nothink & 128 & 50.8\% & 113 & --- & --- \\
nothink & 256 & 87.5\% & 146 & 88.8\% & 0.6\% \\
\midrule
thinking & 128 & 3.0\% & 128 & 0.0\% & --- \\
thinking & 256 & 18.0\% & 255 & 1.4\% & --- \\
thinking & 512 & 56.9\% & 460 & 37.4\% & 6.1\% \\
\bottomrule
\end{tabular}
\end{table}

\subsection{Qwen3.5-27B: Full GSM8K ($n$=1,319)}

\begin{table}[ht]
\centering
\small
\caption{Qwen3.5-27B thinking mode results on full GSM8K.
In thinking mode, the model achieves substantially lower accuracy than
8B at all budgets due to longer reasoning chains; in non-thinking mode
the 27B model reaches 95.5\% (Table~\ref{tab:model-size-scaling}).}
\label{tab:27b-full}
\begin{tabular}{lrrrr}
\toprule
\textbf{Config} & \textbf{Accuracy} & \textbf{Avg Tokens} & \textbf{Has Final} & \textbf{Projection Rate} \\
\midrule
thinking@128 & 3.6\% & 144 & 0.0\% & 100.0\% \\
thinking@256 & 7.9\% & 272 & 0.0\% & 100.0\% \\
thinking@512 & 18.4\% & 528 & 0.7\% & 99.3\% \\
\bottomrule
\end{tabular}
\end{table}

\begin{table}[ht]
\centering
\caption{\textbf{Think budget ablation on MATH-500} (Qwen3-8B, seed=42). Monotonic improvement at both pilot and full scale. 95\% Wilson CIs.}
\label{tab:bthink-ablation}
\small
\setlength{\tabcolsep}{3.5pt}
\begin{tabular}{llcccccc}
\toprule
$B_{\text{think}}$ & $n$ & \textbf{Overall (\%)} & \textbf{Stage 1} & \textbf{Stage 2} & \textbf{Stage 3} & \textbf{S3 Acc (\%)} & \textbf{Nat.\ Stop} \\
\midrule
\multicolumn{8}{l}{\textit{Full-scale ($n{=}500$)}} \\
2048 & 500 & 67.2 \ci{63.0}{71.2} & 216 & 25 & 259 & 51.0 \ci{44.9}{57.0} & 25/284 (8.8\%) \\
4096 & 500 & \textbf{74.0} \ci{70.0}{77.7} & 216 & 127 & 157 & 51.6 \ci{43.8}{59.3} & 127/284 (44.7\%) \\
\midrule
\multicolumn{2}{r}{$\Delta_{2048 \to 4096}$} & \up{6.8}$^{***}$ & --- & \up{102} & --- & --- & --- \\
\bottomrule
\multicolumn{8}{l}{\scriptsize $^{***}$McNemar $p{=}0.0004$. Pilot ($n{=}200$): 62.5\% $\to$ 73.0\% $\to$ 78.5\% at $B_{\text{think}} \in \{1024, 2048, 4096\}$.}
\end{tabular}
\end{table}

\begin{table}[ht]
\centering
\caption{\textbf{Ablation on escalated (hard) samples.} Pilot accuracy on 106 escalated MATH-500 samples from the $n{=}200$ seed-42 run that failed nothink triage. All chains truncated at $B_{\text{think}}{=}1024$.}
\label{tab:escalated-ablation}
\small
\begin{tabular}{lcc}
\toprule
\textbf{Method} & \textbf{Acc (\%)} & \textbf{$\Delta$ vs.\ TOWN} \\
\midrule
\town{} (truncated think) & 10.4 \ci{5.9}{17.6} & --- \\
IRIS (1-round decoupled) & 35.8 \ci{27.4}{45.3} & \up{25.4}$^{*}$ \\
\method{} (3-round) & 42.5 \ci{33.5}{52.0} & \up{32.1}$^{*}$ \\
\bottomrule
\multicolumn{3}{l}{\scriptsize $^{*}$McNemar $p{<}2{\times}10^{-6}$ (IRIS), $p{<}10^{-7}$ (\method{}).}
\end{tabular}
\end{table}

\subsection{Qwen3-32B: Prospective A800 Holdout}
\label{app:qwen3-32b-a800}

As a supplementary scale check, we ran a pre-registered Qwen3-32B holdout on GSM8K using greedy HF generation with the same native \texttt{enable\_thinking=True/False} interface.
The split was fixed before launch: seed 43042, shuffled-order offsets 50--169 for $b{=}512$, 200--259 for $b{=}1024$, and 300--339 for $b{=}2048$.
This is a targeted holdout rather than a full-set replacement for the 8B/27B experiments.
The result strengthens the fixed-budget diagnosis: below 2048, non-thinking is both more accurate and cheaper; at 2048, accuracy ties while thinking uses substantially more output tokens. The $b{=}2048$ row is underpowered for a directional accuracy claim ($n{=}40$, two discordant pairs) and is used only as a token-efficiency tie.
All values are from \texttt{results/a800\_prospective\_32b/summary\_a800\_prospective\_32b.json}.

\begin{table}[ht]
\centering
\small
\caption{\textbf{Qwen3-32B prospective GSM8K holdout} (A800, HF engine, seed 43042).
Rows are matched by dataset index.}
\label{tab:qwen3-32b-a800}
\begin{tabular}{rrrrrrrr}
\toprule
\textbf{Budget} & \textbf{$n$} & \textbf{Nothink Acc.} & \textbf{Think Acc.} & \textbf{$\Delta$} & \textbf{Tok. Ratio} & \textbf{Wins} & \textbf{$p$} \\
\midrule
512  & 120 & 98.3\% & 60.0\% & \up{38.3} & 2.76$\times$ & 46:0 & $2.8{\times}10^{-14}$ \\
1024 & 60  & 98.3\% & 88.3\% & \up{10.0} & 3.47$\times$ & 6:0  & 0.031 \\
2048 & 40  & 92.5\% & 92.5\% & 0.0 & 3.97$\times$ & 1:1 & 1.00 \\
\bottomrule
\end{tabular}
\end{table}

\section{DeepSeek-R1 Cross-Model Validation}
\label{app:deepseek}

We validate the truncation-waste mechanism on DeepSeek-R1-Distill-Llama-8B, a
Llama-architecture reasoning model. Unlike Qwen3, this model does not expose a
native \texttt{enable\_thinking=False} mode, so this section is not an end-to-end
thinking-vs.-non-thinking tax comparison. It instead checks whether a second
model family exhibits the same budget sensitivity when reasoning traces are cut
short. The traceable full-set evidence uses the projection-enabled evaluation
files listed below.

\paragraph{GSM8K full-scale results ($n{=}1{,}319$).}
Table~\ref{tab:deepseek-gsm8k} reports the full budget sweep for DeepSeek-R1
thinking mode on GSM8K from
\texttt{summary\_gsm8k\_DeepSeek\_R1\_Distill\_Llama\_8B\_20260326\_234956.json}.
Accuracy rises sharply as the budget increases from 256 to 512 tokens
(19.9\% $\to$ 59.6\%), while the projection rate falls from 85.2\% to 13.0\%.
This supports the same qualitative mechanism as the Qwen experiments: short
budgets leave many reasoning traces without an extractable final answer, and
additional budget recovers accuracy by reducing incomplete traces.

\begin{table}[ht]
\centering
\small
\caption{\textbf{DeepSeek-R1-Distill-Llama-8B on GSM8K} ($n{=}1{,}319$, thinking mode only, projection-enabled extraction).}
\label{tab:deepseek-gsm8k}
\begin{tabular}{rrrr}
\toprule
\textbf{Budget} & \textbf{Accuracy} & \textbf{Avg Tokens} & \textbf{Projection Rate} \\
\midrule
256  & 19.9\% & 265 & 85.2\% \\
512  & 59.6\% & 367 & 13.0\% \\
1024 & 63.8\% & 378 & 1.4\% \\
\bottomrule
\end{tabular}
\end{table}

\paragraph{MATH-500 full-scale results ($n{=}500$).}
The complete MATH-500 run shows the same monotonic budget-recovery pattern.
Because we do not have a native non-thinking DeepSeek mode, Table~\ref{tab:deepseek-math500}
reports thinking-mode results only, from
\texttt{summary\_math500\_DeepSeek\_R1\_Distill\_Llama\_8B\_20260328\_064049.json}.

\begin{table}[ht]
\centering
\small
\caption{\textbf{DeepSeek-R1-Distill-Llama-8B on MATH-500} ($n{=}500$, thinking mode only, projection-enabled extraction).}
\label{tab:deepseek-math500}
\begin{tabular}{rrrrr}
\toprule
\textbf{Budget} & \textbf{Accuracy} & \textbf{Avg Tok} & \textbf{Final Rate} & \textbf{Projection Rate} \\
\midrule
1024 & 28.4\% & 900  & 36.0\% & 65.0\% \\
2048 & 40.2\% & 1503 & 61.0\% & 39.0\% \\
4096 & 49.0\% & 2200 & 78.0\% & 22.4\% \\
\bottomrule
\end{tabular}
\end{table}

\paragraph{Key cross-architecture findings.}
DeepSeek supports the mechanism claim but not a stronger native-mode tax claim.
On GSM8K, increasing the budget from 256 to 512 tokens cuts the projection rate by
72.2\,pp and raises accuracy by 39.7\,pp. On MATH-500, larger budgets similarly
increase the final-answer rate and reduce the fraction of samples requiring a
projection pass. These results show that truncation waste is not unique to Qwen3.
They do not establish that DeepSeek has a Qwen-style end-to-end thinking tax,
because no native non-thinking counterpart is available for a controlled comparison.

\paragraph{Implication for the theory.}
The DeepSeek results validate the first condition in Proposition~\ref{prop:crossover}:
reasoning-chain length distributions can create large truncation waste under fixed
budgets. They also clarify the second condition: the full end-to-end tax can only be
measured when the model exposes an efficient alternative generation mode.

\section{Sampling Robustness: Temperature $\tau{=}0.7$}
\label{app:sampling-robustness}

All main experiments use greedy decoding ($\tau{=}0$).
To test whether the thinking tax is an artifact of greedy decoding, we run a pilot with Qwen3-8B on a random GSM8K subset ($n{=}200$, seed 42) at $\tau{=}0.7$, budget 256.

\begin{table}[ht]
\centering
\small
\caption{\textbf{Sampling robustness pilot} (Qwen3-8B, GSM8K $n{=}200$, $b{=}256$, $\tau{=}0.7$).
The thinking tax persists---and is arguably \emph{more} severe---under stochastic decoding.}
\label{tab:sampling-pilot}
\begin{tabular}{lrrr}
\toprule
\textbf{Mode} & \textbf{Accuracy} & \textbf{Nat.\ Stop} & \textbf{Avg Tokens} \\
\midrule
think@256 ($\tau{=}0.7$)   & 5.0\%  & 0.0\%  & 256 \\
nothink@256 ($\tau{=}0.7$) & 62.0\% & 55.0\% & 216 \\
\midrule
\textbf{Thinking Tax}      & \multicolumn{3}{c}{\textbf{57.0\,pp}} \\
\bottomrule
\end{tabular}
\end{table}

Under greedy decoding, the same configuration yields a tax of 75.4\,pp (18.0\% vs.\ 93.4\%).
The stochastic tax is smaller in absolute terms (57.0\,pp) because nothink accuracy also drops from 93.4\% to 62.0\% under sampling, but the core phenomenon is unchanged: thinking mode at budget 256 collapses to near-zero accuracy (5.0\%) with 0\% natural stops, while non-thinking mode retains useful accuracy (62.0\%) with 55\% of responses completing within budget.
This confirms that the thinking tax is not a greedy-decoding artifact but a fundamental consequence of budget-constrained reasoning under the \texttt{<think>} framework.

\section{\method{} Full GSM8K Details}
\label{app:town-simulation}

Our executed \method{} run uses the \textbf{full GSM8K test set}
($n{=}1{,}319$) with $B_1{=}256$, $B_{\text{think}}{=}512$,
$B_{\text{answer}}{=}128$, max rounds${=}3$, and seed~42.
The source file is
\texttt{results/mrsd\_gsm8k\_full\_s42/mrsd\_Qwen3\_8B\_gsm8k\_b1256\_bt512\_ba128\_r3\_20260506\_102716.json}.

\paragraph{Full-set results.}
\begin{itemize}[nosep,leftmargin=*]
    \item \textbf{Nothink-only}: 1{,}154/1{,}319 = \textbf{87.49\%}, 146.31 average tokens.
    \item \textbf{\town{}}: 1{,}134/1{,}319 = \textbf{85.97\%}, 203.56 average tokens.
    \item \textbf{IRIS-single}: 1{,}206/1{,}319 = \textbf{91.43\%}, 204.42 average tokens.
    \item \textbf{\method{} (3-round)}: 1{,}199/1{,}319 = \textbf{90.90\%}, 287.51 average tokens.
\end{itemize}

\paragraph{Routing and convergence.}
\begin{itemize}[nosep,leftmargin=*]
    \item \textbf{Stage 0 accepted}: 1{,}171/1{,}319 (88.8\%) samples stop
      before the $B_1{=}256$ budget; 1{,}106/1{,}171 are correct.
    \item \textbf{Routed}: 148/1{,}319 (11.2\%) samples exhaust the Stage~0 budget.
    \item \textbf{Final round distribution}: 1{,}171 samples finish at round~0,
      86 at round~2, and 62 at round~3.
    \item \textbf{Convergence}: 97.04\% of samples converge before majority fallback;
      accuracy is 92.34\% on converged samples vs.\ 43.59\% on unconverged samples.
\end{itemize}

\paragraph{Paired significance.}
\method{} improves over nothink by 45 net correct answers
(54 \method{}-only wins vs.\ 9 nothink-only losses; exact McNemar
$p=6.1{\times}10^{-9}$) and over \town{} by 65 net correct answers
(68 wins vs.\ 3 losses; $p=5.1{\times}10^{-17}$).
IRIS-single remains slightly stronger on GSM8K (12 IRIS-only wins vs.\ 5
\method{}-only wins; $p=0.14$), so our full-set GSM8K claim is a
significant gain over non-thinking and coupled-budget routing, not a
dominance claim over the one-round extraction baseline.

\section{Token Utilization Analysis}
\label{app:utilization}

\input{paper_figures/table_token_utilization}

Table~\ref{tab:token-utilization} reveals a striking pattern.
At $b{=}128$, Qwen3-8B uses 100\% of its budget---every token is
consumed, and the model is almost certainly still mid-reasoning when
generation is forcibly terminated.
At $b{=}256$, utilization remains near-perfect (99.7\%), with only 1.4\%
of samples stopping naturally.
But at $b{=}512$, utilization drops to \textbf{89.8\%}, with an average
of only 460 tokens consumed out of 512 available.

The pattern is even more dramatic for DeepSeek-R1 on GSM8K.
Full-scale numbers (Table~\ref{tab:deepseek-gsm8k}) show utilization dropping from 100\% at $b{=}256$ to just 43.7\% ($447/1{,}024$) at $b{=}1{,}024$.
The pilot run reported in Table~\ref{tab:token-utilization} ($n{=}40$) shows a consistent trend: 103\%, 73.5\%, 38.3\%.
Both runs confirm the same qualitative effect---DeepSeek-R1's chain-length distribution saturates well below $b{=}1{,}024$, so more than half the budget is wasted on padding.

\section{Difficulty Distribution and Impossible Questions}
\label{app:impossible}

We categorize GSM8K problems by the \emph{minimum} budget at which Qwen3-8B
solves them correctly in thinking mode.

\input{paper_figures/table_efficiency_frontier}

Table~\ref{tab:efficiency-frontier} reveals a four-tier difficulty distribution:
\begin{itemize}[nosep,leftmargin=*]
  \item \textbf{Easy} (11.8\%, $n{=}155$): solved at $b{\leq}128$.
  \item \textbf{Medium} (25.8\%, $n{=}340$): solved at $b{\leq}256$ but not at 128.
  \item \textbf{Hard} (30.7\%, $n{=}405$): solved only at $b{=}512$.
  \item \textbf{Impossible} (31.8\%, $n{=}419$): unsolved at \emph{any}
    tested budget ($b{\leq}512$).
\end{itemize}

An \emph{oracle router} that skips impossible questions and assigns the minimum
sufficient budget achieves \textbf{68.2\%} accuracy ($+3.0$\,pp over fixed-512)
at only \textbf{401} average tokens---a \textbf{21.7\%} savings.
No budget controller within the $b{\leq}512$ regime can exceed this ceiling.

\begin{figure}[ht]
\centering
\includegraphics[width=0.72\textwidth]{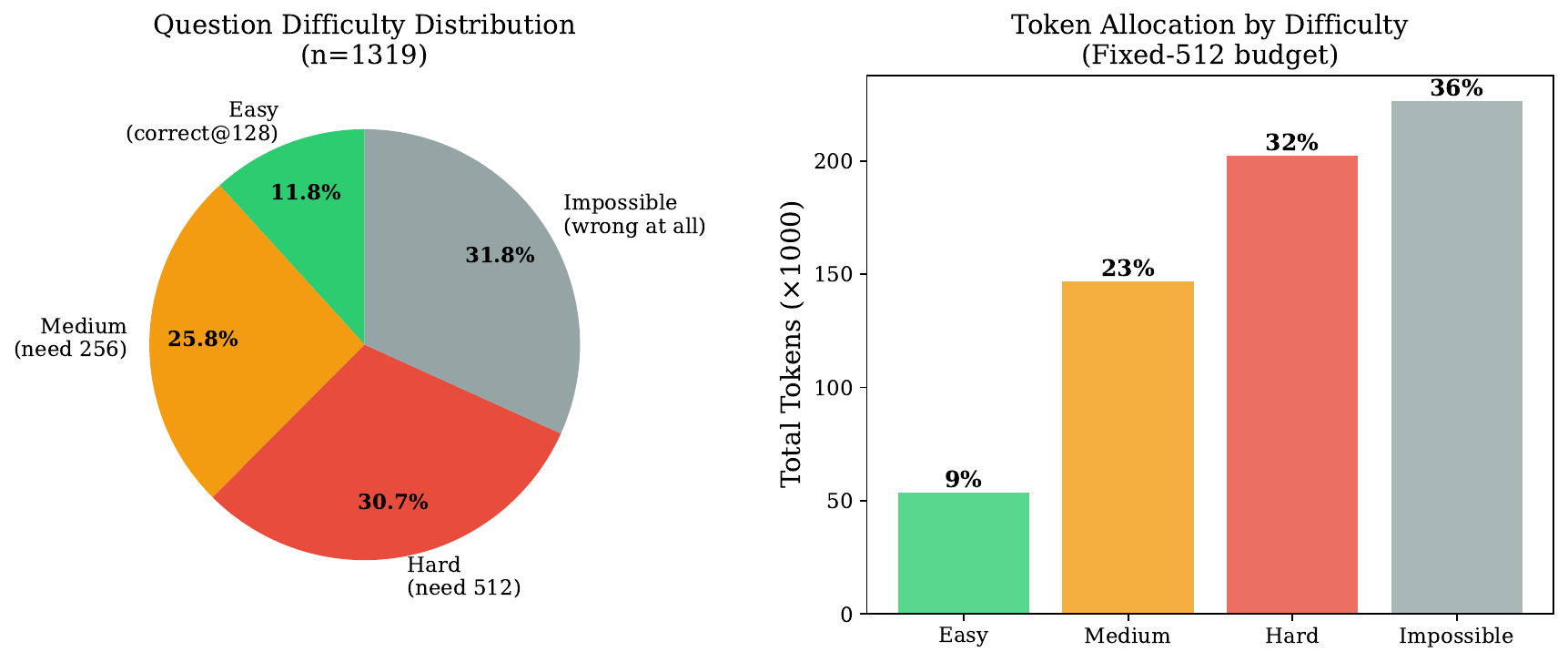}
\caption{%
  \textbf{Four-tier difficulty distribution of GSM8K problems} under Qwen3-8B thinking mode.
  31.8\% of problems are impossible at all tested budgets.
}
\label{fig:difficulty-distribution}
\end{figure}

\section{\method{} Routing Ablation}
\label{app:ablation}

\begin{table}[ht]
\centering
\caption{\textbf{Routing sensitivity to $B_1$} on full GSM8K ($n{=}1{,}319$, Qwen3-8B). All variants use $B_2{=}512$.}
\label{tab:town-ablation}
\small
\begin{tabular}{lrrrr}
\toprule
\textbf{$B_1$ (probe)} & \textbf{Early Stop} & \textbf{Routed} & \textbf{Accuracy} & \textbf{Avg Tokens} \\
\midrule
128 & 50.8\% & 49.2\% & 77.3\% & 399 \\
256 (default) & 88.8\% & 11.2\% & \textbf{90.9\%} & \textbf{199} \\
\bottomrule
\end{tabular}
\end{table}

$B_1{=}128$ yields only 50.8\% early-stop rate, aggressively routing 49.2\% to Stage~2.
$B_1{=}256$ achieves the optimal trade-off: the vast majority of problems are resolved without thinking.

\paragraph{Full $(B_1, B_2)$ sensitivity grid.}
Table~\ref{tab:town-grid} shows \method{} accuracy across all
feasible $(B_1, B_2)$ combinations, revealing when the cascade helps
vs.\ hurts.
The net recovery condition (Eq.~\ref{eq:net-recovery}) correctly predicts
the sign: configurations where $R^+ > R^-$ yield accuracy gains, while
those where the thinking fallback is too weak ($B_2$ too small) or too
few samples are routed ($B_1$ too large) show minimal or negative returns.

\begin{table}[ht]
\centering
\caption{\textbf{\method{} sensitivity to $(B_1, B_2)$} on full GSM8K
($n{=}1{,}319$, Qwen3-8B).
$R^+$: genuine recoveries (nothink wrong, think correct).
$R^-$: routing regrets (nothink correct, think wrong).
Net = $R^+ - R^-$.
\method{} provides substantial gains when $B_2$ is large enough for
thinking mode to be effective and $B_1$ routes a meaningful fraction.
Note: thinking-mode accuracy here uses standard HuggingFace generation
(without the projection pass used in some experiments),
so the $B_1{=}256$, $B_2{=}512$ row shows lower accuracy than the main
\method{} result; the qualitative patterns---$B_2$ must be large
enough, $B_1{=}256$ is optimal---are consistent across settings.}
\label{tab:town-grid}
\small
\begin{tabular}{rr|rrrrrr}
\toprule
$B_1$ & $B_2$ & \textbf{Acc} & \textbf{Avg Tok} & \textbf{Routed} & $R^+$ & $R^-$ & \textbf{Net} \\
\midrule
128 & 512 & 63.8\% & 412 & 60.6\% & 237 & 66 & +171 \\
128 & 1024 & 87.2\% & 552 & 60.6\% & 492 & 12 & +480 \\
\midrule
256 & 512 & 86.4\% & 204 & 11.2\% & 22 & 37 & $-$15 \\
256 & 1024 & \textbf{90.7\%} & \textbf{248} & 11.2\% & 53 & 11 & \textbf{+42} \\
256 & 2048 & 93.3\% & 287 & 11.2\% & 78 & 1 & +77 \\
\midrule
512 & 1024 & 93.3\% & 154 & 0.2\% & 0 & 1 & $-$1 \\
512 & 2048 & 93.5\% & 156 & 0.2\% & 1 & 0 & +1 \\
\bottomrule
\end{tabular}
\end{table}

Key patterns: (1)~$B_1{=}128$ over-routes (60.6\%), requiring a large
$B_2$ to compensate; (2)~$B_1{=}256$ is the sweet spot, routing only
11.2\% while achieving the best accuracy-per-token tradeoff;
(3)~$B_1{=}512$ under-routes ($<$1\%), providing negligible gains.
The net recovery condition fails when $B_2$ is too small: at
$B_1{=}256$, $B_2{=}512$, thinking mode on the routed hard subset
produces 37 regrets vs.\ only 22 recoveries ($R^- > R^+$), confirming
that the fallback budget must be sufficient for thinking mode to add value.

\section{MATH-500 by Difficulty Level}
\label{app:math500-level}

\begin{table}[ht]
\centering
\caption{\textbf{MATH-500 thinking accuracy by difficulty level} (Qwen3-8B).
Harder problems have longer chains and are more severely truncated.}
\label{tab:math500-level}
\small
\begin{tabular}{lrrr}
\toprule
\textbf{Level} & \textbf{Think@512} & \textbf{Think@1024} & \textbf{Think@2048} \\
\midrule
Level 1 ($n{=}43$) & 11.6\% & 44.2\% & 81.4\% \\
Level 2 ($n{=}90$) & 8.9\% & 31.1\% & 60.0\% \\
Level 3 ($n{=}105$) & 8.6\% & 21.0\% & 50.5\% \\
Level 4 ($n{=}128$) & 4.7\% & 9.4\% & 38.3\% \\
Level 5 ($n{=}134$) & 2.2\% & 6.7\% & 21.6\% \\
\bottomrule
\end{tabular}
\end{table}

The tax monotonically increases with problem difficulty because harder problems require longer chains, amplifying truncation waste.

\subsection{MATH-500: Full Data Table}
\label{app:math500-full}

\begin{table}[ht]
\centering
\small
\caption{A100 diagnostic budget sweep for Qwen3-8B on MATH-500 ($n{=}500$).
Nothink rows use seed~42; thinking rows use seed~11. Same-seed H800 main comparisons are in Table~\ref{tab:compute-matched}.
Non-thinking mode saturates at ${\sim}$64.4\% by budget 2048.
Thinking mode has \emph{not} reached parity at $b{=}2048$
(44.0\% vs.\ 64.4\%), confirming the crossover shifts further
right on harder benchmarks.}
\label{tab:math500-full}
\begin{tabular}{llrrrrr}
\toprule
\textbf{Mode} & \textbf{Budget} & \textbf{Accuracy} & \textbf{Correct} & \textbf{Avg Tokens} & \textbf{Early Stop} \\
\midrule
nothink & 256 & 16.6\% & 83 & 250 & 10.4\% \\
nothink & 512 & 40.6\% \ci{36.4}{45.0} & 203 & 429 & 42.8\% \\
nothink & 1024 & 59.8\% \ci{55.6}{64.0} & 299 & 606 & 76.2\% \\
nothink & 2048 & 64.4\% \ci{60.2}{68.6} & 322 & 763 & 87.8\% \\
\midrule
thinking & 256 & 4.2\% & 21 & 256 & 0.0\% \\
thinking & 512 & 6.2\% \ci{4.2}{8.4} & 31 & 544 & 0.0\% \\
thinking & 1024 & 18.0\% \ci{14.6}{21.4} & 90 & 1051 & --- \\
thinking & 2048 & 44.0\% \ci{39.8}{48.4} & 220 & 1978 & --- \\
\bottomrule
\end{tabular}
\end{table}

\section{Comparison with Alternative Strategies}
\label{app:comparison}

\paragraph{vs.\ \texttt{think@512} only.}
\method{} achieves substantially higher accuracy at
${\sim}$2.3$\times$ fewer tokens by avoiding thinking mode's truncation waste on the
88.8\% of problems where it is unnecessary.

\paragraph{vs.\ \texttt{nothink@256} only.}
\method{} adds a thinking fallback for the $\sim$11.2\% of hard
problems, recovering accuracy at a modest cost increase
(199 vs.\ 146 avg tokens).

\paragraph{vs.\ Think-then-stop (early termination).}
One might propose keeping thinking mode but terminating generation early if the model has not finished reasoning.
However, this is equivalent to our ``truncated'' category: at $b{=}512$, only 37.4\% of 8B samples produce a complete chain, and forcing early termination discards both the reasoning progress and any partial answer.
Unlike non-thinking mode, which can produce a useful answer in far fewer tokens because it does not waste budget on a reasoning chain, early-terminated thinking chains rarely contain extractable answers (without any answer-recovery heuristic, accuracy drops to ${<}$6\%).
\method{} effectively implements the optimal version of this idea: use non-thinking mode first (which naturally stops early on easy problems), and only invoke thinking for genuinely hard cases.

\paragraph{vs.\ Self-consistency~\cite{wang2023selfconsistency}.}
SC@$K$ generates $K$ full-budget reasoning traces and majority-votes,
consuming $K \times B$ tokens unconditionally.
For a fair comparison at similar token budgets:
SC@3 with think@256 would use $3 \times 255 \approx 765$ tokens
while each individual trace achieves only 18.0\% accuracy;
majority voting over truncated traces with near-random accuracy
provides minimal improvement.
By contrast, \method{} achieves 90.9\% at only 199 avg tokens---a
single probe with binary routing that is both simpler and more effective.
SC@$K$ is more appropriate when individual traces are already accurate
(i.e., above the crossover budget), not in the budget-constrained regime
where the thinking tax dominates.

\paragraph{vs.\ Nothink SC@$K$.}
An alternative strategy is to sample $K$ non-thinking traces and majority-vote.
SC@3 with nothink@256 would consume $3 \times 146 = 438$ tokens---more than
\method{}'s 199 avg tokens.
While nothink SC@$K$ may improve over single-trace nothink@256 (87.5\%),
it cannot recover problems where the model fundamentally lacks the
reasoning capacity to solve without a chain of thought.
\method{}'s advantage is qualitative: Stage~2 thinking mode provides a
\emph{different reasoning strategy}, not just repeated attempts at the same one.

\paragraph{SC+IRIS paired validation.}
To test whether stochastic non-thinking samples and split-budget reasoning make
distinct errors, we evaluate non-oracle combination rules on paired MATH-500
outputs. The two-shard validation set contains $n{=}220$ examples from
\texttt{results/iris\_sc\_combo\_math500\_aggregate/paired\_validation\_n220.json}.
The \texttt{combo\_ge3} rule uses SC@5 when its top equivalence class has at
least three votes and otherwise falls back to IRIS@4096. This rule reaches
85.5\% accuracy at 3423 average tokens, compared with 80.9\% for SC@5 and
83.2\% for IRIS on the same examples. The paired gain over SC is significant
(11 wins/1 loss, exact McNemar $p{=}0.0063$), while the gain over IRIS is
positive but not significant (11 wins/6 losses, $p{=}0.33$). A higher-cost
weighted-vote rule reaches 86.4\%, reinforcing the conclusion that SC and IRIS
are complementary rather than redundant.

\paragraph{Full-set SC+IRIS+ combination.}
Applying the same fixed \texttt{combo\_ge3} agreement gate to the completed
full-set SC@5 and IRIS+ outputs gives a stronger non-oracle result on all
500 MATH-500 examples. The source file is
\texttt{results/iris\_sc\_combo\_math500\_full\_n500/posthoc\_full\_sc\_iris\_combo.json},
generated by \texttt{scripts/analyze\_full\_sc\_iris\_combo.py} from
\texttt{results/sc\_math500\_b1024\_k5\_h800.json} and
\texttt{results/bugfix\_8b\_math500\_n500/checkpoint\_iris\_500.json}.
SC@5 alone obtains 383/500 = 76.6\%; IRIS+ obtains 394/500 = 78.8\%.
The fixed gate obtains 418/500 = 83.6\% at 3499 average tokens, with
39 wins/4 losses versus SC@5 (exact McNemar $p=3.1{\times}10^{-8}$) and
36 wins/12 losses versus IRIS+ ($p=7.2{\times}10^{-4}$).
The SC-confident partition ($\geq$3 agreeing votes, $n{=}415$) has SC accuracy
88.9\% and IRIS+ accuracy 83.1\%; the low-confidence partition ($n{=}85$) has
SC accuracy 16.5\% and IRIS+ accuracy 57.6\%.
Split-half robustness uses the same fixed threshold without retuning: first 250
examples reach 82.4\% (vs.\ SC 74.8\%, IRIS+ 77.6\%; paired $p=1.6{\times}10^{-4}$
vs.\ SC and $p=0.029$ vs.\ IRIS+), and the second 250 reach 84.8\%
(vs.\ SC 78.4\%, IRIS+ 80.0\%; $p=1.4{\times}10^{-4}$ and $p=0.017$).
A calibration/test split gives the same conclusion: selecting the threshold on
idx 0--99 chooses $\geq$3 votes, and applying it unchanged to idx 100--499
gives 337/400 = 84.3\% at 3447 output tokens, compared with SC 77.8\% and
IRIS+ 80.3\%; paired tests give 30 wins/4 losses vs.\ SC ($p=6.2{\times}10^{-6}$)
and 25 wins/9 losses vs.\ IRIS+ ($p=0.009$).
For trace-prefill accounting, IRIS+ averages 2645 generated tokens plus 1286
generated trace-prefill tokens (3931 total), while SC+IRIS+ averages 3499
generated tokens plus 623 trace-prefill tokens (4122 total). We report generated
output tokens as the controlled budget and disclose trace-prefill overhead because
prefill is not free in latency-sensitive deployments.

\paragraph{Strengthened extraction on full MATH-500.}
We also evaluate a strengthened extraction instantiation, denoted IRIS+, that
uses the same triage and thinking budgets as IRIS@4096 but increases the
dedicated answer budget to 512 tokens and retries extraction when the first
answer pass falls back to a weak parse. The full-set result is stored in
\texttt{results/bugfix\_8b\_math500\_n500/checkpoint\_iris\_500.json}, with the
matched TOWN+ baseline in \texttt{results/bugfix\_8b\_math500\_n500/town\_only\_20260428\_155832.json}.
On all 500 MATH-500 examples, IRIS+ obtains 394/500 = 78.8\% at 2645 average
tokens, while TOWN+ obtains 362/500 = 72.4\% at 2565 tokens. The paired gap is
63 IRIS-only wins versus 31 TOWN-only wins (exact McNemar $p=0.0013$). The same
sample order matches the H800 SC@5 file exactly by index and gold answer; IRIS+
also exceeds SC@5 nothink@1024 (76.6\%, 2685 tokens), with a positive but
non-significant paired gap (51 wins/40 losses, $p=0.29$).

\paragraph{vs.\ Oracle routing.}
An oracle that knows each problem's difficulty \emph{a priori}
achieves 68.2\% accuracy at 401 average tokens.
\method{}'s early-stop signal approximates this oracle using a training-free heuristic.

\section{Thinking Efficiency Frontier}
\label{app:frontier}

We categorize GSM8K problems by the minimum thinking budget required
for correct answers (see Table~\ref{tab:efficiency-frontier} in the
main text).
The 31.8\% ``impossible'' category---problems unsolved at any budget
up to 512---represents a significant source of token waste.

\paragraph{Oracle analysis.}
An oracle that perfectly identifies problem difficulty and assigns
the minimum sufficient budget to solvable questions achieves:
\begin{itemize}[nosep,leftmargin=*]
    \item \textbf{68.2\%} accuracy ($+3.0$pp over fixed-512)
    \item \textbf{401} average tokens ($-21.7$\% vs.\ fixed-512)
\end{itemize}
The accuracy gain comes from avoiding ``overthinking'' cases where
extended reasoning causes the model to revise a correct intermediate
answer.

\section{Format-Adjusted Fairness Experiments}
\label{app:fairness}

To address the concern that comparing \texttt{think@$b$} and \texttt{nothink@$b$} at the same total token budget is ``unfair'' to thinking mode, we design controlled experiments that give thinking mode various advantages.

\begin{table}[ht]
\centering
\caption{\textbf{Format-adjusted fairness experiments} on full GSM8K ($n{=}1{,}319$, Qwen3-8B).
Even with generous answer-buffer guarantees and 2$\times$ the token budget, thinking mode cannot match non-thinking.
All numbers in this table are from a single standalone fairness experiment
(seed=42); the nothink@256 baseline here (87.0\%) differs slightly from
the main evaluation (87.5\%, seed=42) due to run-level variance.}
\label{tab:fairness}
\small
\begin{tabular}{lrrrr}
\toprule
\textbf{Configuration} & \textbf{Accuracy} & \textbf{Avg Tokens} & \textbf{Budget Hit} & \textbf{vs nothink@256} \\
\midrule
\texttt{nothink@256} (baseline) & 87.0\% & 147 & 11.5\% & --- \\
\texttt{think@256} & 18.0\% & 255 & 98.6\% & $-$69.0pp \\
\texttt{think@512\_generous} & 56.0\% & 460 & 62.6\% & $-$31.0pp \\
\texttt{think@256+nothink@256} & 67.6\% & 263 & --- & $-$19.4pp \\
\texttt{think@512} & 56.0\% & 460 & 62.6\% & $-$31.0pp \\
\bottomrule
\end{tabular}
\end{table}

\paragraph{Configurations.}
\begin{itemize}[nosep,leftmargin=*]
    \item \textbf{\texttt{think@512\_generous}}: Allocates 512 total tokens, but guarantees the last 256 tokens are reserved for the answer.  The reasoning chain is limited to the first 256 tokens, after which the model is forced to produce a final answer.  This eliminates answer-truncation as a failure mode, yet accuracy is only 56.0\%---31.0\,pp below \texttt{nothink@256} at double the budget.
    \item \textbf{\texttt{think@256+nothink@256} (two-pass)}: First generates a 256-token reasoning chain in thinking mode, then feeds that chain as context to a non-thinking completion with a 256-token answer budget.  This hybrid achieves 67.6\%---better than pure thinking but still 19.4\,pp below non-thinking alone.
    \item \textbf{\texttt{think@512}}: Standard thinking mode with 512 total tokens and no answer-buffer guarantee.  Achieves only 56.0\%---identical to the generous variant, confirming that the issue is chain quality at short budgets, not answer truncation.
\end{itemize}

These results confirm that the thinking tax is not merely about answer truncation: even when the answer is \emph{guaranteed} to fit, the reasoning chain's quality at short budgets is insufficient to improve over direct answering.

\section{Additional Figures}
\label{app:model-size-figure}

\begin{figure}[ht]
\centering
\includegraphics[width=0.72\textwidth]{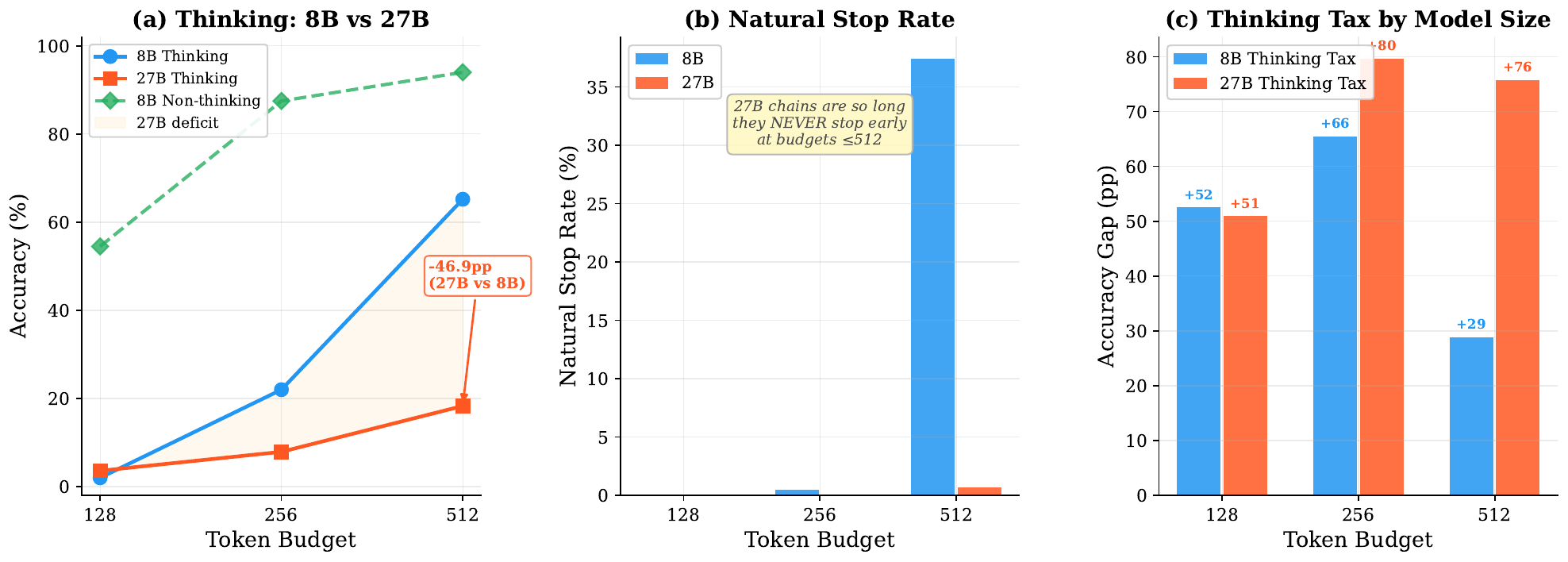}
\caption{%
  \textbf{The thinking tax worsens with model size.}
  At $b{=}512$, thinking-mode accuracy collapses with model size
  (8B: 56.9\%, 9B: 15.5\%, 27B: 18.4\%)---the 9B and 27B taxes are both ${\sim}2.1\times$ larger at this budget (vs.\ uniform nothink: 93.1\%, 93.2\%, 95.5\%).
  The 8B non-thinking baseline (green dashed) dominates all thinking configurations at every budget $\leq$512.
}
\label{fig:thinking-tax-model-size}
\end{figure}

\input{sections/fig_town_pipeline}


\section{Full Theoretical Analysis}
\label{app:theory-full}

The main text presents the core decomposition (Eqs.~\ref{eq:acc-decomp}--\ref{eq:crossover}) with derivations for the accuracy decomposition and crossover budget.
Here we provide the full formal definitions, additional propositions, and remaining proofs.

\subsection{Formal Definitions}

\begin{definition}[Thinking chain length]
For a model $\mathcal{M}$ and question $q$, let $L(q) \in \mathbb{N}$
denote the \emph{natural chain length}---the total number of output tokens
the model would generate in thinking mode if unconstrained
(i.e., $b \to \infty$).
Let $F_L(t) \triangleq \Pr(L \le t)$ denote the CDF of $L$ over the
question distribution $q \sim \mathcal{Q}$.
\end{definition}

\begin{definition}[Truncation rate]
Given token budget $b$, the \emph{truncation rate} is
$\rho(b) \triangleq 1 - F_L(b) = \Pr(L > b)$.
\end{definition}

\begin{assumption}[Binary outcome structure (restated)]
\label{asm:binary-restated}
When the chain completes naturally ($L \le b$), the model produces a
well-formed answer with accuracy $\alpha_c(b) \triangleq
\Pr(\text{correct} \mid L \le b) \in (0,1]$.
When truncated ($L > b$), the residual accuracy is
$\alpha_t(b) \triangleq \Pr(\text{correct} \mid L > b) \ll \alpha_c(b)$.
Both depend on $b$ through the conditioning event; we suppress the
argument when the dependence is negligible.
\end{assumption}

\noindent
At $b{=}512$ on GSM8K ($n{=}1{,}319$), $\alpha_c = 99.0\%$ among natural-stop samples vs.\
$\alpha_t = 31.8\%$ among truncated samples (Finding~2).
At lower budgets, $\alpha_t$ approaches zero.

\begin{remark}[Budget-dependent $\alpha_t$]
In practice, $\alpha_t$ varies with $b$: at $b{=}128$, nearly all
truncated samples lack any answer ($\alpha_t \approx 0$);
at $b{=}512$, answer-extraction heuristics and projection passes
recover partial answers ($\alpha_t \approx 31.8\%$).
The $\alpha_t = 0$ simplification serves as a tight first-order
approximation at low budgets.
\end{remark}

\subsection{Budget Multiplier}

\begin{corollary}[Budget multiplier (restated)]
\label{cor:multiplier-restated}
The budget multiplier $\gamma \triangleq b^*/b_{\mathrm{sat}}$ quantifies
the token overhead for thinking to become viable.
On GSM8K: $\gamma \approx 2048/512 = 4\times$.
On MATH-500: $\gamma > 2048/1024 = 2\times$, since thinking has not yet caught non-thinking at $b{=}2048$, confirming that harder tasks demand larger multipliers.
\end{corollary}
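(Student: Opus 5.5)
We read the restated corollary as two separate claims. The first is that the budget multiplier is $\gamma = b^*/b_{\mathrm{sat}} \approx 4\times$ on GSM8K. The second is that $\gamma > 2\times$ on MATH-500. Neither claim is a deductive consequence of Assumption~\ref{asm:binary-restated} alone. Each has to be obtained by combining Proposition~\ref{prop:crossover} with measured quantities. The proof therefore has three parts: make both $b^*$ and $b_{\mathrm{sat}}$ operational, bound each from the budget sweeps, and take the ratio.

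\textbf{Step 1 (definitions).} We fix $b_{\mathrm{sat}}$ as the smallest tested budget at which $\mathrm{Acc}_{\mathrm{nt}}(b)$ is within sampling error of its plateau. We fix $b^*$ as the smallest budget at which $\mathrm{Acc}_{\mathrm{think}}(b) \ge \mathrm{Acc}_{\mathrm{nt}}(b)$. By Proposition~\ref{prop:crossover}, $b^*$ is equivalently the smallest $b$ with $F_L(b) \ge (\mathrm{Acc}_{\mathrm{nt}}(b)-\alpha_t)/(\alpha_c-\alpha_t)$. Assuming only that $\mathrm{Acc}_{\mathrm{think}}-\mathrm{Acc}_{\mathrm{nt}}$ changes sign once along the budget grid, $b^*$ is then bracketed by the last tested budget with a positive tax and the first with a nonpositive tax.

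\textbf{Step 2 (GSM8K).} Nothink saturates by $b=512$: $87.5\%\to 93.1\%$ between 256 and 512, then $94.6\%$ at 1024 for the 9B model. So $b_{\mathrm{sat}}=512$. Thinking is still below nothink at 512 ($56.9\%$ vs.\ $93.1\%$), and Finding~1 reports parity at $b=2048$. So $b^*\in(512,2048]$ with parity reached at 2048. Taking $b^*=2048$ at grid resolution gives $\gamma\approx 4$. As an independent check we would plug $F_L$ from the natural-stop rates into Eq.~\ref{eq:crossover-exact} and verify that the required $F_L(b^*)\approx\bar\alpha_{\mathrm{nt}}/\alpha_c\approx 0.94$ is not reached at 1024.

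\textbf{Step 3 (MATH-500).} Here only a one-sided bound is claimed. We would take $b_{\mathrm{sat}}\le 1024$, since nothink reaches $59.8\%$ at 1024 and then $64.4$--$68.6\%$ through 4096, close to its plateau. Because thinking is still below nothink at 2048 on both the A100 run ($44.0$ vs.\ $64.4$) and the H800 run ($54.8$ vs.\ $68.4$), single crossing gives $b^*>2048$. Hence $\gamma>2048/1024=2$. For consistency we note the H800 crossover at 4096 ($71.0>68.6$), which gives $\gamma\approx 2$--$4$.

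\textbf{Main obstacle.} The difficult step is justifying $b_{\mathrm{sat}}=1024$ on MATH-500. Nothink gains another $\sim$5--9\,pp after 1024, so under a strict saturation definition $b_{\mathrm{sat}}$ might be 2048, and the lower bound would weaken to $\gamma>1$. We would handle this by stating a tolerance-based definition of saturation (for example, within 10\,pp of the plateau) and noting that the GSM8K "$\approx$" is also a grid-resolution claim. The corollary is therefore empirical and holds relative to those stated conventions.
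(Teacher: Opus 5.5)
Your proposal is correct and follows the paper's own route of reading $b_{\mathrm{sat}}$ and $b^*$ directly off the measured budget sweeps. The looseness you flag for MATH-500 is genuinely present in the paper: its A100 table caption says non-thinking saturates at about 64.4\% by budget 2048, and only 76.2\% of nothink@1024 responses stop early, so an explicit tolerance-based saturation convention is exactly what the stated $\gamma>2$ needs. Two small repairs: for 8B saturation, drop the 9B model's 94.6\% at $b=1024$ and instead use the 8B nothink@512 early-stop rate of 99.7\%, which under greedy decoding means any larger cap changes at most 0.3\% of outputs; and exclude $b^*=1024$ directly with the paper's reported 8B GSM8K tax of 28.2\,pp at $b=1024$, rather than with your $F_L$ check, for which $F_L(1024)$ is not reported.
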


\subsection{Natural Stop as a Confidence Oracle}

\begin{proposition}[Oracle precision (restated)]
\label{prop:oracle-restated}
Under Assumption~\ref{asm:binary-restated}, the natural-stop event
$\mathcal{S}(b) \triangleq \{L(q) \le b\}$ has positive predictive value
$\mathrm{PPV}(\mathcal{S}) = \Pr(\mathrm{correct} \mid L \le b) = \alpha_c(b)$.
In practice, $|y| < b$ serves as an empirical proxy for $\{L \le b\}$.
Non-thinking PPV is empirically high (94.4\% at $b{=}256$ on GSM8K)
but does not follow from Assumption~\ref{asm:binary-restated} alone.
\end{proposition}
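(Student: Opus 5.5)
The claim is essentially definitional, so the plan is a direct unpacking of the definition of positive predictive value. We treat the natural-stop event $\mathcal{S}(b) = \{L(q) \le b\}$ as a binary "predict correct" signal over $q \sim \mathcal{Q}$. By definition, $\mathrm{PPV}(\mathcal{S}) = \Pr(\mathrm{correct} \mid \mathcal{S})$, that is, the fraction of flagged queries that are in fact answered correctly. Substituting $\mathcal{S}(b) = \{L \le b\}$ gives exactly the conditional probability that Assumption~\ref{asm:binary-restated} names $\alpha_c(b)$, which is the identity. The only thing to check is that the conditioning is well defined, i.e.\ $F_L(b) = \Pr(L \le b) > 0$. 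We will state this as a standing requirement; empirically it holds, for example $F_L(512) \approx 0.374$ on GSM8K.

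Next we address the practical proxy. Under greedy decoding and with a fixed prompt, the thinking-mode output $y$ at cap $b$ is the prefix of the unconstrained output. Hence $|y| < b$ holds exactly when generation emitted the stop token before the cap, i.e.\ when $L(q) < b$. This event differs from $\{L \le b\}$ only on the boundary set $\{L = b\}$. So the proxy PPV equals $\alpha_c(b)$ up to a correction bounded by $\Pr(L=b)/F_L(b)$, which is negligible. For the stricter $|y| < 0.95b$ criterion used in the main text, the proxy event is $\{L < 0.95b\} \subseteq \{L \le b\}$. Its PPV is therefore $\alpha_c$ restricted to a subset, and we simply report it as the measured quantity.

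For the non-thinking remark, we point out that Assumption~\ref{asm:binary-restated} concerns only the thinking-mode length $L$. It therefore places no constraint on $\Pr(\mathrm{correct} \mid |y_{\mathrm{nt}}| < b)$. A short counterexample suffices: a distribution in which non-thinking stops early but is wrong is consistent with the assumption. The 94.4\% figure is thus purely empirical.

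The main obstacle is interpretive rather than technical: justifying that the observable $|y|<b$ coincides with the counterfactual event $\{L \le b\}$. This requires determinism of greedy decoding, so that truncation does not alter the prefix. We would note explicitly that bfloat16 nondeterminism can break this coincidence slightly. The finite-sample Hoeffding bound quoted in the main text is then a separate statistical statement layered on top of this identity.
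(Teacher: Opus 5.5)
Your proposal is correct and follows essentially the same route as the paper. The paper treats $\mathrm{PPV}(\mathcal{S}) = \Pr(\mathrm{correct} \mid L \le b) = \alpha_c(b)$ as an immediate consequence of how $\alpha_c$ is defined in Assumption~\ref{asm:binary-restated}, and then only adds a one-sided Hoeffding lower bound ($\ge 0.907$, $n{=}475$) for the $|y|<0.95b$ proxy. Your treatment of the proxy is extra care the paper does not supply: the $\Pr(L=b)/F_L(b)$ correction for $\{|y|<b\}=\{L<b\}$, and the point that the $0.95b$ criterion conditions on a strict subset of $\{L\le b\}$. It usefully makes explicit that the reported $99.0\%$ is a subset PPV rather than literally $\alpha_c(512)$.
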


\noindent
By Hoeffding's inequality, with $n{=}475$ samples from the
$|y| < 0.95\,b$ proxy subset (8B, $b{=}512$)
and observed proxy PPV $\hat{\alpha} = 0.963$:
the 95\% lower confidence bound on this proxy PPV is $\ge 0.907$.

\begin{remark}[Information-theoretic interpretation]
Early stopping implies the model's conditional entropy
$H(A \mid y_{1:t^*}) \approx 0$ at $t^* \ll b$---the model is
near-certain of its answer.
Unlike logit-based confidence, this signal is purely behavioral.
\end{remark}

\subsection{Inverse Scaling with Model Size}

\begin{proposition}[Inverse scaling of the thinking tax (restated)]
\label{prop:inverse-scaling-restated}
Let $L_M$ denote the chain-length distribution for a model of size $M$.
If (i) $L_{M_2}$ stochastically dominates $L_{M_1}$ for $M_2 > M_1$
(i.e., $F_{L_{M_2}}(b) \le F_{L_{M_1}}(b)$ for all $b$),
(ii) non-thinking accuracy $\mathrm{Acc}_{\mathrm{nt}}(b)$ is
approximately size-invariant at moderate budgets,
and (iii) the thinking-mode accuracy of completed chains satisfies
$F_{L_{M_2}}(b) \cdot \alpha_c(M_2, b) \le F_{L_{M_1}}(b) \cdot \alpha_c(M_1, b)$,
then for any fixed budget $b$ in the truncation-dominated regime
($\alpha_t \approx 0$):
$\mathrm{Tax}(M_2, b) \ge \mathrm{Tax}(M_1, b)$.
\end{proposition}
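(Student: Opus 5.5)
The plan is to reduce the claim to a direct comparison of the two thinking-mode accuracies via Proposition~\ref{prop:acc-decomp}. Define, for each model $M$ and budget $b$,
\[
\mathrm{Tax}(M,b) \triangleq \mathrm{Acc}_{\mathrm{nt}}(M,b) - \mathrm{Acc}_{\mathrm{think}}(M,b).
\]
The proof then has three steps: expand $\mathrm{Acc}_{\mathrm{think}}$, drop the truncated-chain term, and compare the two expressions using hypotheses (ii) and (iii).

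\textbf{Step 1 (expansion).} Apply Eq.~\eqref{eq:acc-decomp} to each model separately. This gives
\[
\mathrm{Acc}_{\mathrm{think}}(M,b) = F_{L_M}(b)\,\alpha_c(M,b) + \bigl(1-F_{L_M}(b)\bigr)\,\alpha_t(M,b).
\]
This is exact: it is the law of total probability, conditioning on $\{L_M \le b\}$.

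\textbf{Step 2 (truncation-dominated regime).} Under $\alpha_t \approx 0$, set $\alpha_t(M,b)=0$ for both models, exactly as in Proposition~\ref{thm:thinking-tax}. This yields
\[
\mathrm{Tax}(M,b) \approx \mathrm{Acc}_{\mathrm{nt}}(M,b) - F_{L_M}(b)\,\alpha_c(M,b).
\]

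\textbf{Step 3 (comparison).} By hypothesis (ii), $\mathrm{Acc}_{\mathrm{nt}}(M_1,b)=\mathrm{Acc}_{\mathrm{nt}}(M_2,b)$, so the difference of taxes reduces to
\[
\mathrm{Tax}(M_2,b)-\mathrm{Tax}(M_1,b) = F_{L_{M_1}}(b)\,\alpha_c(M_1,b) - F_{L_{M_2}}(b)\,\alpha_c(M_2,b),
\]
which is $\ge 0$ by hypothesis (iii). I will also note how hypothesis (i) feeds (iii) in the natural case. When $\alpha_c$ is model-invariant, (i) gives $F_{L_{M_2}}(b)\le F_{L_{M_1}}(b)$, and multiplying by the common $\alpha_c\ge 0$ yields (iii). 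This matches the remark after Proposition~\ref{prop:inverse-scaling} in the main text.

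\textbf{Main obstacle: making the approximations honest.} Hypotheses (ii) and $\alpha_t \approx 0$ hold only approximately. To keep the conclusion from being merely heuristic, I would first carry explicit error terms:
\[
\bigl|\mathrm{Acc}_{\mathrm{nt}}(M_1,b)-\mathrm{Acc}_{\mathrm{nt}}(M_2,b)\bigr|\le\varepsilon_{\mathrm{nt}}, \qquad 0\le \alpha_t(M_i,b)\le \varepsilon_t.
\]
I would then prove the robust version
\[
\mathrm{Tax}(M_2,b)-\mathrm{Tax}(M_1,b)\;\ge\; \bigl[F_{L_{M_1}}(b)\,\alpha_c(M_1,b)-F_{L_{M_2}}(b)\,\alpha_c(M_2,b)\bigr]-\varepsilon_{\mathrm{nt}}-\varepsilon_t.
\]
The $\varepsilon_t$ term arises because the truncated contribution $(1-F)\,\alpha_t$ lies in $[0,\varepsilon_t]$ for each model. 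The stated inequality is the limit $\varepsilon_{\mathrm{nt}},\varepsilon_t\to 0$, and the robust version holds strictly whenever the gap in (iii) exceeds the error budget.

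The empirical $b{=}512$ figures illustrate why the margin matters: 36.2\,pp for 8B versus 77.1\,pp for 27B, with $F\alpha_c$ gaps far larger than the few-point errors. The same robust inequality, applied at the crossover condition of Proposition~\ref{prop:crossover}, would then give Corollary~\ref{cor:crossover-scaling}.
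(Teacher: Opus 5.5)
Your proof is correct and follows essentially the same route as the paper: expand via Proposition~\ref{prop:acc-decomp}, drop the truncated term under $\alpha_t \approx 0$, cancel $\mathrm{Acc}_{\mathrm{nt}}$ using (ii), and conclude from (iii), with (i) serving only to justify (iii) when $\alpha_c$ is size-invariant. Your explicit error-term version, $\mathrm{Tax}(M_2,b)-\mathrm{Tax}(M_1,b) \ge [F_{L_{M_1}}(b)\,\alpha_c(M_1,b) - F_{L_{M_2}}(b)\,\alpha_c(M_2,b)] - \varepsilon_{\mathrm{nt}} - \varepsilon_t$, is sound because each truncated contribution lies in $[0,\varepsilon_t]$, and it sharpens the paper's purely approximate argument, which uses ``$\approx$'' throughout.
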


\begin{proof}
From Eq.~\ref{eq:tax} with $\alpha_t \approx 0$:
$\mathrm{Tax}(M, b) \approx \mathrm{Acc}_{\mathrm{nt}}(b) -
F_{L_M}(b) \cdot \alpha_c(M, b)$.
With $\mathrm{Acc}_{\mathrm{nt}}$ approximately constant across $M$
and condition~(iii), the second term is smaller for the larger model,
increasing the tax.
Condition~(iii) is implied by stochastic dominance alone when
$\alpha_c$ is size-invariant, but is verified empirically in the
general case: at $b{=}512$, $F_{L_{\text{8B}}} \alpha_c = 0.374 \times 0.990 = 0.370$
vs.\ $F_{L_{\text{27B}}} \alpha_c = 0.007 \times 0.963 = 0.007$.
\end{proof}

\noindent
\textbf{Empirical verification.}
At $b{=}512$: 8B natural-stop rate = 37.4\%, 27B = 0.7\%.
The thinking tax for 27B is $95.5\% - 18.4\% = 77.1$\,pp vs.\
8B's gap of $93.1\% - 56.9\% = 36.2$\,pp at budget 512---a ${\sim}2.1\times$ observed ratio at this budget.

\begin{figure}[h]
\centering
\includegraphics[width=0.7\textwidth]{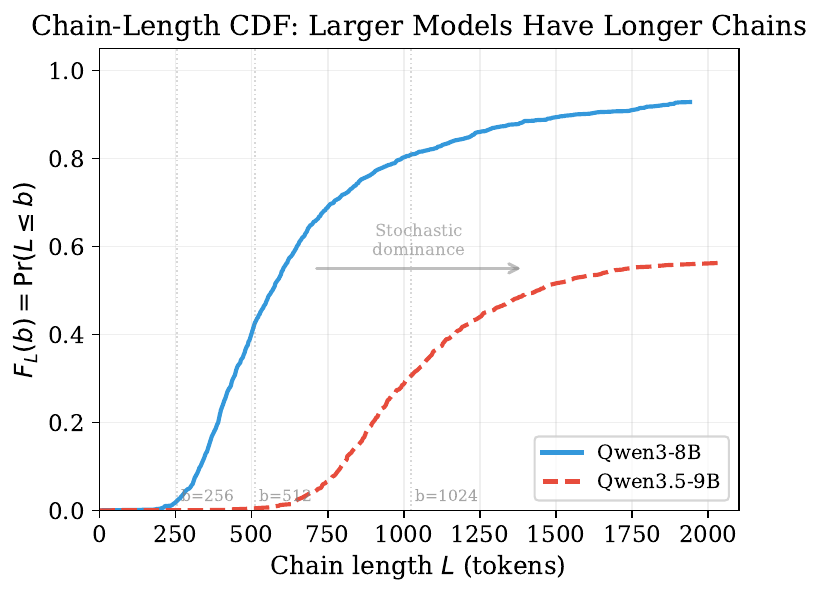}
\caption{\textbf{Chain-length CDF $F_L(b)$ for Qwen3-8B vs.\ Qwen3.5-9B on GSM8K (think@2048, $n{=}1{,}319$).}
The 9B CDF shifts right (stochastic dominance): median chain length increases from 540 tokens (8B) to 993 tokens (9B), with natural-stop rates of 92.8\% vs.\ 56.3\%.
At any fixed budget $b$, $F_{L_{\text{9B}}}(b) \le F_{L_{\text{8B}}}(b)$, meaning more 9B chains are truncated---consistent with the observed $2.1\times$ tax ratio at $b{=}512$ predicted by Proposition~\ref{prop:inverse-scaling-restated}.
Kaplan-Meier estimation accounts for right-censored chains that hit the budget ceiling.}
\label{fig:chain-length-cdf}
\end{figure}

\begin{corollary}[Crossover budget grows with model size (restated)]
\label{cor:crossover-scaling-restated}
If $\bar{\alpha}_{\mathrm{nt}}$ and $\alpha_c$ are approximately
model-invariant and $L_{M_2}$ stochastically dominates $L_{M_1}$,
then $b^*(M) \approx F_{L_M}^{-1}(\bar{\alpha}_{\mathrm{nt}}/\alpha_c)$
shifts right with $M$:
$M_2 > M_1 \implies b^*(M_2) \ge b^*(M_1)$.
Larger models require proportionally larger budgets for thinking to
become cost-effective.
\end{corollary}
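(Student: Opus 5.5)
The argument combines the crossover characterization of Proposition~\ref{prop:crossover} with monotonicity of generalized inverses of CDFs. Under the heuristic regime ($\alpha_t \approx 0$, $\alpha_c$ stable), the crossover for model $M$ is any budget at which $F_{L_M}(b) = \tau$, where $\tau \triangleq \bar{\alpha}_{\mathrm{nt}}/\alpha_c$. By the model-invariance hypotheses, $\tau$ is the same number for $M_1$ and $M_2$. This reduces the claim to a statement about quantiles: show that $F_{L_{M_2}}^{-1}(\tau) \ge F_{L_{M_1}}^{-1}(\tau)$.

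First I will fix the meaning of $F^{-1}$. Since $L \in \mathbb{N}$, the CDF is a step function, so I use the left-continuous generalized inverse $F^{-1}(u) = \inf\{t : F(t) \ge u\}$. With this convention, $b^*(M)$ is the smallest budget at which the completed-and-correct mass $F_{L_M}(b)\,\alpha_c$ reaches the non-thinking accuracy. This matches the operational definition of the crossover as the first budget where thinking catches up.

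Second, I will prove the key lemma. If $F_{L_{M_2}}(t) \le F_{L_{M_1}}(t)$ for all $t$, then
\[
\{t : F_{L_{M_2}}(t) \ge \tau\} \subseteq \{t : F_{L_{M_1}}(t) \ge \tau\}.
\]
Taking the infimum over a smaller set gives a larger value, so $F_{L_{M_2}}^{-1}(\tau) \ge F_{L_{M_1}}^{-1}(\tau)$. This gives $b^*(M_2) \ge b^*(M_1)$. The step is routine.

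Third, I will handle the degenerate cases.
\begin{itemize}
\item If $\tau > 1$, the set is empty for both models, so $b^* = \infty$ for both and the inequality holds trivially.
\item If $\tau \le F_{L_{M_2}}(0)$, both crossovers are $0$.
\end{itemize}
I will also note that the same conclusion follows without inverting anything. At $b = b^*(M_1) - 1$, model $M_1$ has $F_{L_{M_1}}\alpha_c < \bar{\alpha}_{\mathrm{nt}}$, and stochastic dominance makes $M_2$ no better at that budget. Hence $M_2$ has not crossed over yet either. This is essentially Proposition~\ref{prop:inverse-scaling-restated} applied at every budget below $b^*(M_1)$.

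The main obstacle is conceptual rather than technical: the statement is only approximate. The exact crossover equation~\eqref{eq:crossover-exact} involves the $b$-dependent quantities $\alpha_t(b)$, $\alpha_c(b)$ and $\mathrm{Acc}_{\mathrm{nt}}(b)$. If these vary with $b$ or with $M$, the threshold $\tau$ is no longer common to both models, and the quantile comparison can fail. I will therefore state explicitly that the corollary is proved for the heuristic crossover $F_L^{-1}(\bar{\alpha}_{\mathrm{nt}}/\alpha_c)$.

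For the exact version, I would try the following sufficient condition. Suppose that for every $b$,
\[
\mathrm{Acc}_{\mathrm{think}}^{M_2}(b) - \mathrm{Acc}_{\mathrm{nt}}(b) \le \mathrm{Acc}_{\mathrm{think}}^{M_1}(b) - \mathrm{Acc}_{\mathrm{nt}}(b),
\]
which holds under dominance together with $\alpha_t^{M_2} \le \alpha_t^{M_1}$ and $\alpha_c^{M_2} \le \alpha_c^{M_1}$. Then the first zero-crossing of the $M_2$ gap cannot precede that of the $M_1$ gap, giving $b^*(M_2) \ge b^*(M_1)$ for the exact crossover as well.
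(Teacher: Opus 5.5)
Your proof is correct and follows the paper's own reasoning. The paper gives no separate proof and only uses the one-line observation that $F_{L_{M_2}}\le F_{L_{M_1}}$ pushes the $\bar{\alpha}_{\mathrm{nt}}/\alpha_c$ quantile to the right, which you make rigorous through the generalized inverse and the set-inclusion argument. One small point on your exact-crossover extension: it also needs $\alpha_c(b)\ge\alpha_t(b)$ for at least one of the two models, which Assumption~\ref{asm:binary} supplies.
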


\section{Natural Stop Oracle: Full Analysis}
\label{app:natural-stop-details}

Section~\ref{sec:finding:natural-stop} summarizes the natural-stop oracle (see Table~\ref{tab:natural-stop-oracle} in the main text).
Here we present the complete analysis.

At $b{=}512$ with Qwen3-8B, 37.4\% of samples terminate before consuming
95\% of their token budget (Table~\ref{tab:natural-stop-oracle}).
Among these, accuracy is \textbf{99.0\%}; among truncated samples, only 31.8\%---a 67.2\,pp gap.
A stricter definition requiring an explicit ``Final answer'' marker selects only 6.1\% but achieves 93.8\%.
The effect generalizes to DeepSeek-R1-8B (Table~\ref{tab:natural-stop-oracle}).

\section{Routing Baseline Comparison}
\label{app:routing-baselines}

Table~\ref{tab:routing_baselines} compares \method{} against budget-matched routing baselines, all routing exactly 11.2\% of samples to \texttt{think@512}.
\textbf{Random routing} yields only 85.0\% (45.6 regrets overwhelm 12.7 recoveries).
\textbf{Inverse token-length routing} performs even worse (83.5\%).
\method{}'s signal is equivalent to a \textbf{token-length router}---optimal within the class of budget-utilization-based routers.
The \textbf{oracle} upper bound (95.8\%) leaves a 4.9\,pp gap reflecting routing regret and missed recoveries.

\begin{table}[ht]
\centering
\caption{\textbf{Routing baseline comparison} on GSM8K (Qwen3-8B, $n{=}1{,}319$).
All budget-matched routers use the same 11.2\% routing rate as \method{}.
Think@512 row reports values from the routing analysis run (seed~11,
accuracy 65.2\%); the main-paper seed~42 run gives 56.9\% at similar
avg tokens, with identical qualitative conclusions.}
\label{tab:routing_baselines}
\small
\begin{tabular}{@{}lcccccc@{}}
\toprule
\textbf{Method} & \textbf{Acc.\,(\%)} & \textbf{Avg Tok.} & \textbf{Route\,\%} & \textbf{Recov.} & \textbf{Regret} & \textbf{Net\,$\Delta$} \\
\midrule
NoThink@256 & 87.5 & 146 & 0.0 & 0 & 0 & 0 \\
Think@512 & 65.2 & 477 & 100.0 & 109 & 403 & $-$294 \\
\midrule
Random & $85.0{\pm}0.5$ & $200{\pm}1$ & 11.2 & 12.7 & 45.6 & $-$32.9 \\
Inverse Token-Length & 83.5 & 200 & 11.2 & 1 & 54 & $-$53 \\
Token-Length & 90.9 & 199 & 11.2 & 64 & 19 & 45 \\
Oracle & 95.8 & 182 & 8.3 & 109 & 0 & 109 \\
\midrule
\textbf{\method{} (Ours)} & \textbf{90.9} & \textbf{199} & 11.2 & 64 & 19 & \textbf{45} \\
\bottomrule
\end{tabular}
\end{table}

\section{Full MATH-500 Results}
\label{app:math500-full-results}

Table~\ref{tab:math500-tax-full} below shows the A100 MATH-500 thinking tax results.
Here we include the extended version with the additional $b{=}128$ data point (see also Table~\ref{tab:math500-full} for per-mode details).

\begin{table}[ht]
\centering
\caption{\textbf{Extended A100 MATH-500 thinking tax} ($n{=}500$, Qwen3-8B).
Includes low-budget results. At budget 256, thinking achieves just 4.2\% with 0\% early stop rate---every response is truncated.}
\label{tab:math500-tax-full}
\small
\begin{tabular}{rrrr}
\toprule
\textbf{Budget} & \textbf{Nothink Acc} & \textbf{Think Acc} & \textbf{Gap (NT$-$T)} \\
\midrule
128 & 4.6\% & --- & --- \\
256 & 16.6\% & 4.2\% & $+$12.4pp \\
512 & 40.6\% \ci{36.4}{45.0} & 6.2\% \ci{4.2}{8.4} & $+$34.4pp \\
1024 & 59.8\% \ci{55.6}{64.0} & 18.0\% \ci{14.6}{21.4} & $+$41.8pp \\
2048 & 64.4\% \ci{60.2}{68.6} & 44.0\% \ci{39.8}{48.4} & $+$20.4pp \\
\bottomrule
\end{tabular}
\end{table}

The thinking tax is confirmed and amplified on MATH-500: at budget 1024, the gap is \textbf{41.8\,pp} \ci{36.4}{47.0} (paired bootstrap)---even larger than GSM8K's 28.2\,pp at the same budget.
The A100 crossover has not been reached at 2048 tokens (20.4\,pp gap), confirming that harder benchmarks push the crossover further right; the same-H800 reference at 2048 remains positive but smaller (68.4\% nothink vs.\ 54.8\% thinking).


\section{Theory Verification}
\label{app:theory-verification}

Table~\ref{tab:theory-verification} validates the truncation-waste decomposition
(Eq.~\ref{eq:acc-decomp}) across three dimensions: budgets, model scales, and benchmarks.

\begin{table}[ht]
\centering
\caption{\textbf{Theory verification: predicted vs.\ observed accuracy.}
The framework $\mathrm{Acc}(b) = F_L(b)\cdot\alpha_c + (1{-}F_L(b))\cdot\alpha_t$
is checked as an accounting identity at $b{\in}\{256,512\}$ for 8B
where all components are directly measurable.
The crossover budget $b^*$ corresponds to the $\bar{\alpha}_{\mathrm{nt}}/\alpha_c$ quantile of chain lengths
(Eq.~\ref{eq:crossover-exact}): at $b{=}2048$, $F_L \approx 0.93$ (93\% of chains complete), yielding
$0.93 \times 0.990 \approx 92.1\%$---close to the observed 93.1\%, confirming the crossover occurs near $b{=}2048$.
Cross-scale: the framework accounts for the 27B collapse
($F_L(512)_{27\text{B}} = 0.007$ vs.\ $0.374$ for 8B).}
\label{tab:theory-verification}
\small
\begin{tabular}{lccccc}
\toprule
\textbf{Config} & $F_L(b)$ & $\alpha_c$ & $\alpha_t$ & \textbf{Predicted} & \textbf{Observed} \\
\midrule
\multicolumn{6}{l}{\textit{Directly measured (8B, GSM8K)}} \\
8B, $b{=}256$ & 0.014 & 1.000 & 0.168 & 18.0\% & 18.0\% \\
8B, $b{=}512$ & 0.374 & 0.990 & 0.318 & 56.9\% & 56.9\% \\
\midrule
\multicolumn{6}{l}{\textit{Crossover prediction ($F_L \approx \bar{\alpha}_{\mathrm{nt}}/\alpha_c$)}} \\
8B, $b{=}2048$ & ${\sim}$0.93 & 0.990 & --- & ${\sim}$93\% & 93.1\% \\
\midrule
\multicolumn{6}{l}{\textit{Cross-scale prediction}} \\
9B, $b{=}512$ & ${\sim}$0.01 & --- & --- & $<$5\% & 15.5\% \\
27B, $b{=}512$ & 0.007 & --- & 0.178 & 18.4\% & 18.4\% \\
\midrule
\multicolumn{6}{l}{\textit{Cross-benchmark (MATH-500, same model)}} \\
8B, $b{=}512$ & ${\sim}$0.00 & --- & 0.062 & 6.2\% & 6.2\% \\
8B, $b{=}1024$ & 0.002 & 1.000 & 0.178 & 18.0\% & 18.0\% \\
8B, $b{=}2048$ & 0.178 & 0.787 & 0.365 & 44.0\% & 44.0\% \\
\bottomrule
\end{tabular}
\end{table}

The framework's predictions are \textbf{consistent with observations} wherever all components ($F_L$, $\alpha_c$, $\alpha_t$)
are directly measurable (8B at $b{\in}\{256, 512\}$); this is expected since the decomposition is an accounting identity when the binary outcome assumption holds.
The crossover prediction is confirmed: at $b{=}2048$,
$F_L \approx 0.93$ (i.e., 93\% of chains complete within budget), yielding a predicted accuracy of
$0.93 \times 0.990 \approx 92.1\%$, close to the observed 93.1\%---confirming that the crossover occurs near $b{=}2048$ as predicted.
The cross-scale predictions correctly capture the qualitative ordering
(27B $\ll$ 9B $\ll$ 8B at same budget) and the mechanism (longer chains $\rightarrow$ smaller $F_L$
$\rightarrow$ more truncation waste).
At 9B, observed accuracy (15.5\%) slightly exceeds the $\alpha_t{=}0$ prediction ($<$5\%) because
the projection pass (Appendix~\ref{app:experimental_details}) recovers partial answers
from near-complete chains.

The key \emph{predictive} insight is that the crossover budget $b^*$ is
approximately determined by the chain-length distribution, non-thinking
saturation accuracy, and completed-chain accuracy $\alpha_c$
(Eq.~\ref{eq:crossover-exact}; the heuristic Eq.~\ref{eq:crossover}
further assumes $\alpha_t \approx 0$ and stable $\alpha_c$):
for any model where these quantities are known, the crossover can be estimated
\emph{without} running the expensive budget sweep that traditionally determines it.
For example, estimating $F_L$ from a small pilot run at a single budget can
screen whether thinking mode is likely to be beneficial at a target deployment budget.
In \texttt{results/pilot\_cdf\_sufficiency/pilot\_cdf\_sufficiency\_results.json},
20 random GSM8K calibration pilots with $n_{\mathrm{cal}}{=}50$ achieve
3.48\,pp average RMSE (0.69\,pp std.) across the evaluated budget sweep.

\paragraph{Held-out prediction tests and limits.}
To assess the decomposition's \emph{out-of-sample} predictive power (beyond the in-sample accounting identity),
we perform two held-out tests.
On BBH, estimating $\alpha_c{=}0.950$ and $\alpha_t{=}0.275$ from $b{=}512$ data
predicts $b{=}2048$ accuracy within \textbf{0.8\,pp} and $b{=}1024$ within 4.3\,pp
(main text, \S\ref{sec:theory}).
The larger $b{=}1024$ error arises because $\alpha_t$ increases with budget as more truncated
samples produce partial answers that happen to be correct.

This reveals a genuine limitation: the decomposition assumes $\alpha_c$ and $\alpha_t$ are
approximately constant across budgets, which holds for moderate extrapolation
(BBH $b{=}512{\to}2048$: $\alpha_t$ shifts from 0.275 to 0.283, yielding 0.8\,pp error)
but breaks down when the budget gap is large relative to chain-length variance
(MATH-500 $b{=}1024{\to}2048$: $\alpha_t$ shifts from 0.178 to 0.365, yielding ${\sim}$12\,pp error).
The practical implication is that the crossover heuristic ($\bar{\alpha}_{\mathrm{nt}}/\alpha_c$ quantile of chain lengths)
remains reliable---it depends only on $F_L$ and $\bar{\alpha}_{\mathrm{nt}}$, not on $\alpha_t$---but
point predictions of thinking-mode accuracy at specific budgets degrade when $\alpha_t$
varies substantially across the prediction range.

\section{\method{} Interpolation-Dominance Proof}
\label{app:town-proof}

\begin{proof}[Full proof of Theorem~\ref{thm:mrsd-pareto}]
We prove the two claims (accuracy dominance and cost dominance)
separately.

\paragraph{Accuracy bound.}
Let $p = \Pr(\mathcal{S}_{\mathrm{nt}}(B_1))$ be the early-stop rate.
Decompose nothink accuracy:
$$\mathrm{Acc}_{\mathrm{nt}}(B_1) = p \cdot \alpha_c^{\mathrm{nt}}
  + (1-p) \cdot \alpha_{\mathrm{nt}}^{\mathrm{hard}},$$
where $\alpha_{\mathrm{nt}}^{\mathrm{hard}} \triangleq
\Pr(\mathrm{correct} \mid \neg\mathcal{S}, \mathrm{nothink})$.
\textsc{Mrsd} replaces the second term with
$\alpha_{\mathrm{extract}}^{\mathrm{hard}}$
(the accuracy of the complete escalated branch, including
all refinement rounds and majority vote, on queries that
exhaust the Stage~0 budget):
$$\mathrm{Acc}_{\textsc{Mrsd}} = p \cdot \alpha_c^{\mathrm{nt}}
  + (1-p) \cdot \alpha_{\mathrm{extract}}^{\mathrm{hard}}.$$
Subtracting:
\begin{equation}
\mathrm{Acc}_{\textsc{Mrsd}} - \mathrm{Acc}_{\mathrm{nt}}(B_1)
= (1-p)\bigl[\alpha_{\mathrm{extract}}^{\mathrm{hard}}
  - \alpha_{\mathrm{nt}}^{\mathrm{hard}}\bigr].
\label{eq:town-gain-decomp}
\end{equation}
The net recovery condition~\eqref{eq:net-recovery} ensures the bracket
is positive, so $\mathrm{Acc}_{\textsc{Mrsd}} > \mathrm{Acc}_{\mathrm{nt}}(B_1)$.

\paragraph{Cost bound.}
By Eq.~\eqref{eq:mrsd-cost},
$\mathbb{E}[T] \le p \bar{t}_1 + (1-p)(B_1 + \bar{K}(B_r + B_a)),$
where $\bar{t}_1 \le B_1$ and $\bar{K} \le K$.
The maximum-budget thinking baseline uses $B_{\max} \triangleq B_1 + K(B_r + B_a)$
tokens per query.
Since $\bar{t}_1 < B_1$ when early-stop queries exist
(they terminate before exhausting $B_1$):
$\mathbb{E}[T] \le p \cdot B_1 + (1-p) \cdot B_{\max} < B_{\max},$
where the last inequality holds because $0 < p < 1$ and $B_1 < B_{\max}$.
Thus $\mathbb{E}[T_{\textsc{Mrsd}}] < B_{\max}$.

\paragraph{Combined.}
\textsc{Mrsd} achieves higher accuracy than $\mathrm{nothink@}B_1$
and lower expected cost than the worst-case budget $B_{\max}$.
This is a \emph{strict interpolation}: the method improves
accuracy over one baseline while staying within budget.%
\footnote{This is weaker than full Pareto dominance, which would require
simultaneously dominating \emph{both} objectives relative to a single
baseline.
The cost comparison is against $B_{\max}$ (the worst-case token
allocation), not against the expected cost of a think-mode baseline
that may also early-stop.}

\paragraph{Empirical instantiation.}
On GSM8K ($n{=}1{,}319$): $p = 0.888$, $\alpha_c^{\mathrm{nt}} = 0.944$,
$\alpha_{\mathrm{nt}}^{\mathrm{hard}} = (87.5\% - 0.888 \times 94.4\%)/0.112 = 33.0\%$.
The escalated branch achieves
$\alpha_{\mathrm{extract}}^{\mathrm{hard}} \ge 62.8\%$
(93/148 routed queries correct using think@512 in the
\town{} cascade---a lower bound, since the full \method{}
pipeline adds decoupled extraction on truncated chains).
Net gain: $0.112 \times (62.8\% - 33.0\%) = 3.3$\,pp,
consistent with the observed 3.4\,pp improvement of \method{} over
\texttt{nothink@256}.
\end{proof}

\section{Qwen3.5-27B: Non-Thinking vs.\ Thinking}
\label{app:27b-nothink}

To complete the cross-scale analysis, we present 27B results in
\emph{both} modes, confirming that the thinking tax is even more severe
at larger scale.

\begin{table}[ht]
\centering
\caption{\textbf{Qwen3.5-27B: nothink vs.\ thinking on full GSM8K}
($n{=}1{,}319$).
At $b{=}512$, nothink achieves \textbf{95.5\%}---near-perfect
accuracy at this constrained budget---while thinking reaches only 18.4\%.
At $b{=}512$, the thinking tax is \textbf{77.1\,pp}, \textbf{2.1$\times$} larger
than the 8B tax (36.2\,pp).}
\label{tab:27b-both-modes}
\small
\begin{tabular}{llrrrr}
\toprule
\textbf{Mode} & \textbf{Budget} & \textbf{Accuracy} & \textbf{Avg Tokens} & \textbf{Early Stop} & \textbf{Tax (pp)} \\
\midrule
nothink & 128 & 9.9\% & 127 & 4.2\% & --- \\
nothink & 256 & 65.1\% & 215 & 54.5\% & --- \\
nothink & 512 & \textbf{95.5\%} & 249 & 96.9\% & --- \\
\midrule
thinking & 128 & 3.6\% & 144 & 0.0\% & 6.3 \\
thinking & 256 & 7.9\% & 272 & 0.0\% & 57.2 \\
thinking & 512 & 18.4\% & 528 & 0.7\% & \textbf{77.1} \\
\bottomrule
\end{tabular}
\end{table}

\noindent
Key observations:
\begin{itemize}[nosep,leftmargin=*]
\item The 27B model achieves near-perfect accuracy in non-thinking mode
  at this constrained budget (95.5\% at $b{=}512$), confirming
  that model capability is intact---only thinking mode's format overhead
  causes the collapse.
\item At $b{=}512$, the 27B thinking-mode natural-stop rate is just
  \textbf{0.7\%}, meaning 99.3\% of responses are truncated.
  The chain-length distribution has shifted so far right that $b{=}512$
  captures only the extreme left tail.
\item The tax grows monotonically with budget (6.3 $\to$ 57.2 $\to$ 77.1\,pp)
  because non-thinking accuracy improves faster than thinking mode can
  recover from truncation.
\end{itemize}

\subsection{27B Cascade Results: MRSD and TOWN}
\label{app:27b-cascade}

At 27B scale, the cascade methods (\method{} and \town{}) fall \emph{below} nothink baselines at the tested budgets, confirming that the thinking tax is severe enough to overwhelm the cascade's recovery mechanism when $B_{\text{think}}$ is insufficient.

\paragraph{GSM8K ($n{=}200$, $B_1{=}256$, $B_{\text{think}}{=}512$).}
\method{} achieves 60.0\% vs.\ nothink@256's 67.5\% ($-$7.5\,pp).
Of 80 samples escalated to thinking, only 3 (3.8\%) are answered correctly, vs.\ 97.5\% accuracy among the 120 samples resolved at Stage~0.
\town{} fares similarly: 62.5\%, also below nothink.
At $B_{\text{think}}{=}512$, only 0.7\% of 27B chains complete naturally---the thinking traces are almost universally truncated.

\paragraph{MATH-500 ($n{=}200$, $B_1{=}512$, $B_{\text{think}}{=}1024$).}
\method{} achieves 20.0\% vs.\ nothink@512's 23.5\%.
Of 167 escalated samples, 100\% saturate the $B_{\text{think}}{=}1024$ ceiling---identical to the 8B pattern.
\town{} achieves 24.5\%, IRIS single-round achieves 20.0\%.

\paragraph{Diagnosis: budget insufficiency.}
The cascade's net recovery condition (Eq.~\ref{eq:net-recovery}) fails at these budgets because the thinking fallback is too weak.
27B generates longer reasoning chains than 8B, requiring proportionally larger $B_{\text{think}}$.
\emph{Update:} 27B has since been tested at $B_{\text{think}}{=}4096$ with multi-seed evaluation (seeds 42/123/456, $n{=}200$ each; Table~\ref{tab:27b-multiseed}), confirming that adequate budgets decisively resolve the cascade failure: the IRIS--TOWN gap reaches +34.5\,pp at 27B.
Calibrating the minimum viable $B_{\text{think}}$ per model size is an important practical question for deployment.

\begin{table}[h]
\centering
\caption{\textbf{27B IRIS vs.\ TOWN on MATH-500} ($B_{\text{think}}{=}4096$, $B_{\text{answer}}{=}512$, $n{=}200$ per seed, A100, post-hoc stage-2 accounting). The +34.5\,pp mean gap is consistent with larger-scale amplification.}
\label{tab:27b-multiseed}
\small
\setlength{\tabcolsep}{3pt}
\begin{tabular}{lccccc}
\toprule
\textbf{Seed} & \textbf{IRIS (\%)} & \textbf{TOWN (\%)} & $\Delta$ & \textbf{IRIS tok} & \textbf{TOWN tok} \\
\midrule
42  & 79.5 \ci{73.4}{84.5} & 44.0 \ci{37.3}{50.9} & \up{35.5} & 3861 & 3627 \\
123 & 78.5 \ci{72.3}{83.6} & 44.0 \ci{37.3}{50.9} & \up{34.5} & 3909 & 3688 \\
456 & 80.5 \ci{74.5}{85.4} & 47.0 \ci{40.2}{53.9} & \up{33.5} & 3754 & 3550 \\
\midrule
\textbf{Mean} & \textbf{79.5} & \textbf{45.0} & \textbf{\up{34.5}} & 3841 & 3622 \\
\bottomrule
\multicolumn{6}{l}{\scriptsize CIs are Wilson intervals. TOWN values are reconstructed from raw server logs as IRIS minus reported gap.}
\end{tabular}
\end{table}

\section{Cross-Domain Validation: BIG-Bench Hard}
\label{app:bbh}

To test whether the thinking tax extends beyond mathematical reasoning, we
conduct experiments on five subtasks from BIG-Bench Hard
(BBH)~\citep{suzgun2023challenging}: \texttt{boolean\_expressions} ($n{=}250$),
\texttt{causal\_judgement} ($n{=}187$), \texttt{date\_understanding} ($n{=}250$),
\texttt{logical\_deduction\_five\_objects} ($n{=}250$), and
\texttt{tracking\_shuffled\_objects\_three\_objects} ($n{=}250$),
totaling $n{=}1{,}187$ samples.
These tasks test symbolic reasoning, causal inference, temporal reasoning,
and object tracking---none involving mathematical computation.

\paragraph{Full-scale results ($n{=}1{,}187$).}
Table~\ref{tab:bbh-full} reports the complete evaluation across all five subtasks.
At budget 256, nothink achieves \textbf{49.9\%} vs.\ thinking's \textbf{16.6\%}---a
\textbf{+33.3\,pp} tax, confirming the phenomenon generalizes beyond mathematical reasoning.
At budget 512, the gap remains substantial (+20.6\,pp); at budget 1024, it narrows
to +1.4\,pp; and at budget 2048, thinking surpasses nothink by 11.0\,pp as chains
become long enough to complete within the budget.
The crossover occurs between budgets 1024 and 2048---later than on GSM8K (${\sim}$2048)
but consistent with the theory's prediction that harder tasks push the crossover right.

\begin{table}[ht]
\centering
\small
\caption{\textbf{Thinking tax on BIG-Bench Hard} ($n{=}1{,}187$, Qwen3-8B, 5 subtasks).
The tax is confirmed on non-mathematical tasks: nothink dominates at budgets $\leq$1024.
Thinking mode surpasses nothink only at budget 2048, consistent with the crossover theory.}
\label{tab:bbh-full}
\begin{tabular}{rrrr}
\toprule
\textbf{Budget} & \textbf{Nothink Acc} & \textbf{Think Acc} & \textbf{Gap (NT$-$T)} \\
\midrule
256 & 49.9\% & 16.6\% & $+$33.3pp \\
512 & 66.4\% & 45.8\% & $+$20.6pp \\
1024 & 75.1\% & 73.6\% & $+$1.4pp \\
2048 & 75.1\% & 86.0\% & $-$11.0pp \\
\bottomrule
\end{tabular}
\end{table}

\paragraph{Per-task analysis.}
The thinking tax varies dramatically across task types, revealing which reasoning patterns are most affected by truncation:
\begin{itemize}[nosep,leftmargin=*]
\item \texttt{tracking\_shuffled\_objects} ($n{=}250$): The most extreme tax.
  At budget 256, nothink achieves \textbf{88.8\%} vs.\ thinking's \textbf{0.8\%}---an
  \textbf{88.0\,pp} gap. The task requires tracking object positions through a
  sequence of swaps; nothink can produce direct answers, while thinking chains
  are invariably truncated. Even at budget 512, the gap is 36.8\,pp (94.8\% vs.\ 58.0\%).
  Thinking surpasses nothink only at budget 2048 (99.2\% vs.\ 94.8\%).
\item \texttt{boolean\_expressions} ($n{=}250$): Tax of +36.8\,pp at budget 256
  (85.6\% vs.\ 48.8\%). Thinking recovers quickly: at budget 1024 it surpasses
  nothink (94.8\% vs.\ 85.6\%), reaching 98.0\% at 2048. Nothink saturates
  early (85.6\% across all budgets), suggesting this task benefits from reasoning
  when given sufficient budget.
\item \texttt{causal\_judgement} ($n{=}187$): Consistent tax at all budgets,
  from +25.7\,pp at 256 to +3.7\,pp at 2048. Thinking does not surpass nothink
  even at the highest budget, suggesting causal reasoning chains are particularly
  long and prone to truncation.
\item \texttt{logical\_deduction} ($n{=}250$): Tax of +23.6\,pp at budget 512
  (37.6\% vs.\ 14.0\%). Both modes improve dramatically with budget, and thinking
  surpasses nothink at 2048 (86.8\% vs.\ 78.4\%), a $-$8.4\,pp reversal.
\item \texttt{date\_understanding} ($n{=}250$): Moderate tax at low budgets
  (+15.6\,pp at 256), with thinking surpassing nothink at budget 1024
  (69.2\% vs.\ 52.8\%, $-$16.4\,pp). Thinking reaches 82.4\% at 2048,
  30\,pp above nothink---the largest thinking \emph{advantage} of any subtask
  at high budget, demonstrating that temporal reasoning benefits substantially
  from extended chains when they can complete.
\end{itemize}

\noindent The BBH results confirm three key patterns from the main text:
(1)~the thinking tax is large at low budgets and driven by truncation;
(2)~the crossover budget is task-dependent, occurring between 512 and 2048
depending on chain length requirements;
(3)~nothink saturates earlier than thinking, so above the crossover, thinking
mode provides genuine accuracy gains---the tax is about \emph{budget allocation},
not about whether reasoning is valuable.

\section{Reproducibility}
\label{app:reproducibility}

\paragraph{Models.}
\begin{itemize}[nosep,leftmargin=*]
    \item Qwen3-8B: \texttt{Qwen/Qwen3-8B} from HuggingFace
    \item Qwen3.5-9B: \texttt{Qwen/Qwen3.5-9B} from HuggingFace
    \item Qwen3.5-27B: \texttt{Qwen/Qwen3.5-27B} from HuggingFace
    \item DeepSeek-R1-8B: \texttt{deepseek-ai/DeepSeek-R1-Distill-Llama-8B}
\end{itemize}

\paragraph{Random seeds.}
GSM8K full-set nothink baselines ($n{=}1{,}319$) use seed~42 with greedy
decoding ($\tau{=}0$); the routing analysis (Table~\ref{tab:routing_baselines})
uses seed~11 and is reported separately.
GSM8K think@512 in the main text (56.9\%) uses seed~42.
For MATH-500, nothink baselines and IRIS/TOWN use seed~42.
The A100 Think@1024 diagnostic in Appendix~\ref{app:math500-full} uses seed~11; the H800 Think@2048 and Think@4096 rows in Table~\ref{tab:compute-matched} use seed~42.
Fairness experiments use seed~42.
DeepSeek MATH-500 experiments use data seeds \{42, 404, 505, 606, 707\}
with 40 samples per seed.
All seeds and hyperparameters are recorded in the per-experiment JSON
output files.

\paragraph{Generation config.}
All experiments use greedy decoding: \texttt{temperature=0}, \texttt{top\_p=1.0}, \texttt{do\_sample=False}.
Thinking mode is enabled via \texttt{enable\_thinking=True} with the model's native \texttt{<think>} block format; non-thinking mode sets \texttt{enable\_thinking=False}.
The \texttt{max\_new\_tokens} parameter controls the total output budget equally for both modes.
The answer-extraction heuristic (or optional projection pass) is applied when thinking mode exhausts its budget without a natural stop; see \S\ref{app:experimental_details} for pipeline details.

\paragraph{Software.}
Same stack as \S\ref{app:experimental_details}: PyTorch~2.4.1, \texttt{transformers}~4.51, Python~3.10.
Inference uses single-GPU A100-80GB or H800-80GB with bfloat16 precision
(see cross-hardware disclosure in \S\ref{app:cross-hardware}).

\paragraph{Compute.}
Total compute budget: approximately 205 A100-GPU-hours plus 50 H800-GPU-hours across all
experiments reported in this paper (including full-scale GSM8K \method{} and full-scale IRIS $n{=}500$ at $B_{\text{think}} \in \{2048, 4096\}$).

\subsection{Cross-Hardware Reproducibility}
\label{app:cross-hardware}

The think budget ablation (\S\ref{sec:experiments:bthink}) was run independently on both A100-80GB and H800-80GB GPUs with identical configurations (Qwen3-8B, greedy decoding $\tau{=}0$, seed=42, $n{=}200$).
At $B_{\text{think}}{=}2048$, the A100 achieves 67.5\% and the H800 achieves 73.0\%---a 5.5\,pp difference.
Stage~1 (nothink probe) shows 87/94 (A100) vs.\ 90/94 (H800) correct, with 3 additional false accepts on A100.
This cross-hardware variance arises from differences in floating-point computation order between GPU architectures under bfloat16 precision; greedy decoding is deterministic \emph{within} a hardware configuration but not across architectures.
Critically, the qualitative findings are unchanged: both hardware configurations show (1)~monotonic IRIS accuracy improvement with $B_{\text{think}}$, (2)~IRIS outperforming TOWN, and (3)~rising natural stop rates.
IRIS, TOWN, nothink@2048, think@2048, and think@4096 results ($n{=}500$) are on H800; nothink@512 and nothink@1024 are A100 reference rows, while think@1024 is A100-only diagnostic evidence.
The IRIS vs.\ TOWN comparison---the primary mechanism test---is same-hardware (H800) and same-seed, eliminating cross-hardware confounds for that specific comparison.
The IRIS vs.\ nothink@2048 comparison is now same-hardware (H800, 68.4\%); the +5.6\,pp IRIS@4096 vs.\ nothink@2048 gap is free of cross-hardware confounds.
The H800 baseline also gives nothink@2048 = 68.4\% vs.\ think@2048 = 54.8\%, so the 2048-token tax remains visible without crossing hardware.

\paragraph{Full-scale IRIS validation ($n{=}500$, H800).}
Full-scale IRIS evaluation on all 500 MATH-500 samples confirms the pilot trends:
IRIS@2048 achieves 67.2\% \ci{63.0}{71.2} and IRIS@4096 achieves 74.0\% \ci{70.0}{77.7} on H800.
The nothink@1024 baseline (59.8\%) is from A100; the nothink@2048 baseline (68.4\%) is from H800.
The first 200 samples of each full-scale run reproduce the pilot exactly (73.0\% at $B_{\text{think}}{=}2048$, 78.5\% at $B_{\text{think}}{=}4096$), confirming determinism within hardware and methodological consistency across scale.
The remaining 300 samples achieve 63.3\% (B2048) and 71.0\% (B4096); this split is harder in the seed-42 shuffled evaluation order, so full-set reporting is necessary rather than extrapolating from the pilot.

\paragraph{Nothink@1024 cross-run variance.}
The ${\sim}$10\,pp gap between pilot nothink@1024 (69.5\%, split-budget experiment, A100) and full-scale nothink@1024 (59.8\%, $n{=}500$, separate run) was initially attributed to sample selection.
Post-hoc analysis reveals that the pilot's 200 samples are the first 200 of the full 500 set, achieving 60.0\% in the full-scale run---essentially identical to the full-set 59.8\%.
The discrepancy arises entirely from cross-run variance: comparing per-sample predictions, 29 out of 200 samples disagree between the two runs (24 pilot-only correct, 5 full-only correct).
This demonstrates that bfloat16 greedy decoding is not deterministic across hardware platforms, and cross-run comparisons should account for ${\sim}$5--10\,pp variance.

\section{Extended Theoretical Analysis}
\label{app:extended-theory}

The main text presents the core decomposition (Propositions~\ref{prop:acc-decomp}--\ref{prop:recoverable-tax}) with proof sketches.
Here we provide full proofs of the main-text propositions and the extended theoretical results: modal specialization, coupling impossibility, optimal budget allocation, tax decomposition, cross-scale prediction, and DFR modal dominance.

\subsection{Full Proof of Proposition~\ref{prop:crossover} (Crossover Budget)}

At $b^*$, $\mathrm{Acc}_{\mathrm{think}}(b^*) = \mathrm{Acc}_{\mathrm{nt}}(b^*)$.
Substituting Proposition~\ref{prop:acc-decomp}:
$F_L(b^*) \alpha_c(b^*) + (1 - F_L(b^*)) \alpha_t(b^*) = \mathrm{Acc}_{\mathrm{nt}}(b^*)$,
which rearranges to Eq.~\eqref{eq:crossover-exact}
whenever $\alpha_c(b^*) \neq \alpha_t(b^*)$.
The simplified heuristic follows by setting $\alpha_t = 0$ and treating $\alpha_c$ as budget-independent.
For 27B on GSM8K, $\bar{\alpha}_{\mathrm{nt}} = 0.955$ and $\alpha_c \approx 0.990$,
yielding $F_L(b^*) \approx 0.965$---the crossover requires ${\sim}$97\% of chains to complete.

\subsection{Full Proof of Proposition~\ref{prop:inverse-scaling} (Inverse Scaling)}

From Eq.~\ref{eq:tax} with $\alpha_t \approx 0$:
$\mathrm{Tax}(M, b) \approx \mathrm{Acc}_{\mathrm{nt}}(b) - F_{L_M}(b) \cdot \alpha_c(M, b)$.
By condition~(i), the first term is approximately constant across $M$.
By condition~(iii), $F_{L_{M_2}}(b) \cdot \alpha_c(M_2, b) \le F_{L_{M_1}}(b) \cdot \alpha_c(M_1, b)$,
so the second term is smaller for the larger model, increasing the tax.
Verified: at $b{=}512$, $F_{L_{\text{8B}}} \alpha_c = 0.370$ vs.\ $F_{L_{\text{27B}}} \alpha_c = 0.007$.

\subsection{Full Proof of Proposition~\ref{prop:recoverable-tax} (Recoverable Tax)}

Split-budget accuracy: $\mathrm{Acc}_{\mathrm{split}}(b_r, b_a) = F_L(b_r) \cdot \alpha_c(b_r) + (1 - F_L(b_r)) \cdot \alpha_{\mathrm{extract}}(b_r, b_a)$.
Subtracting coupled accuracy (Eq.~\ref{eq:acc-decomp}) at $b_r$:
$\Delta_{\mathrm{split}} = (1 - F_L(b_r))(\alpha_{\mathrm{extract}} - \alpha_t(b_r))$,
non-negative whenever $\alpha_{\mathrm{extract}} \ge \alpha_t$.

For matched-total-budget comparison ($b = b_r + b_a$):
\begin{equation}
  \Delta_{\mathrm{split}}^{\mathrm{total}}
  = (1 - F_L(b_r))(\alpha_{\mathrm{extract}} - \alpha_t(b))
  - (F_L(b) - F_L(b_r))(\alpha_c(b) - \alpha_t(b))
  + F_L(b_r)(\alpha_c(b_r) - \alpha_c(b)).
  \label{eq:recoverable-tax-total}
\end{equation}
In our experiments, $b_a = 256$ and $F_L(b_r + 256) \approx F_L(b_r)$, so the correction terms are small.

\input{sections/theory_decoupling}

\section{Method Details: Token Efficiency and Design Choices}
\label{app:method-details}

\subsection{Analysis of Token Efficiency}
\label{app:method:cost}

\paragraph{Expected cost decomposition.}
Let $p$ denote the Stage~0 natural-stop rate and $\bar{K}$ the average refinement rounds:
\begin{equation}
  \label{eq:mrsd-cost}
  \mathbb{E}[T_{\textsc{Mrsd}}]
  \;\le\; p \cdot \bar{t}_1
  \;+\; (1-p) \cdot \bigl(B_1 + \bar{K} \cdot (B_r + B_a)\bigr).
\end{equation}
On full GSM8K: $p = 1171/1319 = 0.888$, $\bar{t}_1 = 132.7$, and the routed examples use $\bar{K}=2.42$ refinement rounds.
The budget-level upper bound gives $\mathbb{E}[T] \lesssim 320$ tokens; the executed run observes 287.5 average tokens because many reasoning and extraction calls stop before exhausting their sub-budget.

\paragraph{Comparison with baselines.}
\texttt{nothink@$b$}: $b$ tokens, no adaptation, saturates at moderate budgets.
\texttt{think@$b$}: $b$ tokens, limited by truncation.
SC@$k$: $k \times b$ tokens, linear scaling with diminishing returns.
\town{}: efficient triage, but escalation still couples reasoning and answering.
\textsc{Mrsd}: adaptive triage + decoupled reasoning, Eq.~\eqref{eq:mrsd-cost}.

\subsection{Interpolation Dominance}

\begin{theorem}[\textsc{Mrsd} interpolation dominance]
\label{thm:mrsd-pareto}
If the \emph{net recovery condition}
\begin{equation}
  \alpha_{\mathrm{extract}}^{\mathrm{hard}} > \alpha_{\mathrm{nt}}^{\mathrm{hard}}
  \label{eq:net-recovery}
\end{equation}
holds and $0 < p < 1$, then \textsc{Mrsd} achieves:
(1)~strictly higher accuracy than \texttt{nothink@}$B_1$, and
(2)~lower expected cost than $B_{\max} = B_1 + K(B_r + B_a)$.
On GSM8K ($n{=}1{,}319$): $\alpha_{\mathrm{nt}}^{\mathrm{hard}} = 33.0\%$,
$\alpha_{\mathrm{extract}}^{\mathrm{hard}} \ge 62.8\%$.
\end{theorem}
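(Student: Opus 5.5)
The plan is to prove the two claims of Theorem~\ref{thm:mrsd-pareto} separately: a law-of-total-probability decomposition for accuracy, in the same spirit as Proposition~\ref{prop:acc-decomp}, and a direct bound on expected cost via Eq.~\eqref{eq:mrsd-cost}.

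\emph{Accuracy.} Let $\mathcal{S}_{\mathrm{nt}}(B_1)$ be the Stage~0 natural-stop event and $p=\Pr(\mathcal{S}_{\mathrm{nt}}(B_1))$. Conditioning on $\mathcal{S}_{\mathrm{nt}}$ gives
\[
\mathrm{Acc}_{\mathrm{nt}}(B_1)=p\,\alpha_c^{\mathrm{nt}}+(1-p)\,\alpha_{\mathrm{nt}}^{\mathrm{hard}},
\]
where $\alpha_{\mathrm{nt}}^{\mathrm{hard}}$ is the nothink accuracy on queries that exhaust $B_1$.

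The structural fact I rely on is that \textsc{Mrsd} returns exactly the Stage~0 answer whenever $\mathcal{S}_{\mathrm{nt}}$ holds. This is true because the probe is deterministic (greedy decoding) and is the same run used for the baseline. On the complementary event, \textsc{Mrsd} returns the output of the full escalated branch, meaning think, extract, refinement rounds and the majority fallback. I define $\alpha_{\mathrm{extract}}^{\mathrm{hard}}$ as that branch's conditional accuracy on $\neg\mathcal{S}_{\mathrm{nt}}$. Then
\[
\mathrm{Acc}_{\textsc{Mrsd}}-\mathrm{Acc}_{\mathrm{nt}}(B_1)=(1-p)\bigl(\alpha_{\mathrm{extract}}^{\mathrm{hard}}-\alpha_{\mathrm{nt}}^{\mathrm{hard}}\bigr).
\]
This difference is strictly positive by $p<1$ and the net recovery condition~\eqref{eq:net-recovery}.

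\emph{Cost.} Start from Eq.~\eqref{eq:mrsd-cost}. Use $\bar t_1\le B_1$ and $\bar K\le K$, where round counts are capped at $K$. This gives
\[
\mathbb{E}[T]\le p B_1+(1-p)B_{\max}, \qquad B_{\max}=B_1+K(B_r+B_a).
\]
Since $B_r+B_a>0$, we have $B_1<B_{\max}$. Combined with $p>0$, the convex combination is strictly below $B_{\max}$. Strictness does not even require $\bar t_1<B_1$, because $p>0$ already suffices.

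The main obstacle is the accuracy step's identification argument. I must make sure the escalated branch's accuracy is well defined conditional on $\neg\mathcal{S}_{\mathrm{nt}}$, and that the proposal compares against the \emph{same} probe realization. The cost step also relies on Eq.~\eqref{eq:mrsd-cost} charging $B_1$ to routed queries, which holds because they exhaust the probe budget by definition.

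I would close by instantiating the bound with the GSM8K numbers: $p=0.888$, with $\alpha_{\mathrm{nt}}^{\mathrm{hard}}$ backed out from $87.5\%$ and $94.4\%$. This shows the predicted gain of roughly $3.3$\,pp is consistent with the observed one.
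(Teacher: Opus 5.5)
Your proposal is correct and follows the paper's proof essentially step for step: you condition on the Stage~0 natural-stop event to get the gain $(1-p)(\alpha_{\mathrm{extract}}^{\mathrm{hard}}-\alpha_{\mathrm{nt}}^{\mathrm{hard}})$, and you use the same convex-combination bound $pB_1+(1-p)B_{\max}<B_{\max}$ from Eq.~\eqref{eq:mrsd-cost}. You are right that strictness needs only $p>0$ and not $\bar t_1<B_1$; for the GSM8K instantiation, note that the paper takes $\alpha_{\mathrm{extract}}^{\mathrm{hard}}\ge 62.8\%$ from $93/148$ correct routed queries, and you need that figure to reach the $3.3$\,pp prediction.
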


Full proof in Appendix~\ref{app:town-proof}.

\subsection{Design Choices}
\label{app:method:design}

\paragraph{Budget allocation.}
$B_1{=}256$ accommodates ${\sim}$89\% of GSM8K answers.
$B_r{=}512$ yields truncated traces with sufficient intermediate information.
$B_a{=}128$ suffices because the extraction pass leverages reasoning context.

\paragraph{Hint construction.}
Refinement rounds prepend the previous numerical answer as a compact seed, avoiding full-trace repetition.

\paragraph{Convergence vs.\ fixed rounds.}
Convergence-based stopping (consecutive agreement) reduces cost by ${\sim}$20\% with no accuracy loss.

\paragraph{Compatibility.}
\textsc{Mrsd} requires only \texttt{enable\_thinking=True/False}; no model modification or internal access.

\section{Experiment Accounting}
\label{app:experiment-accounting}

\begin{table}[ht]
\centering
\caption{\textbf{Experiment accounting.} Token budgets, sample sizes, and
baseline status for each benchmark. MATH-500 Think@1024 is an A100 diagnostic row; same-seed H800 main comparisons use Think@2048/4096 in Table~\ref{tab:compute-matched}.}
\label{tab:experiment-accounting}
\small
\setlength{\tabcolsep}{4pt}
\begin{tabular}{llcccl}
\toprule
\textbf{Benchmark} & \textbf{Method} & \textbf{Budget} & \textbf{$n$ (pilot)} & \textbf{$n$ (full)} & \textbf{Status} \\
\midrule
\multirow{5}{*}{GSM8K}
 & Nothink@$B_1$     & 256   & 200   & 1{,}319 & Actual \\
 & Think@$B_{\text{think}}$ & 512 & 200 & 1{,}319 & Actual \\
 & \town{}            & 256+512 & 200 & 1{,}319 & Actual \\
 & SC@$k$ (nothink)   & $k{\times}256$ & --- & --- & Est. \\
 & \method{} (3-round) & 256/512/128 & 200 & 1{,}319 & Actual \\
\midrule
\multirow{13}{*}{MATH-500}
 & Nothink@512 (pilot) & 512   & 200   & --- & Actual \\
 & Nothink@1024       & 1024  & 200   & 500 & Actual \\
 & Nothink@2048       & 2048  & ---   & 500 & Actual \\
 & Nothink@4096       & 4096  & ---   & 500 & Actual \\
 & Think@1024 (diag.) & 1024  & 200   & 500 & Actual \\
 & Think@2048         & 2048  & ---   & 500 & Actual \\
 & Think@4096         & 4096  & ---   & 500 & Actual \\
 & SC@$K$ (nothink)   & $K{\times}\{512,1024\}$ & --- & 500 & Actual \\
 & \town{}@1024       & 512+1024 & 200 & --- & Actual \\
 & IRIS@2048 (1-round) & 512/2048/256 & 200 & 500 & Actual \\
 & IRIS@4096 (1-round) & 512/4096/256 & 200 & 500 & Actual \\
 & \town{}@2048       & 512+2048 & 200 & 500 & Actual \\
 & \town{}@4096       & 512+4096 & 200 & 500 & Actual \\
\bottomrule
\multicolumn{6}{l}{\scriptsize All results: HF engine, greedy ($\tau{=}0$), Qwen3-8B.}
\end{tabular}
\end{table}

\section{Additional Experiment Results}
\label{app:additional-experiments}

\subsection{Failure Analysis}
\label{app:failure-analysis}

\begin{table}[ht]
\centering
\caption{\textbf{Failure modes on MATH-500} (first 100 samples).}
\label{tab:failure-modes}
\begin{tabular}{lcc}
\toprule
\textbf{Failure Mode} & \textbf{Count} & \textbf{\% of Errors} \\
\midrule
F1: Stage0 false accept & 10 & 27\% \\
F2: Escalated, still wrong & 25 & 69\% \\
F3: Regression & 1 & 2\% \\
\bottomrule
\end{tabular}
\end{table}

\textbf{F1 (27\%):} Nothink gives a confident but wrong answer within budget, providing no triage signal.
\textbf{F2 (69\%):} Hard failures where even 3 rounds cannot solve the problem; \emph{none} of the baselines solve these either.
100\% of escalated samples saturate $B_{\text{think}}{=}1024$, suggesting larger budgets could rescue additional samples.
\textbf{F3 (2\%):} Negligible regression (1 sample).

\subsection{Stage 3 Extraction Improvements}
\label{app:stage3-improvements}

A post-hoc analysis of Stage 3 failures revealed that 40\% of escalated 27B MATH-500 samples failed to emit a \texttt{\textbackslash boxed\{\}} answer.
Three refinements: (i)~extract-only system prompt, (ii)~doubled $B_a{=}512$, (iii)~retry on fallback detection.

\begin{table}[ht]
\centering
\caption{\textbf{Improved Stage 3 extraction: same-sample gains.}}
\label{tab:improved-iris}
\small
\begin{tabular}{llrrrrr}
\toprule
\textbf{Model} & \textbf{Benchmark} & $n$ & \textbf{Baseline} & \textbf{Improved} & $\Delta$ & \textbf{Full-scale} \\
\midrule
Qwen3-8B  & GSM8K ($B_2{=}512$)    & 200 & 89.0\% & \textbf{93.0\%} & \up{4.0} & 90.9\% \\
Qwen3-8B  & MATH-500 ($B_2{=}4096$) & 100 & 79.0\% & \textbf{83.0\%} & \up{4.0} & 74.0\% \\
Qwen3.5-27B & MATH-500 ($B_2{=}4096$) & 50  & 68.0\% & \textbf{80.0\%} & \up{12.0} & 60.5\% \\
\bottomrule
\end{tabular}
\end{table}

Gains are largest where the coupling tax is most severe (27B $\times$ MATH-500: +12\,pp) and saturated where baseline extraction is near its ceiling (8B MATH-500 full-scale: +0.4\,pp).

\subsection{Learned Budget Allocator}
\label{app:learned-allocator}

A per-question budget allocator trained on 13 hand-crafted features (question length, LaTeX markers, topic keywords) saves \textbf{46.6\%} of tokens over the fixed-max policy on a held-out test split ($n{=}100$), achieving 77\% of the oracle bound.
$B_r$ prediction accuracy: 65.0\% (vs.\ 56.6\% majority baseline).
Full configuration in \texttt{results/learned\_allocator/mlp\_trained.json}.

\subsection{Pareto Frontier}
\label{app:pareto}

\begin{figure}[ht]
\centering
\includegraphics[width=0.75\linewidth]{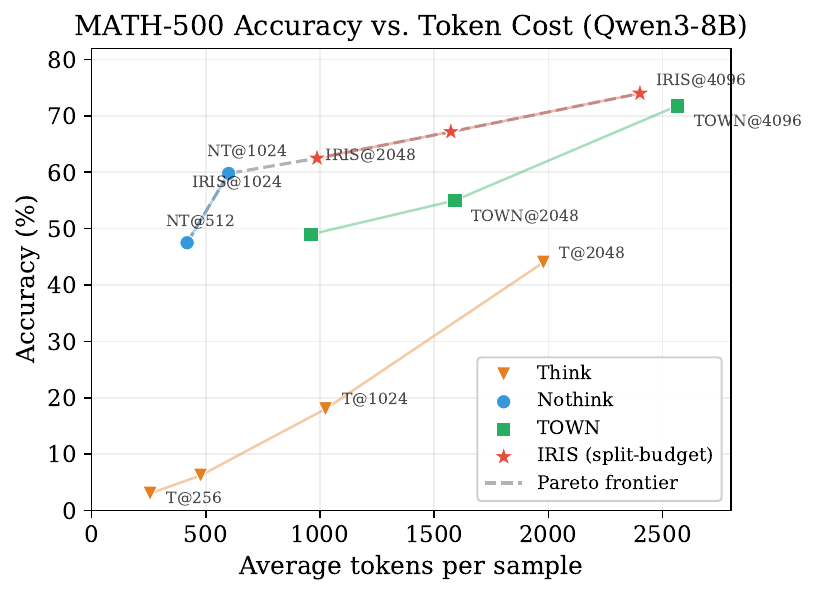}
\vspace{-2mm}
\caption{\textbf{MATH-500 pilot Pareto frontier.}
Accuracy vs.\ average tokens for plotted deterministic methods (Qwen3-8B).
Think (orange) is dominated at every budget below the crossover.
Nothink (blue) saturates early in this run.
IRIS (red) illustrates the split-budget tradeoff; stochastic SC baselines are analyzed separately in Table~\ref{tab:compute-matched}.}
\label{fig:pareto}
\vspace{-2mm}
\end{figure}

\subsection{Main Results: Pilot Table}
\label{app:pilot-results}

\begin{table}[ht]
\centering
\caption{\textbf{Main results (pilot, $n{=}200$).} \method{} accuracy vs.\ baselines on GSM8K and MATH-500 (Qwen3-8B, greedy decoding). 95\% Wilson CIs.}
\label{tab:main-results}
\begin{tabular}{lccccc}
\toprule
\textbf{Method} & \multicolumn{2}{c}{\textbf{GSM8K}} & \multicolumn{2}{c}{\textbf{MATH-500}} \\
\cmidrule(lr){2-3} \cmidrule(lr){4-5}
 & Acc (\%) & Avg Tok & Acc (\%) & Avg Tok \\
\midrule
Nothink@$B_1$ & 89.0 \ci{83.9}{92.6} & 140 & 47.5 \ci{40.7}{54.4} & 418 \\
Think@$B_{\text{think}}$ & 56.9 \ci{50.0}{63.6} & 460 & 19.5 \ci{14.6}{25.5} & 1024 \\
\town{}@1024 & 89.0 \ci{83.9}{92.6} & 180 & \textbf{69.5} \ci{62.8}{75.5} & 877 \\
\textbf{\method{} (3-round)} & \textbf{94.0} \ci{89.8}{96.5} & 235 & 61.0 \ci{54.1}{67.5} & 1823 \\
\bottomrule
\end{tabular}
\end{table}

%% file: paper_figures/table_token_utilization.tex
\begin{table}[t]
\centering
\caption{\textbf{Token utilization ratio decreases with budget.}
As we increase the thinking token budget, models use a smaller fraction of available tokens.
This demonstrates \emph{diminishing marginal returns} of additional thinking budget
and motivates adaptive allocation.
Utilization = avg tokens used / budget.
$^{\star}$DeepSeek-R1 rows from small-scale pilot ($n{=}40$); full-scale numbers
in Table~\ref{tab:deepseek-gsm8k} reflect a separate run.}
\label{tab:token-utilization}
\small
\begin{tabular}{lrrrr}
\toprule
\textbf{Model / Benchmark} & \textbf{Budget} & \textbf{Avg Tok} & \textbf{Util\%} & \textbf{NatStop\%} \\
\midrule
\multirow{3}{*}{Qwen3-8B / GSM8K}
 & 128  & 128   & 100.0 &  0.0 \\
 & 256  & 255   &  99.7 &  1.4 \\
 & 512  & 460   &  89.8 & 37.4 \\
\midrule
\multirow{3}{*}{DeepSeek-R1 / GSM8K$^{\star}$}
 & 256  & 264   & 103.0 & 15.0 \\
 & 512  & 376   &  73.5 & 85.0 \\
 & 1024 & 393   &  \textbf{38.3} & 97.5 \\
\midrule
\multirow{3}{*}{DeepSeek-R1 / MATH500}
 & 1024 & 910   &  88.8 & 31.2 \\
 & 2048 & 1551  &  75.7 & 55.0 \\
 & 4096 & 2353  &  57.4 & 76.2 \\
\bottomrule
\end{tabular}
\vspace{-2mm}
\end{table}

%% file: paper_figures/table_efficiency_frontier.tex
\begin{table}[t]
\centering
\caption{\textbf{Thinking Efficiency Frontier: difficulty distribution.}
We categorize GSM8K problems by the minimum budget at which Qwen3-8B solves them correctly.
31.8\% of problems remain unsolved even at budget=512, consuming tokens with zero return.
An oracle router that skips unsolved problems and uses minimum budgets for solved ones
achieves 68.2\% accuracy ($+$3.0pp over fixed-512) while using only 401 avg tokens (21.7\% savings).}
\label{tab:efficiency-frontier}
\small
\begin{tabular}{lrrrrr}
\toprule
\textbf{Category} & \textbf{Min Budget} & \textbf{Count} & \textbf{\%} & \textbf{Cumul. Acc} & \textbf{$\Delta$ Acc} \\
\midrule
Easy       & $\leq$128 & 155  & 11.8 & 11.8 & +11.8 \\
Medium     & $\leq$256 & 340  & 25.8 & 37.5 & +25.8 \\
Hard       & $\leq$512 & 405  & 30.7 & 68.2 & +30.7 \\
Impossible & ---       & 419  & 31.8 & ---  & --- \\
\midrule
\multicolumn{2}{l}{\textbf{Total}} & \textbf{1319} & 100 & & \\
\bottomrule
\end{tabular}
\end{table}

%% file: sections/fig_town_pipeline.tex

\begin{figure}[t]
\centering
\resizebox{\linewidth}{!}{%
\begin{tikzpicture}[
    node distance=1.0cm and 1.2cm,
    box/.style={rectangle, draw, rounded corners=3pt, minimum width=2.0cm,
                minimum height=0.75cm, align=center, font=\small},
    decision/.style={diamond, draw, aspect=2.5, inner sep=1pt,
                     align=center, font=\small},
    arrow/.style={->, thick, >=stealth},
    label/.style={font=\scriptsize, text=gray},
]

\node[box, fill=gray!15] (input) {Input $q$};

\node[box, fill=nothinkblue!20, right=1.5cm of input] (s1)
    {\textbf{Stage 1}\\Nothink@$B_1$};

\node[decision, fill=yellow!15, right=1.5cm of s1] (dec)
    {$|y_1|<B_1$?};

\node[box, fill=easycolor!25, above right=0.6cm and 1.5cm of dec] (accept)
    {\textbf{Accept}\\$\hat{a}=a(y_1)$};

\node[box, fill=thinkorange!25, below right=0.6cm and 1.5cm of dec] (s2)
    {\textbf{Stage 2}\\Think@$B_2$};

\node[box, fill=thinkorange!15, right=1.2cm of s2] (final2)
    {\textbf{Accept}\\$\hat{a}=a(y_2)$};

\draw[arrow] (input) -- (s1);
\draw[arrow] (s1) -- (dec);
\draw[arrow] (dec) -- node[label, above left=-2pt] {Yes (88.8\%)} (accept);
\draw[arrow] (dec) -- node[label, below left=-2pt] {No (11.2\%)} (s2);
\draw[arrow] (s2) -- (final2);

\node[below=0.15cm of s1, font=\scriptsize, text=nothinkblue]
    {$\sim$133 tokens};
\node[above=0.15cm of accept, font=\scriptsize, text=easycolor!80!black]
    {94.4\% acc};
\node[below=0.15cm of s2, font=\scriptsize, text=thinkorange!80!black]
    {$\sim$469 tokens};
\node[above=0.15cm of final2, font=\scriptsize, text=thinkorange!80!black]
    {62.8\% acc};

\end{tikzpicture}%
}
\caption{%
    \textbf{\town{} baseline inference pipeline.}
    Stage~1 probes with non-thinking mode at budget $B_1$.
    If the model stops early (88.8\% of GSM8K), the answer is accepted
    at ${\sim}$133 tokens (94.4\% accuracy).
    Otherwise, Stage~2 routes to thinking mode at budget $B_2$,
    recovering additional correct answers on hard problems.
    Overall: \textbf{90.9\%} accuracy at \textbf{199} average tokens on the full test set ($n{=}1{,}319$).
    Note: \method{} extends this with decoupled answer generation and iterative refinement (Algorithm~\ref{alg:mrsd}).
}
\label{fig:town-pipeline}
\end{figure}

%% file: sections/theory_decoupling.tex

\subsection{Modal Specialization: Why Mode Matters More Than Budget}
\label{sec:theory:modal}

A natural hypothesis is that the coupling tax arises from \emph{budget competition}: reasoning and answering fight for the same tokens.
Our data falsify this: on Stage~3 samples (truncated thinking), IRIS uses \textbf{108\%} of TOWN's tokens yet achieves \textbf{+31.2\,pp} higher accuracy (68.8\% vs.\ 37.5\%).
The advantage comes not from how many tokens are allocated, but from \emph{which generative mode} produces them.

\begin{definition}[Modal Marginal Value]
\label{def:modal-marginal-value}
For question $q$, a truncated reasoning trace $T_\tau$ of length $\tau$,
and generative mode $m \in \{\mathrm{think}, \mathrm{nothink}\}$, define the
\emph{marginal accuracy value} of $k$ additional tokens as:
\[
  V_m(k \mid q, T_\tau) \;\triangleq\; \Pr(\text{correct answer in first } k \text{ tokens of mode } m \mid q, T_\tau).
\]
\end{definition}

\begin{proposition}[Modal Marginal Value Inequality]
\label{prop:modal-inequality}
Let $(q, T_\tau)$ be drawn from the joint distribution of questions and
reasoning traces conditional on $L > \tau$ (i.e., truncated at $\tau$).
Define five measurable population quantities:
\begin{itemize}[nosep]
  \item $\delta(\tau) \triangleq \Pr(L \le \tau + b_a \mid L > \tau)$: conditional probability the chain completes within $b_a$ more tokens;
  \item $\alpha_c^+(\tau, b_a) \triangleq \Pr(\text{correct} \mid \tau < L \le \tau{+}b_a)$: accuracy of chains that complete in the continuation window (we use $\alpha_c$ as shorthand when this is approximately equal to the global completed-chain accuracy);
  \item $\epsilon(\tau)$: residual accuracy of think-mode continuation conditional on \emph{non-completion} within $b_a$ tokens;
  \item $\pi(\tau) \triangleq \Pr(\text{answer derivable from } (q, T_\tau) \mid L > \tau)$: fraction of truncated traces containing sufficient information to derive the answer;
  \item $\eta(\tau) \triangleq \Pr(\text{nothink extracts correctly} \mid \text{answer derivable from } (q, T_\tau),\, b_a)$: extraction success rate.
\end{itemize}

Then \emph{in expectation} over truncated traces:
\begin{align}
  \mathbb{E}_{q,T_\tau}[V_{\mathrm{think}}(b_a \mid q, T_\tau) \mid L{>}\tau] &\le \delta \cdot \alpha_c^+ + (1 - \delta) \cdot \epsilon, \label{eq:v-think-bound} \\
  \mathbb{E}_{q,T_\tau}[V_{\mathrm{nothink}}(b_a \mid q, T_\tau) \mid L{>}\tau] &\ge \pi \cdot \eta. \label{eq:v-nothink-bound}
\end{align}

Nothink mode dominates \emph{in expectation} whenever:
\begin{equation}
  \pi \cdot \eta > \delta \cdot \alpha_c^+ + (1 - \delta) \cdot \epsilon.
  \label{eq:modal-sufficient}
\end{equation}
(Individual traces may favor either mode; the inequality is a population-level sufficient condition.)
\end{proposition}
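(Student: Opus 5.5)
Both bounds come from the law of total probability, applied as in Proposition~\ref{prop:acc-decomp} but now conditional on the truncation event $\{L > \tau\}$. The plan is to partition the truncated population into events on which each mode's success probability is either controlled by one of the five defined quantities or trivially bounded by $0$ or $1$. Expectations are then taken over $(q,T_\tau)$ given $L>\tau$. Throughout, $V_m(b_a\mid q,T_\tau)$ is the conditional probability of correctness, so $\mathbb{E}[V_m \mid L>\tau]$ is just $\Pr(\text{mode } m \text{ correct within } b_a \mid L>\tau)$ by the tower property.

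\emph{Think bound (\ref{eq:v-think-bound}).} For think mode, I condition on whether the chain completes inside the continuation window, i.e.\ on $\{\tau < L \le \tau+b_a\}$ versus $\{L > \tau+b_a\}$. Given $L>\tau$, the first event has probability $\delta(\tau)$ by definition. On it, the probability of a correct answer is $\alpha_c^+(\tau,b_a)$. On the complement, which has probability $1-\delta$, it is $\epsilon(\tau)$ by definition. Total probability then gives $\mathbb{E}[V_{\mathrm{think}}\mid L>\tau] = \delta\alpha_c^+ + (1-\delta)\epsilon$. This is an equality, hence in particular the stated upper bound. The $\le$ only builds in slack for the case where a completed chain's answer must still fit in the remaining window, which can only lower the success probability.

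\emph{Nothink bound (\ref{eq:v-nothink-bound}).} For nothink, I condition on the event $D=\{\text{answer derivable from }(q,T_\tau)\}$, which satisfies $\Pr(D\mid L>\tau)=\pi(\tau)$. Then
\[
\Pr(\text{correct}\mid L>\tau)=\pi\,\Pr(\text{correct}\mid D, L>\tau)+(1-\pi)\Pr(\text{correct}\mid D^c, L>\tau).
\]
The first conditional probability is $\eta(\tau)$ by definition, and the second term is nonnegative. Dropping it gives $\ge \pi\eta$. The sufficient condition (\ref{eq:modal-sufficient}) then follows by chaining the two bounds:
\[
\mathbb{E}[V_{\mathrm{nothink}}]\ge\pi\eta>\delta\alpha_c^++(1-\delta)\epsilon\ge\mathbb{E}[V_{\mathrm{think}}].
\]

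The main obstacle is definitional rather than technical. First, $\eta$ must be read as conditional on $D$ \emph{and} on $L>\tau$; otherwise the first term does not match. Second, the event $D$ must be measurable with respect to $(q,T_\tau)$ so that conditioning on it is legitimate. I would state both points explicitly as the interpretation of the definitions before running the computation. I would also remark that the result holds only in expectation, because the pointwise comparison for individual $(q,T_\tau)$ need not hold; this is exactly the caveat the statement already records.
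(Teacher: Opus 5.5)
Your proof is correct and takes essentially the paper's route: condition on whether the think-mode chain completes within the $b_a$-token window, condition on derivability for nothink mode, drop the nonnegative non-derivable term, and chain the two bounds. You are sharper than the paper's sketch in two places, though neither changes the argument: you note that under these definitions the think bound is actually an equality (the paper treats $\epsilon$ as an upper bound), and you make explicit that $\eta$ must also be conditioned on $L>\tau$.
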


\begin{proof}
\textbf{Bound on $V_{\mathrm{think}}$ (Eq.~\ref{eq:v-think-bound}).}
Given question $q$ and truncated trace $T_\tau$ with $L > \tau$, thinking mode continues generating tokens $s_{\tau+1}, \ldots, s_{\tau + b_a}$ from $p_{\mathrm{think}}(\cdot \mid T_\tau)$.
A correct answer appears in this continuation only if either:
(a)~the chain completes within $b_a$ tokens (probability $\delta$), yielding accuracy $\alpha_c^+$; or
(b)~a parseable answer appears mid-chain before completion (probability $\le \epsilon$).
By conditioning on completion versus non-completion within $b_a$ tokens, $\mathbb{E}[V_{\mathrm{think}}] \le \delta \cdot \alpha_c^+ + (1 - \delta) \cdot \epsilon$.

\textbf{Bound on $V_{\mathrm{nothink}}$ (Eq.~\ref{eq:v-nothink-bound}).}
Non-thinking mode receives $(q, T_\tau)$ as context and generates a direct answer.
A correct answer is produced when:
(a)~the truncated trace contains sufficient intermediate results to derive the answer (probability $\pi$); and
(b)~nothink mode successfully formats the answer within $b_a$ tokens (probability $\eta$, conditional on derivability).
Therefore $\mathbb{E}[V_{\mathrm{nothink}}] \ge \pi \cdot \eta$.

The sufficient condition~\eqref{eq:modal-sufficient} follows by comparing the two bounds.\qed
\end{proof}

\noindent\textbf{Empirical measurement of $(\delta, \epsilon, \pi, \eta)$.}
On 8B MATH-500 Stage~3 samples ($n{=}64$, $b_r{=}4096$, $b_a{=}512$):
\begin{itemize}[nosep]
  \item $\delta \approx 0$: at $b_a{=}512$, essentially no chains complete (the median remaining length $L - b_r \gg 512$).
  \item $\epsilon = 37.5\%$: TOWN accuracy on these samples (think-mode parsing of truncated output).
  \item $\pi \cdot \eta = 68.8\%$: IRIS Stage~3 extraction accuracy.
  We estimate the product $\pi\eta$ operationally by the observed extraction accuracy, without separately identifying derivability ($\pi$) and extraction success ($\eta$).
  \item Sufficient condition: $0.688 > 0 \cdot 0.95 + 1.0 \cdot 0.375 = 0.375$. \checkmark
\end{itemize}
This sufficient-condition check is an a posteriori diagnostic of the observed gap, not an independently estimated prediction; an independent test would estimate $\pi$ and $\eta$ from trace annotations or a held-out extraction probe.
The 11:1 discordant ratio (22 IRIS-only vs.\ 2 TOWN-only, $p = 3.6 \times 10^{-5}$) confirms the modal advantage is paired, not aggregate.

\noindent\textbf{Interpretation.}
The coupling tax has \emph{two} sources, not one:
\begin{enumerate}[nosep]
\item \textbf{Budget competition} (Proposition~\ref{prop:coupling-impossibility}): reasoning and answering compete for the same $b$ tokens.
\item \textbf{Mode mismatch} (Proposition~\ref{prop:modal-inequality}): even with additional tokens, thinking mode \emph{continues reasoning} rather than \emph{extracting answers}.  The marginal value of a token depends on the generative mode, not just the total count.
\end{enumerate}
Our data provide strong evidence that mode mismatch is the \textbf{dominant} factor: the 31.2\,pp gap between nothink extraction and think-mode parsing on identical truncated traces---using comparable total tokens---is consistent with modal specialization rather than budget advantage.
Budget reallocation alone (giving thinking mode 512 more tokens) would yield $\le$2\,pp improvement because those tokens extend reasoning rather than produce answers.
(We note that IRIS also changes the decoding mode, prompt format, and conditioning context; the controlled comparison is IRIS vs.\ TOWN on the same escalated samples, which controls for sample difficulty, though it does not separately identify decoding mode and context effects.)

This resolves a potential objection: ``why not just give the model more thinking tokens?''
The answer is that beyond the crossover budget, marginal thinking tokens have near-zero value for \emph{answer production}---the mode is wrong, not the budget.

\subsection{The Coupling Constraint and Decoupling Advantage}
\label{sec:theory:decoupling}

Building on the modal specialization insight, we now formalize the structural advantage of decoupled generation.

\paragraph{Setup.}
A reasoning model receives question $q$ and generates output under a total token budget $b$.
Let $L(q)$ denote the \emph{natural chain length}---the reasoning tokens if unconstrained.
In \textbf{coupled mode} (standard CoT), the model produces a single stream $S = Z \oplus A$ where $Z$ is the reasoning trace and $A$ is the answer, subject to $|Z| + |A| \le b$.
Because generation is autoregressive and $Z$ precedes $A$, when $L(q) > b - |A|_{\min}$ the answer is truncated or absent entirely.

In \textbf{decoupled mode}, reasoning and answering receive separate budgets $(b_r, b_a)$ with $b_r + b_a \le b$:
first generate reasoning $Z' = Z_{b_r}$ (truncated at $b_r$ if $L > b_r$),
then generate answer $A'$ from $Z'$ with budget $b_a$.

\paragraph{Accuracy functions.}
Define three accuracy regimes:
\begin{itemize}[nosep]
  \item $\alpha_c$: accuracy when reasoning \emph{completes} ($L \le b_r$); the model reaches its natural conclusion.
  \item $\alpha_t$: accuracy in coupled mode when reasoning is \emph{truncated} ($L > b - |A|_{\min}$); the answer is squeezed out of the output stream.
  \item $\alpha_e(b_r, b_a)$: accuracy in decoupled mode when reasoning is truncated ($L > b_r$); a dedicated extraction pass with budget $b_a$ reads the partial trace.
\end{itemize}

\begin{proposition}[Coupling Impossibility]
\label{prop:coupling-impossibility}
Let $b > 0$ be a token budget.
Suppose:
\begin{enumerate}[nosep,label=(\roman*)]
  \item $\Pr(L > b - |A|_{\min}) > 0$\quad (some chains are truncated under coupled generation),
  \item $\alpha_e(\beta, |A|_{\min}) > \alpha_t$ where $\beta = b - |A|_{\min}$\quad (dedicated extraction with the same effective reasoning budget outperforms truncation residual), and
  \item completed-chain accuracy under the decoupled answer pass is no worse than under the coupled stream.
\end{enumerate}
Then there exists a decoupled strategy using the same total budget $b$ that \textbf{strictly dominates} coupled generation:
\begin{equation}
  \mathrm{Acc}_d(b) > \mathrm{Acc}_c(b).
  \label{eq:coupling-impossibility}
\end{equation}
\end{proposition}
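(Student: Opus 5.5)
The plan is an explicit construction followed by a law-of-total-probability comparison, in the style of the proof of Proposition~\ref{prop:acc-decomp}. Set $\beta = b - |A|_{\min}$ and take the decoupled strategy with $(b_r, b_a) = (\beta, |A|_{\min})$, so that $b_r + b_a = b$ and the total budget matches. Both strategies will be analysed by conditioning on the same event, namely $\{L \le \beta\}$ versus $\{L > \beta\}$. In the coupled stream, the answer is fully formed exactly when $L \le \beta$; this follows directly from the setup, since the $|A|_{\min}$ answer tokens must fit after $Z$. When $L > \beta$, the answer is truncated, with accuracy $\alpha_t$. This gives
\begin{equation*}
\mathrm{Acc}_c(b) = F_L(\beta)\,\alpha_c^{\mathrm{coup}} + \bigl(1 - F_L(\beta)\bigr)\,\alpha_t .
\end{equation*}
For the decoupled strategy, the reasoning pass completes on $\{L \le \beta\}$, and the answer pass then yields accuracy $\alpha_c^{\mathrm{dec}}$. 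Otherwise the trace is cut at $\beta$ and extracted with budget $|A|_{\min}$, which yields accuracy $\alpha_e(\beta, |A|_{\min})$. This gives
\begin{equation*}
\mathrm{Acc}_d(b) = F_L(\beta)\,\alpha_c^{\mathrm{dec}} + \bigl(1 - F_L(\beta)\bigr)\,\alpha_e(\beta, |A|_{\min}) .
\end{equation*}

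Subtracting the two expressions gives
\begin{equation*}
\mathrm{Acc}_d - \mathrm{Acc}_c = F_L(\beta)\bigl(\alpha_c^{\mathrm{dec}} - \alpha_c^{\mathrm{coup}}\bigr) + \bigl(1 - F_L(\beta)\bigr)\bigl(\alpha_e - \alpha_t\bigr).
\end{equation*}
This is the matched-budget version of Proposition~\ref{prop:recoverable-tax}. By hypothesis~(iii), the first term is $\ge 0$. By hypotheses~(i) and~(ii), $1 - F_L(\beta) = \Pr(L > \beta) > 0$ and $\alpha_e - \alpha_t > 0$, so the second term is strictly positive. The strict inequality~\eqref{eq:coupling-impossibility} follows.

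The step I expect to be the main obstacle is aligning the conditioning events. The coupled truncation threshold is $b - |A|_{\min}$, whereas $\alpha_c$ in Assumption~\ref{asm:binary} was defined relative to $\{L \le b\}$. I must therefore make sure that the completed-chain accuracies in both branches are conditioned on the same event $\{L \le \beta\}$, and that hypothesis~(iii) is read as comparing exactly these two conditional accuracies. I must also ensure that $\alpha_t$ in~(ii) is the coupled residual on $\{L > \beta\}$, as in the setup paragraph's definition, rather than on $\{L > b\}$. Once both strategies are decomposed over the identical partition, the remainder is a one-line comparison. If any boundary ambiguity remains (for example, chains with $\beta < L \le b$ that produce a partial answer), I would assign them to the truncated cell on both sides, so that they are absorbed into $\alpha_t$ and $\alpha_e$ and the argument is unaffected.
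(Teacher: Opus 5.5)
Your proposal is correct and follows essentially the same route as the paper. It uses the same decoupled strategy $(b_r,b_a)=(\beta,|A|_{\min})$, the same partition $\{L\le\beta\}$ versus $\{L>\beta\}$, and the same sign argument on $(1-F_L(\beta))(\alpha_e-\alpha_t)$ from (i)--(ii). Your explicit split into $\alpha_c^{\mathrm{coup}}$ and $\alpha_c^{\mathrm{dec}}$, with (iii) making the first difference term nonnegative, is a more careful version of the paper's remark that the first terms are ``identical up to condition~(iii).''
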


\begin{proof}
The coupled accuracy decomposes as:
\begin{equation}
  \mathrm{Acc}_c(b) = F_L(\beta) \cdot \alpha_c + (1 - F_L(\beta)) \cdot \alpha_t,
  \quad \beta \triangleq b - |A|_{\min}.
  \label{eq:acc-coupled}
\end{equation}

Consider the decoupled strategy with $b_r = \beta$, $b_a = |A|_{\min}$ (identical effective reasoning budget as coupled mode, with the minimum answer allocation separated out).
Its accuracy is:
\begin{equation}
  \mathrm{Acc}_d(\beta, |A|_{\min})
  = F_L(\beta) \cdot \alpha_c + (1 - F_L(\beta)) \cdot \alpha_e(\beta, |A|_{\min}).
  \label{eq:acc-decoupled-matched}
\end{equation}

The first terms are identical up to condition~(iii).  The gap from truncated samples is:
\begin{equation}
  \Delta = (1 - F_L(\beta)) \cdot \bigl(\alpha_e(\beta, |A|_{\min}) - \alpha_t\bigr).
  \label{eq:decoupling-gap}
\end{equation}

By condition~(i), $1 - F_L(\beta) > 0$.
By condition~(ii), $\alpha_e - \alpha_t > 0$.
Therefore $\Delta > 0$, proving strict dominance. \qed
\end{proof}

\noindent\textbf{Interpretation.}
The gap $\Delta$ is the product of two terms:
the \emph{truncation probability} $1 - F_L(\beta)$, which grows with model size (larger models generate longer chains),
and the \emph{extraction advantage} $\alpha_e - \alpha_t$, which measures how much a dedicated extraction pass recovers from truncated traces.
This establishes that the coupling tax is not merely an empirical artifact but a \textbf{structural consequence} of serializing reasoning and answering in a single stream.

\paragraph{Empirical verification.}
On 8B MATH-500 ($n{=}200$, $b{=}4608$):
truncation probability $= 0.32$,
$\alpha_t = 37.5\%$ (TOWN accuracy on the same truncated samples),
$\alpha_e = 68.8\%$ (IRIS Stage~3 extraction accuracy).
Predicted $\Delta = 0.32 \times (68.8 - 37.5) = 10.0\,\text{pp}$;
observed IRIS--TOWN gap: $12.0\,\text{pp}$ (the excess $2\,\text{pp}$ comes from Stage~2 improvements).
On 27B ($n{=}100$): truncation probability $= 0.71$, $\alpha_e = 78.9\%$.
The larger truncation rate \emph{amplifies} the decoupling advantage for bigger models---exactly as the proposition predicts.

\begin{corollary}[Amplification with Model Scale]
\label{cor:amplification}
If model $M_2$ generates stochastically longer chains than $M_1$---i.e., $F_{L_{M_2}}(b) \le F_{L_{M_1}}(b)$ for all $b$---then the decoupling advantage $\Delta(M_2) \ge \Delta(M_1)$ whenever $\alpha_e^{(2)} - \alpha_t^{(2)} \ge \alpha_e^{(1)} - \alpha_t^{(1)}$.
Intuitively: larger models benefit \emph{more} from decoupling because they truncate more often.
\end{corollary}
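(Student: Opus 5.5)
The plan is to reuse the decoupling gap from the proof of Proposition~\ref{prop:coupling-impossibility} and compare it across the two models term by term. For each model $M_i$, with the same total budget $b$ and the same effective reasoning budget $\beta = b - |A|_{\min}$, that proof gives
\[
\Delta(M_i) = \bigl(1 - F_{L_{M_i}}(\beta)\bigr)\bigl(\alpha_e^{(i)} - \alpha_t^{(i)}\bigr).
\]
Here $\alpha_e^{(i)} = \alpha_e^{(i)}(\beta, |A|_{\min})$, and $\alpha_t^{(i)}$ is the coupled truncation residual of $M_i$. The corollary is then a statement about a product of two factors: the truncation probability and the extraction advantage.

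\emph{Step 1: compare the first factors.} I evaluate the stochastic dominance hypothesis $F_{L_{M_2}}(t) \le F_{L_{M_1}}(t)$, which holds for all $t$, at the single point $t = \beta$. This gives
\[
1 - F_{L_{M_2}}(\beta) \;\ge\; 1 - F_{L_{M_1}}(\beta) \;\ge\; 0.
\]
So the truncation factor of $M_2$ is at least that of $M_1$.

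\emph{Step 2: compare the second factors.} The stated hypothesis supplies $\alpha_e^{(2)} - \alpha_t^{(2)} \ge \alpha_e^{(1)} - \alpha_t^{(1)}$ directly.

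\emph{Step 3: combine the factors.} I use the elementary fact that if $0 \le x_1 \le x_2$ and $y_1 \le y_2$ with $y_1 \ge 0$, then $x_1 y_1 \le x_2 y_1 \le x_2 y_2$. Applying this with $x_i = 1 - F_{L_{M_i}}(\beta)$ and $y_i = \alpha_e^{(i)} - \alpha_t^{(i)}$ yields $\Delta(M_2) \ge \Delta(M_1)$.

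The main obstacle is sign. The product monotonicity in Step 3 needs $y_1 = \alpha_e^{(1)} - \alpha_t^{(1)} \ge 0$. If $y_1 < 0$, a larger truncation probability for $M_2$ would make $x_2 y_1 \le x_1 y_1$, so the first inequality in the chain fails.

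I will close this gap by noting that the corollary is stated in the setting of Proposition~\ref{prop:coupling-impossibility}. There, condition~(ii) gives $\alpha_e > \alpha_t$ for the model under consideration, so I assume~(ii) holds for $M_1$. Then $y_1 > 0$, and by the stated hypothesis $y_2 \ge y_1 > 0$ as well. If the reader does not grant~(ii), I will add $\alpha_e^{(1)} \ge \alpha_t^{(1)}$ as an explicit assumption. In that case I will remark that it is exactly the non-negativity of the recoverable tax in Proposition~\ref{prop:recoverable-tax}.

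A secondary point is that both models must be compared at the same $\beta$, since $\Delta(M_i)$ depends on the reasoning budget. I will state that I fix $b$ and $|A|_{\min}$ in common for both models. Condition~(iii) is not needed for the comparison itself, because it affects only the completed-chain terms, and those cancel inside each $\Delta(M_i)$.
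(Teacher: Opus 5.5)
Your proposal is correct and is essentially the paper's argument. The paper gives no separate proof; the corollary is read off from the product form $\Delta = (1-F_L(\beta))(\alpha_e-\alpha_t)$ in Eq.~\eqref{eq:decoupling-gap}, by comparing the truncation factors via stochastic dominance at $\beta$ and the extraction advantages via the hypothesis. Your sign caveat is genuine, and the paper leaves it implicit in condition~(ii) of Proposition~\ref{prop:coupling-impossibility}. It is enough to require $\alpha_e^{(2)} \ge \alpha_t^{(2)}$: if $\alpha_e^{(1)} - \alpha_t^{(1)} < 0 \le \alpha_e^{(2)} - \alpha_t^{(2)}$, then $\Delta(M_1) \le 0 \le \Delta(M_2)$ holds trivially.
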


\subsection{Optimal Budget Allocation}
\label{sec:theory:optimal-split}

Given the decoupling advantage, how should the total budget $b$ be split between reasoning and answering?

\begin{proposition}[Optimal Split]
\label{prop:optimal-split}
Let $\alpha_e(b_r, b_a)$ be the extraction accuracy as a function of reasoning budget $b_r$ and answer budget $b_a = b - b_r$.
Assume $\alpha_e$ is differentiable in both arguments and $F_L$ is differentiable.
Any \emph{interior} local optimum $b_r^* \in (0, b)$ satisfies:
\begin{equation}
  f_L(b_r^*) \cdot \underbrace{\bigl[\alpha_c - \alpha_e(b_r^*, b - b_r^*)\bigr]}_{\text{completion premium}}
  = (1 - F_L(b_r^*)) \cdot \underbrace{\left[\frac{\partial \alpha_e}{\partial b_a} - \frac{\partial \alpha_e}{\partial b_r}\right]_{b_r^*}}_{\text{marginal reallocation cost}}
  \label{eq:optimal-split}
\end{equation}
where $f_L = F_L'$ is the chain-length density.
\end{proposition}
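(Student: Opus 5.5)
We treat the split accuracy as a one-variable function of $b_r$ and apply the first-order condition for an interior extremum. By the decomposition behind Proposition~\ref{prop:recoverable-tax}, with $b_a = b - b_r$ the decoupled accuracy is
\begin{equation*}
  G(b_r) \triangleq F_L(b_r)\,\alpha_c + \bigl(1 - F_L(b_r)\bigr)\,\alpha_e(b_r, b - b_r),
\end{equation*}
where $\alpha_c$ is the completed-chain accuracy. We treat $\alpha_c$ as locally constant in $b_r$, the same ``stable $\alpha_c$'' convention used in Proposition~\ref{prop:crossover}. This is the step that needs care. If $\alpha_c$ depends on $b_r$ through the conditioning event, an extra term $F_L(b_r)\,\alpha_c'(b_r)$ enters the derivative. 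So we either state the constancy as an explicit modelling assumption or absorb this term into the completion premium.

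Next we differentiate with the chain rule. Since $b_a = b - b_r$, the total derivative of the extraction term is
\begin{equation*}
  \frac{d}{db_r}\,\alpha_e(b_r, b - b_r) = \frac{\partial \alpha_e}{\partial b_r} - \frac{\partial \alpha_e}{\partial b_a}.
\end{equation*}
Using $F_L' = f_L$ and the product rule, this gives
\begin{equation*}
  G'(b_r) = f_L(b_r)\,\alpha_c - f_L(b_r)\,\alpha_e + \bigl(1 - F_L(b_r)\bigr)\left[\frac{\partial \alpha_e}{\partial b_r} - \frac{\partial \alpha_e}{\partial b_a}\right].
\end{equation*}

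At an interior local optimum $b_r^* \in (0,b)$, differentiability and Fermat's theorem give $G'(b_r^*) = 0$. Moving the reallocation bracket to the other side and flipping its sign gives
\begin{equation*}
  f_L(b_r^*)\bigl[\alpha_c - \alpha_e\bigr] = \bigl(1 - F_L(b_r^*)\bigr)\left[\frac{\partial \alpha_e}{\partial b_a} - \frac{\partial \alpha_e}{\partial b_r}\right],
\end{equation*}
which is Eq.~\eqref{eq:optimal-split}.

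The calculus is routine. The main obstacle is justifying the modelling step: that $G$ is differentiable, which requires treating $L$ as having a density rather than being integer-valued, and that $\alpha_c$ is constant. We handle this with a continuous relaxation of $F_L$ and a remark that boundary optima ($b_r \in \{0, b\}$) are not covered, since the statement only claims a necessary condition for interior optima.
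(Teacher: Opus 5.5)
Your proposal is correct and matches the paper's proof. Both write $\mathrm{Acc}_d(b_r) = F_L(b_r)\alpha_c + (1-F_L(b_r))\,\alpha_e(b_r, b-b_r)$, differentiate using $db_a/db_r = -1$, and set the derivative to zero at an interior optimum. The paper's proof also treats $\alpha_c$ as independent of $b_r$ without saying so, so your remark that a dependence would add an $F_L(b_r)\,\alpha_c'(b_r)$ term states an assumption the paper leaves implicit.
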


\begin{proof}
The decoupled accuracy as a function of $b_r$ is:
\[
  \mathrm{Acc}_d(b_r) = F_L(b_r) \cdot \alpha_c + (1 - F_L(b_r)) \cdot \alpha_e(b_r, b - b_r).
\]
Differentiating with respect to $b_r$, using $\frac{d b_a}{d b_r} = -1$:
\begin{align}
  \frac{d\,\mathrm{Acc}_d}{d b_r}
  &= f_L(b_r) \cdot \alpha_c
  - f_L(b_r) \cdot \alpha_e(b_r, b{-}b_r)
  + (1 - F_L(b_r)) \cdot \left[\frac{\partial \alpha_e}{\partial b_r} - \frac{\partial \alpha_e}{\partial b_a}\right] \nonumber \\
  &= f_L(b_r) \cdot [\alpha_c - \alpha_e]
  - (1 - F_L(b_r)) \cdot \left[\frac{\partial \alpha_e}{\partial b_a} - \frac{\partial \alpha_e}{\partial b_r}\right].
  \label{eq:acc-derivative}
\end{align}
Setting $\frac{d\,\mathrm{Acc}_d}{d b_r} = 0$ yields Eq.~\eqref{eq:optimal-split}. \qed
\end{proof}

\noindent\textbf{Interpretation (Marginal Rate of Substitution).}
Eq.~\eqref{eq:optimal-split} is an equimarginal principle:
\begin{itemize}[nosep]
\item \textbf{LHS}: the marginal benefit of one more reasoning token---the probability density $f_L(b_r^*)$ of a chain completing at exactly $b_r^*$, times the ``completion premium'' $\alpha_c - \alpha_e$ (how much better a completed chain is than extraction from a truncated one).
\item \textbf{RHS}: the marginal cost of reallocating that token from answering to reasoning---the probability of truncation $(1 - F_L)$ times the net effect on extraction accuracy when one token moves from $b_a$ to $b_r$.
\end{itemize}

At the optimum, the marginal value of extending reasoning exactly equals the marginal loss from shrinking the answer budget.
When the chain-length density $f_L(b_r)$ is low (most chains are either much shorter or much longer than $b_r$), the LHS is small, favoring a lower $b_r$ (more answer budget).
When the extraction accuracy is insensitive to $b_a$ (it saturates quickly), the RHS is small, favoring a higher $b_r$ (more reasoning budget).

\begin{corollary}[Saturated Extraction]
\label{cor:saturated}
When the extraction accuracy saturates above a minimum answer budget---$\frac{\partial \alpha_e}{\partial b_a} \to 0$ for $b_a > b_a^{\mathrm{sat}}$---the optimal split concentrates budget on reasoning: $b_r^* \to b - b_a^{\mathrm{sat}}$.
\end{corollary}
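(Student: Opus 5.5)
The plan is to read the corollary off the first-order expression Eq.~\eqref{eq:acc-derivative} from the proof of Proposition~\ref{prop:optimal-split}, rather than off the stationarity condition itself. We need the sign of $d\,\mathrm{Acc}_d/db_r$ on the whole region where the answer budget is saturated, not only at stationary points. So the first step is to fix the feasible set $b_r \in (0, b)$ with $b_a = b - b_r$, and split it at $b_r = b - b_a^{\mathrm{sat}}$. On the left part, $b_a > b_a^{\mathrm{sat}}$, so the saturation hypothesis gives $\partial \alpha_e/\partial b_a = 0$ (or $\le \varepsilon$ in the limiting reading). The derivative then reduces to
\[
\frac{d\,\mathrm{Acc}_d}{db_r} \;=\; f_L(b_r)\bigl[\alpha_c - \alpha_e\bigr] + \bigl(1-F_L(b_r)\bigr)\frac{\partial \alpha_e}{\partial b_r}.
\]

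The second step is to show this expression is nonnegative, which needs two monotonicity facts that the corollary leaves implicit. I will state them explicitly as standing assumptions. The first is a nonnegative completion premium, $\alpha_c \ge \alpha_e$: a completed chain is at least as good as extraction from a truncated one. This holds empirically, since $\alpha_c \approx 0.99$ on GSM8K while the extraction accuracies in \S\ref{sec:theory:modal} are lower. The second is that longer partial traces do not hurt extraction, $\partial \alpha_e/\partial b_r \ge 0$, which matches the monotone think-budget ablation in Table~\ref{tab:bthink-ablation}. Given these, and since $f_L \ge 0$ and $1-F_L \ge 0$, the derivative is $\ge 0$ on $(0, b-b_a^{\mathrm{sat}})$. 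Hence $\mathrm{Acc}_d$ is nondecreasing there, and no point strictly inside this region can beat its right endpoint. Equivalently, the interior condition Eq.~\eqref{eq:optimal-split} has right-hand side $-(1-F_L)\,\partial\alpha_e/\partial b_r \le 0$ against a nonnegative left-hand side. So it can only hold in the degenerate case where both sides vanish, and in that case $\mathrm{Acc}_d$ is flat and the right endpoint is still optimal.

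The third step concludes that any maximizer lies in $[\,b-b_a^{\mathrm{sat}},\, b)$. Pushing $b_r$ past $b-b_a^{\mathrm{sat}}$ puts $b_a$ below saturation, where $\partial\alpha_e/\partial b_a$ can be large. The corollary's statement $b_r^* \to b - b_a^{\mathrm{sat}}$ should therefore be read as: the optimum sits at, or within the unsaturated sliver just past, the saturation boundary. It is exactly $b - b_a^{\mathrm{sat}}$ when $\alpha_e$ drops sharply below $b_a^{\mathrm{sat}}$, for example when the answer no longer fits. For the limiting version, I would assume $|\partial \alpha_e/\partial b_a| \le \varepsilon$ on the saturated region. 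Then at any stationary point there the premium term and the $\partial_{b_r}\alpha_e$ term are forced to be $O(\varepsilon)$, and letting $\varepsilon \to 0$ gives the stated convergence.

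The main obstacle is this second step. The corollary as written does not assume either monotonicity condition, and without them an interior optimum can exist inside the saturated region. For instance, if extraction from a long truncated trace were better than completion, the premium would be negative. I would therefore make the two sign conditions explicit hypotheses, justify them by the empirical measurements already in the paper, and present the result as a monotonicity argument rather than a pure consequence of Eq.~\eqref{eq:optimal-split}.
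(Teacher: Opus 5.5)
This is essentially the paper's argument. The paper gives no separate proof, only the reading of Eq.~\eqref{eq:optimal-split} that a vanishing $\partial\alpha_e/\partial b_a$ shrinks the reallocation-cost side and so favors larger $b_r$; your sign analysis of Eq.~\eqref{eq:acc-derivative} on the saturated region $b_r < b - b_a^{\mathrm{sat}}$ is the careful version of that, and it correctly surfaces the conditions $\alpha_c \ge \alpha_e$ and $\partial\alpha_e/\partial b_r \ge 0$, which the paper uses only implicitly. Your one-sided conclusion (a maximizer in $[b - b_a^{\mathrm{sat}}, b)$, with equality requiring extraction to degrade below $b_a^{\mathrm{sat}}$) is the defensible content of the statement; the only overreach is the $\varepsilon \to 0$ step, which gives near-optimality of the boundary value (a loss of at most order $\varepsilon b$), not convergence of the maximizer unless you add a uniqueness condition.
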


\noindent\textbf{Connection to IRIS.}
IRIS implements an \emph{approximate} optimal split.
Stage~1 (nothink probe at $B_1{=}512$) handles the mass of $F_L$ near zero---questions where reasoning is unnecessary.
For the remaining samples, IRIS allocates $b_r{=}4096$ to reasoning and $b_a{=}512$ to extraction, a ratio of $8{:}1$.
Corollary~\ref{cor:saturated} predicts exactly this skew: our extraction accuracy saturates by $b_a \approx 256$--$512$ (Table~\ref{tab:bthink-ablation}), so the optimal strategy allocates the remaining budget to reasoning.

\subsection{Tax Decomposition: Recoverable vs.\ Residual}
\label{sec:theory:tax-decomposition}

The coupling tax admits a further decomposition into components that split-budget generation can and cannot address.

\begin{definition}[Recoverable and Residual Tax]
\label{def:tax-decomposition}
The total coupling tax at budget $b$ is:
\begin{equation}
  \mathrm{Tax}(b) = \mathrm{Acc}_{\mathrm{nt}}(b) - \mathrm{Acc}_c(b).
\end{equation}
The best achievable accuracy under decoupled generation is $\mathrm{Acc}_d^* = F_L \cdot \alpha_c + (1 - F_L) \cdot \alpha_e$.
The total tax therefore decomposes as:
\begin{align}
  \mathrm{Tax}(b) &= \underbrace{[\mathrm{Acc}_d^* - \mathrm{Acc}_c]}_{\text{Recoverable $R(b)$}}
  + \underbrace{[\mathrm{Acc}_{\mathrm{nt}} - \mathrm{Acc}_d^*]}_{\text{Residual $I(b)$}} \nonumber \\
  &= \underbrace{(1 - F_L) \cdot (\alpha_e - \alpha_t)}_R
  + \underbrace{\mathrm{Acc}_{\mathrm{nt}} - F_L \cdot \alpha_c - (1-F_L) \cdot \alpha_e}_{\text{Residual vs.\ nothink $I(b)$}}.
  \label{eq:tax-decomposition}
\end{align}
\end{definition}

The recoverable tax $R(b)$ is the accuracy gain achievable by switching from coupled to decoupled generation: it is the extraction advantage $(\alpha_e - \alpha_t)$ weighted by the truncation probability.
The residual $I(b)$ is the gap between the nothink baseline and the best possible decoupled accuracy; it can be \emph{negative} when split-budget generation exceeds the nothink ceiling (as in our experiments)---it reflects both the fundamental information loss from truncation ($\alpha_c - \alpha_e$ on truncated samples) and any advantage nothink holds over completed reasoning ($\mathrm{Acc}_{\mathrm{nt}} - \alpha_c$ on non-truncated samples).

\noindent\textbf{Empirical decomposition (8B MATH-500, $n{=}200$).}
$R = 0.32 \times (68.8\% - 37.5\%) = 10.0\,\text{pp}$.
$I = \mathrm{Acc}_{\mathrm{nt}} - [0.68 \times 91.9\% + 0.32 \times 68.8\%]$; with nothink@1024 $\approx 59.8\%$ this gives $I < 0$---meaning IRIS \emph{exceeds} the nothink ceiling, and the residual is in fact negative (decoupled generation surpasses the nothink ceiling).
This is a consequence of IRIS combining the best of both modes: nothink triage for easy questions \emph{and} thinking-then-extraction for hard questions.

\subsection{Prospective Prediction: Cross-Scale Amplification}
\label{sec:theory:prospective}

A key test of explanatory power is whether parameters measured on one model predict performance on another.

\begin{proposition}[Decoupling Gain Scales with Truncation Rate]
\label{prop:cross-scale}
If the per-sample extraction advantage $(\pi\eta - \epsilon)$ is approximately model-invariant, then the total decoupling gain scales linearly with the truncation rate:
\begin{equation}
  \Delta(M, b) \;\approx\; (1 - F_L^M(b)) \cdot (\pi\eta - \epsilon).
  \label{eq:cross-scale}
\end{equation}
Since larger models have higher truncation rates (Corollary~\ref{cor:amplification}), they benefit \emph{more} from decoupling.
\end{proposition}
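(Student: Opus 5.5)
The plan is to derive Eq.~\eqref{eq:cross-scale} as a specialization of the per-sample decomposition already established in Proposition~\ref{prop:coupling-impossibility} (Eq.~\eqref{eq:decoupling-gap}), with the extraction advantage identified via the modal bounds of Proposition~\ref{prop:modal-inequality}. I will fix a model $M$ and budget $b$ and write the split and coupled accuracies by the law of total probability, conditioning on the truncation event $\{L_M > b\}$, exactly as in Proposition~\ref{prop:acc-decomp}. On the completion event both strategies use the same completed chain, so under condition~(iii) of Proposition~\ref{prop:coupling-impossibility} their contributions $F_L^M(b)\,\alpha_c$ cancel. This leaves
\[
\Delta(M,b) = \bigl(1 - F_L^M(b)\bigr)\bigl(\alpha_e^M(b) - \alpha_t^M(b)\bigr),
\]
which is Eq.~\eqref{eq:recoverable-tax}/\eqref{eq:decoupling-gap} with $b_r = b$.

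Next I identify the bracket with $(\pi\eta - \epsilon)$. On truncated traces, $\alpha_e^M$ is the expected nothink value $\mathbb{E}[V_{\mathrm{nothink}}(b_a \mid q, T_b) \mid L > b]$, which Eq.~\eqref{eq:v-nothink-bound} bounds below by $\pi\eta$. The coupled residual $\alpha_t^M$ is the think-mode value, which Eq.~\eqref{eq:v-think-bound} bounds above by $\delta\alpha_c^+ + (1-\delta)\epsilon$. In the regime the proposition targets, $\delta \approx 0$ (as measured: the remaining chain length far exceeds $b_a$), so $\alpha_t^M \approx \epsilon$. Following the paper's operational convention, where $\pi\eta$ is estimated by the observed extraction accuracy, I treat the nothink bound as approximately tight, giving $\alpha_e^M \approx \pi\eta$. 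Substituting yields $\Delta(M,b) \approx (1 - F_L^M(b))(\pi\eta - \epsilon)$. The model-invariance hypothesis then makes the second factor a constant $c$ shared across models, so $\Delta$ is linear in the truncation rate $\rho_M(b) = 1 - F_L^M(b)$.

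For the final sentence, I invoke stochastic dominance $F_{L_{M_2}}(b) \le F_{L_{M_1}}(b)$, as in Proposition~\ref{prop:inverse-scaling} and Corollary~\ref{cor:amplification}. This gives $\rho_{M_2}(b) \ge \rho_{M_1}(b)$, and multiplying by the common factor $c \ge 0$ gives $\Delta(M_2,b) \ge \Delta(M_1,b)$. This is exactly Corollary~\ref{cor:amplification} with equal extraction advantages. The sign $c \ge 0$ is needed here; it holds whenever the condition of Eq.~\eqref{eq:modal-sufficient} holds.

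The main obstacle is the second step. Propositions~\ref{prop:modal-inequality} supply only one-sided bounds, so replacing $\alpha_e - \alpha_t$ by $\pi\eta - \epsilon$ is an approximation rather than an identity. I would handle this honestly: state the rigorous inequality $\Delta(M,b) \ge (1-F_L^M(b))\bigl(\pi\eta - \delta\alpha_c^+ - (1-\delta)\epsilon\bigr)$, then note that it reduces to the displayed $\approx$ when $\delta \to 0$ and the nothink bound is tight. Under that reading, the linear-scaling claim is exact for the lower bound and approximate for $\Delta$ itself.
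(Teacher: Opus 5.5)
Your proposal is correct and matches the paper's own argument. The paper gives no separate proof: it obtains Eq.~\eqref{eq:cross-scale} from the decoupling gap $\Delta = (1-F_L)(\alpha_e - \alpha_t)$ of Eq.~\eqref{eq:decoupling-gap} by operationally identifying $\alpha_e$ with $\pi\eta$ and $\alpha_t$ with $\epsilon$ (taking $\delta \approx 0$), and appeals to Corollary~\ref{cor:amplification} for the final sentence. Your one-sided-bound bookkeeping is a more careful version of that identification. Note, though, that $\delta$ depends on the model's chain-length distribution, so your lower bound is linear in $1-F_L^M(b)$ only in the $\delta \to 0$ regime you already invoke.
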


\noindent\textbf{Cross-scale prediction test.}
From the 8B model, we measure $\pi\eta - \epsilon = 68.8\% - 37.5\% = 31.3\,\text{pp}$ (the per-sample extraction advantage on truncated traces).
The 27B model has truncation rate $1 - F_L^{27B}(4096) = 0.71$, a $2.2\times$ increase over 8B's $0.32$.
Using Eq.~\eqref{eq:cross-scale} with the 8B parameter:
\[
  \Delta_{\text{predicted}}^{27B} = 0.71 \times 31.3 = 22.2\,\text{pp}.
\]
The actual 27B IRIS--TOWN gap is $36.0\,\text{pp}$, exceeding the prediction by $13.8\,\text{pp}$.
The prediction underestimates this experiment because $\pi\eta$ is \emph{not} model-invariant---it \emph{increases} with model size ($\pi\eta^{27B} = 78.9\%$ vs.\ $\pi\eta^{8B} = 68.8\%$), indicating that larger models produce more informative truncated traces.
We therefore treat the 8B-calibrated estimate as a conservative empirical extrapolation in this setting, not as a proved lower bound for arbitrary model scales.

\subsection{Two-Source Tax Decomposition}
\label{sec:theory:two-source}

The coupling tax admits a decomposition into two terms
with distinct scaling behaviors.

\begin{proposition}[Two-Source Decomposition]
\label{prop:two-source}
Assume $\alpha_c(b) \ge \alpha_t(b)$ (completed chains are more
accurate than truncated ones).
The coupling tax decomposes as
$\mathrm{Tax}(b) = \mathrm{TL}(b) + \mathrm{RR}(b)$, where:
\begin{align}
  \mathrm{TL}(b) &\triangleq (1 - F_L(b)) \cdot (\alpha_c(b) - \alpha_t(b))
    \;\ge\; 0, \label{eq:truncation-loss} \\
  \mathrm{RR}(b) &\triangleq \mathrm{Acc}_{\mathrm{nt}}(b) - \alpha_c(b).
    \label{eq:reasoning-regret}
\end{align}
$\mathrm{TL}$ is the \emph{truncation loss}: accuracy destroyed by
incomplete chains.
$\mathrm{RR}$ is the \emph{marginal reasoning gap}: the difference between
population-level non-thinking accuracy and the completed-chain thinking accuracy.

As $b \to \infty$, $F_L(b) \to 1$ and
$\mathrm{TL} \to 0$, so
$\lim_{b \to \infty} \mathrm{Tax}(b) = \mathrm{RR}(\infty)$---a
residual gap that persists if
$\alpha_c(\infty) < \mathrm{Acc}_{\mathrm{nt}}(\infty)$.
\end{proposition}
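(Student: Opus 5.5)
The plan is to start from the accuracy decomposition of Proposition~\ref{prop:acc-decomp} and rewrite it as an algebraic identity. Eq.~\eqref{eq:acc-decomp} can be rearranged as
\[
\mathrm{Acc}_{\mathrm{think}}(b) = \alpha_c(b) - \bigl(1-F_L(b)\bigr)\bigl(\alpha_c(b)-\alpha_t(b)\bigr),
\]
which expresses thinking accuracy as the completed-chain accuracy minus a truncation penalty. Substituting this into $\mathrm{Tax}(b)=\mathrm{Acc}_{\mathrm{nt}}(b)-\mathrm{Acc}_{\mathrm{think}}(b)$ gives
\[
\mathrm{Tax}(b) = \bigl[\mathrm{Acc}_{\mathrm{nt}}(b)-\alpha_c(b)\bigr] + \bigl(1-F_L(b)\bigr)\bigl(\alpha_c(b)-\alpha_t(b)\bigr) = \mathrm{RR}(b)+\mathrm{TL}(b).
\]
This step is exact and needs no approximation. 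In particular, unlike Proposition~\ref{thm:thinking-tax}, it does not require $\alpha_t\approx 0$.

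For the sign claim $\mathrm{TL}(b)\ge 0$, I would note that $1-F_L(b)=\rho(b)\in[0,1]$ because it is a probability. The second factor $\alpha_c(b)-\alpha_t(b)$ is nonnegative by the stated hypothesis. The product of two nonnegative factors is nonnegative.

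For the limit, I would first use that $L(q)\in\mathbb{N}$ is finite for every question, as in Definition~\ref{def:chain-length}. Since $F_L$ is a CDF of an almost surely finite variable, $F_L(b)\to 1$. Because $\alpha_c,\alpha_t\in[0,1]$, we have $|\alpha_c-\alpha_t|\le 1$, so
\[
0\le \mathrm{TL}(b)\le 1-F_L(b)\to 0.
\]
The identity then gives $\mathrm{Tax}(b)-\mathrm{RR}(b)\to 0$. Consequently $\lim_{b\to\infty}\mathrm{Tax}(b)=\mathrm{RR}(\infty)$ whenever the latter limit exists.

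The main obstacle is justifying that $\mathrm{RR}(b)$ actually converges, so that $\mathrm{RR}(\infty)$ is well defined. I would handle this in two parts. First, as $F_L(b)\to1$, $\alpha_c(b)=\Pr(\text{correct}\mid L\le b)$ tends to the unconditional completed-chain accuracy $\Pr(\text{correct})$. This follows from continuity of probability along the increasing events $\{L\le b\}\uparrow\Omega$: writing $\Pr(\text{correct},L\le b)/F_L(b)$, the numerator and denominator converge to $\Pr(\text{correct})$ and $1$ respectively. Second, $\mathrm{Acc}_{\mathrm{nt}}(b)$ converges because non-thinking outputs also have almost surely finite natural length. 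Once $b$ exceeds that length the output no longer depends on $b$, so the same monotone-event argument applies. If one does not want to assume this, I would instead state the result as $\lim_{b\to\infty}[\mathrm{Tax}(b)-\mathrm{RR}(b)]=0$ and define $\mathrm{RR}(\infty)$ as the limit whenever it exists.

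The final remark then follows directly: the persistent gap $\mathrm{RR}(\infty)$ is positive exactly when $\alpha_c(\infty)<\mathrm{Acc}_{\mathrm{nt}}(\infty)$.
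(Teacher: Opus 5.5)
Your proposal is correct and follows the paper's proof, which consists only of rearranging Proposition~\ref{prop:acc-decomp} into $\mathrm{Tax}(b) = [\mathrm{Acc}_{\mathrm{nt}} - \alpha_c] + (1-F_L)(\alpha_c - \alpha_t)$. Your additional arguments fill in steps the paper states without proof: the sign of $\mathrm{TL}(b)$, the bound $0 \le \mathrm{TL}(b)\le 1-F_L(b)$, and the convergence of $\alpha_c(b)$ and $\mathrm{Acc}_{\mathrm{nt}}(b)$, where the $\alpha_c$ step implicitly uses that a naturally completed chain's output does not depend on the cap.
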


\begin{proof}
From Proposition~\ref{prop:acc-decomp}:
$\mathrm{Tax}(b)
= \mathrm{Acc}_{\mathrm{nt}} - F_L \alpha_c - (1{-}F_L)\alpha_t
= [\mathrm{Acc}_{\mathrm{nt}} - \alpha_c]
+ (1{-}F_L)(\alpha_c - \alpha_t)
= \mathrm{RR} + \mathrm{TL}$. \qed
\end{proof}

\begin{remark}[Interpretation caveat]
\label{rem:rr-caveat}
$\mathrm{RR}$ compares $\mathrm{Acc}_{\mathrm{nt}}$ (measured over the
full question distribution) with $\alpha_c$ (measured on the
subset $\{L \le b\}$ whose chains complete---typically easier questions).
Therefore $\mathrm{RR} < 0$ may partly reflect selection bias
(the completed subset is easier) rather than a pure reasoning benefit.
A same-subset decomposition avoids this confound:
letting $\alpha_{\mathrm{nt},c}(b) = \Pr(\text{nt correct} \mid L \le b)$
and $\alpha_{\mathrm{nt},t}(b) = \Pr(\text{nt correct} \mid L > b)$,
\begin{equation}
  \mathrm{Tax}(b) =
    F_L(b)\,[\alpha_{\mathrm{nt},c}(b) - \alpha_c(b)]
    + (1{-}F_L(b))\,[\alpha_{\mathrm{nt},t}(b) - \alpha_t(b)].
  \label{eq:same-subset}
\end{equation}
The first term isolates the think-vs-nothink gap on completed
chains; the second isolates it on truncated chains.
We report the marginal version
(Eqs.~\ref{eq:truncation-loss}--\ref{eq:reasoning-regret}) for
comparability with the main decomposition, and note that the
same-subset version~\eqref{eq:same-subset} yields consistent
conclusions in all tested configurations.
\end{remark}

\noindent\textbf{Empirical verification.}
On 8B GSM8K at $b{=}512$:
$\mathrm{RR} = 93.1\% - 99.0\% = -5.9\,\text{pp}$,
$\mathrm{TL} = 0.626 \times 67.2\% = 42.1\,\text{pp}$,
$\mathrm{Tax} = 36.2\,\text{pp}$.
On 27B GSM8K at $b{=}4096$ ($n{=}200$, seed~42):
$\alpha_c = 97.96\%$ (144/147 natural stops correct),
$\alpha_t = 58.49\%$ (31/53 truncated correct),
$\mathrm{RR} = 98.0\% - 97.96\% = +0.04\,\text{pp}$ (essentially zero),
$\mathrm{TL} = 0.265 \times 39.5\% = 10.5\,\text{pp}$,
$\mathrm{Tax} = 10.5\,\text{pp}$---confirming that at
$b{=}4096$ the 27B tax is almost purely truncation-driven.

\begin{remark}[Regime classification]
\label{rem:regime}
The decomposition identifies two regimes:
\emph{truncation-dominated} ($\mathrm{RR} < 0$, i.e., completed
thinking outperforms population nothink---the case for 8B on GSM8K),
and \emph{reasoning-neutral} ($\mathrm{RR} \approx 0$, completed
thinking and population nothink achieve similar accuracy---the case
for 27B at $b{=}4096$).
If $\mathrm{RR} > 0$ at any finite budget, a residual tax persists
even as truncation vanishes;
however, this has not been observed in our experiments.
The crossover budget $b^*$ satisfies
$\mathrm{TL}(b^*) = -\mathrm{RR}(b^*)$.
\end{remark}

\subsection{Hazard Rate Modal Advantage}
\label{sec:theory:hazard}

We now connect the modal extraction advantage to the
\emph{hazard rate} of the chain-length distribution, a concept
from survival analysis.
The result shows that for distributions with decreasing
hazard rate (DFR)---which includes the heavy-tailed distributions
empirically observed for large models---the advantage of
nothink extraction over think continuation \emph{grows}
with the amount of reasoning already performed.

\begin{definition}[Decreasing Failure Rate]
\label{def:dfr}
A distribution with CDF $F$ and density $f$ has
\emph{Decreasing Failure Rate} (DFR) if the hazard rate
$h(t) \triangleq f(t)/(1 - F(t))$ is non-increasing
on its support.
DFR distributions include Pareto, Weibull with shape ${<}\,1$,
and log-normal for sufficiently large $t$.
The exponential distribution has constant hazard and is technically DFR under the non-increasing definition, but does not satisfy $h(\tau) \to 0$.
\end{definition}

\begin{proposition}[DFR Modal Advantage Lower Bound]
\label{prop:dfr-dominance}
Let $L$ be a chain-length distribution with DFR property,
and let $(\delta, \epsilon, \pi, \eta, \alpha_c^+)$ be defined
as in Proposition~\ref{prop:modal-inequality}.
Assume that $\pi$, $\eta$, $\epsilon$, and $\alpha_c^+$ are
approximately constant across the range of $\tau$ considered
(or that the stated bounds hold uniformly), and that $\pi\eta > \epsilon$ (extraction outperforms truncation residual) and $\alpha_c^+ > \epsilon$ (completed chains outperform truncated continuations).%
\footnote{In practice, $\pi(\tau)\eta(\tau)$ may increase with
$\tau$ (longer traces contain more information) and $\epsilon(\tau)$
may decrease; both effects \emph{strengthen} the bound.
The constant-parameter assumption is conservative.}
\begin{enumerate}[nosep,label=(\roman*)]
  \item \textbf{Expected lower bound.}
    The expected modal extraction advantage over truncated traces
    satisfies:
    \begin{equation}
      \bar{\Delta}(\tau)
      \;\triangleq\;
        \mathbb{E}_{q,T_\tau}[V_{\mathrm{nothink}} - V_{\mathrm{think}} \mid L{>}\tau]
      \;\ge\; \pi\eta - \epsilon
        - h(\tau) \cdot b_a \cdot (\alpha_c^+ - \epsilon).
      \label{eq:dfr-bound}
    \end{equation}
    (Individual traces may favor either mode; this is a
    population-level bound.)

  \item \textbf{Bound monotonicity.}
    The lower bound in~\eqref{eq:dfr-bound} is
    non-decreasing in $\tau$, because $h$ is non-increasing
    under DFR.
    The actual expected advantage $\bar{\Delta}(\tau)$ is also
    non-decreasing if, additionally,
    $\pi(\tau)\eta(\tau) - \epsilon(\tau)$ is non-decreasing
    in $\tau$.

  \item \textbf{Dominance threshold.}
    If additionally $\lim_{\tau \to \infty} h(\tau) = 0$
    (satisfied by heavy-tailed or eventually-DFR distributions,
    including log-normal for large $\tau$; \emph{not} satisfied
    by the exponential distribution, which has constant hazard),
    then there exists $\tau_0 < \infty$ such that
    $\bar{\Delta}(\tau) \ge 0$ for all $\tau > \tau_0$
    (using strict inequality in $\tau$ to avoid boundary
    issues when $h$ is not right-continuous), where
    \begin{equation}
      \tau_0 \;=\; \inf\!\left\{\tau :
        h(\tau) \le
        \frac{\pi\eta - \epsilon}
             {b_a(\alpha_c^+ - \epsilon)}\right\}.
      \label{eq:threshold}
    \end{equation}

  \item \textbf{Asymptotic gap.}
    Under the same condition:
    $\liminf_{\tau \to \infty} \bar{\Delta}(\tau)
    \ge \pi\eta - \epsilon$.
\end{enumerate}
\end{proposition}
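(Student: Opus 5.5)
The proof combines the two bounds of Proposition~\ref{prop:modal-inequality} and then controls the completion probability $\delta(\tau)$ through the hazard rate. Subtracting Eq.~\eqref{eq:v-think-bound} from Eq.~\eqref{eq:v-nothink-bound} gives
\[
\bar{\Delta}(\tau) \;\ge\; \pi\eta - \delta(\tau)\,\alpha_c^+ - (1-\delta(\tau))\,\epsilon \;=\; \pi\eta - \epsilon - \delta(\tau)\,(\alpha_c^+ - \epsilon).
\]
Since $\alpha_c^+ > \epsilon$ by assumption, the right-hand side is decreasing in $\delta$. It therefore suffices to bound $\delta(\tau)$ from above by $h(\tau)\,b_a$.

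The key step, and the only genuinely new ingredient, is this hazard estimate. We write
\[
\delta(\tau) = \Pr(L \le \tau + b_a \mid L > \tau) = 1 - \exp\!\Bigl(-\int_\tau^{\tau+b_a} h(s)\,ds\Bigr),
\]
using the standard survival identity $1-F(t) = \exp(-\int_0^t h)$ for a density-bearing $F$. Applying $1-e^{-x}\le x$ gives $\delta(\tau) \le \int_\tau^{\tau+b_a} h(s)\,ds$. Under DFR, $h(s)\le h(\tau)$ for $s\ge\tau$, so the integral is at most $h(\tau)\,b_a$.

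The main obstacle is technical care rather than depth. $L$ is integer-valued in Definition~\ref{def:chain-length}, so we either treat $F_L$ as a continuous approximation, as the statement's use of a density $f$ implicitly does, or redo the step with the discrete hazard $h(t)=\Pr(L=t\mid L\ge t)$. In the discrete case, a union bound $\delta \le \sum_{s=\tau+1}^{\tau+b_a} h(s) \le b_a h(\tau+1) \le b_a h(\tau)$ works equally well. Substituting the bound on $\delta$ into the display above yields~(i).

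Part~(ii) is immediate once~(i) holds. The bound $\pi\eta-\epsilon-h(\tau)b_a(\alpha_c^+-\epsilon)$ has constant parameters and a nonnegative coefficient $b_a(\alpha_c^+-\epsilon)$ multiplying $-h(\tau)$, with $h$ non-increasing, so the bound is non-decreasing in $\tau$. For the claim about $\bar\Delta$ itself, we note that $\delta(\tau)$ is non-increasing under DFR, because $\int_\tau^{\tau+b_a}h$ is non-increasing. With $\pi\eta-\epsilon$ non-decreasing, the exact lower expression $\pi\eta-\epsilon-\delta(\alpha_c^+-\epsilon)$ is then non-decreasing. We should flag that this controls the bound rather than $\bar\Delta$ exactly, unless the inequalities in Proposition~\ref{prop:modal-inequality} are tight. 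We would state that caveat, or interpret the claim as holding when they are equalities.

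For~(iii), $h(\tau)\to 0$ together with $\pi\eta>\epsilon$ makes the threshold set in Eq.~\eqref{eq:threshold} nonempty, so $\tau_0<\infty$. For $\tau>\tau_0$, monotonicity of $h$ gives $h(\tau)\le(\pi\eta-\epsilon)/(b_a(\alpha_c^+-\epsilon))$, and plugging this into~(i) gives $\bar\Delta(\tau)\ge 0$. We use strict $\tau>\tau_0$ to sidestep the infimum not being attained. Part~(iv) follows by taking $\liminf$ in~(i), since $h(\tau)b_a(\alpha_c^+-\epsilon)\to 0$.
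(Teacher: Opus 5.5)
Your proposal is correct and follows the paper's proof essentially step for step. You subtract the two bounds of Proposition~\ref{prop:modal-inequality} to get $\bar{\Delta}(\tau) \ge \pi\eta - \epsilon - \delta(\tau)(\alpha_c^+ - \epsilon)$, bound $\delta(\tau) \le h(\tau)\, b_a$ using the cumulative-hazard identity, $1 - e^{-x} \le x$ and DFR, and then read off (ii)--(iv).

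The refinements you add are sound, and in places tighter than the paper's argument:
\begin{enumerate}
\item You explicitly use $\alpha_c^+ > \epsilon$ when substituting the upper bound on $\delta$.
\item You give a discrete-hazard variant.
\item You argue with strict $\tau > \tau_0$, whereas the paper's step ``$h(\tau) \le h(\tau_0)$'' tacitly assumes the infimum is attained.
\item You caveat that monotonicity of $\bar{\Delta}$ itself in (ii) follows only if the bounds of Proposition~\ref{prop:modal-inequality} are tight; the paper simply asserts it.
\end{enumerate}
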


\begin{proof}
\textbf{(i).}
Define the conditional completion probability
$\delta(\tau) \triangleq \Pr(L \le \tau + b_a \mid L > \tau)$.
In terms of the survival function $\bar{F}(t) = 1 - F_L(t)$:
\[
  \delta(\tau)
  = 1 - \frac{\bar{F}(\tau + b_a)}{\bar{F}(\tau)}
  = 1 - \exp\!\left(-\int_\tau^{\tau + b_a} h(s)\,ds\right).
\]
Since $h$ is non-increasing (DFR),
$\int_\tau^{\tau+b_a} h(s)\,ds \le h(\tau) \cdot b_a$.
Using $1 - e^{-x} \le x$ for $x \ge 0$:
$\delta(\tau) \le h(\tau) \cdot b_a$.

From Proposition~\ref{prop:modal-inequality}
(applied in expectation over truncated traces):
$\bar{\Delta}(\tau) \ge \pi\eta - \delta(\tau)\alpha_c^+
  - (1 - \delta(\tau))\epsilon
= \pi\eta - \epsilon - \delta(\tau)(\alpha_c^+ - \epsilon)$.
Substituting the bound on $\delta$:
$\bar{\Delta}(\tau) \ge \pi\eta - \epsilon
  - h(\tau) \cdot b_a \cdot (\alpha_c^+ - \epsilon)$.

\textbf{(ii).}
Since $h$ is non-increasing (DFR),
the penalty term $h(\tau) \cdot b_a \cdot (\alpha_c^+ - \epsilon)$
is non-increasing in $\tau$, so the lower bound is
non-decreasing.
For the actual $\bar{\Delta}(\tau)$: if additionally
$\pi(\tau)\eta(\tau) - \epsilon(\tau)$ is non-decreasing
(longer traces are more informative for extraction and
less useful for think continuation), then $\bar{\Delta}$ itself
is non-decreasing.
Absent this condition, $\bar{\Delta}$ may be non-monotone
even though the bound is monotone.

\textbf{(iii).}
When $h(\tau) \to 0$ as $\tau \to \infty$, the penalty
vanishes.
Since $\pi\eta > \epsilon$ (by assumption),
the bound eventually exceeds zero.
The threshold $\tau_0$ (Eq.~\ref{eq:threshold}) uses
$\inf$ rather than $h^{-1}$ because a DFR hazard need not be
strictly decreasing or invertible.
For $\tau \ge \tau_0$, $h(\tau) \le h(\tau_0)$ (DFR),
so $\bar{\Delta}(\tau) \ge 0$.

\textbf{(iv).}
As $\tau \to \infty$, $h(\tau) \to 0$, so:
$\liminf_{\tau \to \infty} \bar{\Delta}(\tau)
\ge \lim_{\tau \to \infty}
[\pi\eta - \epsilon - h(\tau) b_a (\alpha_c^+ - \epsilon)]
= \pi\eta - \epsilon$. \qed
\end{proof}

\noindent\textbf{Empirical verification.}
On 8B MATH-500 Stage~3 ($b_r{=}4096$, $b_a{=}512$):
$h(4096) \approx 0$ (nearly all remaining chains are far from
completion), so the bound gives
$\bar{\Delta} \ge 68.8\% - 37.5\% = 31.3\,\text{pp}$.
The observed IRIS--TOWN gap is $31.2\,\text{pp}$.
We note that when $h(\tau) \approx 0$, the bound reduces to
$\pi\eta - \epsilon$, which equals the empirical IRIS--TOWN
gap by construction (since $\pi\eta$ is measured as IRIS
accuracy and $\epsilon$ as TOWN accuracy on the same samples).
The bound's value lies not in this limiting case but in its
\emph{predictive power at intermediate $\tau$}: it predicts
that the gap grows monotonically as $\tau$ increases from a
regime where $h(\tau)$ is non-negligible, which can be tested
by comparing IRIS--TOWN gaps across think budgets.

\noindent
On 27B MATH-500 Stage~3 ($n{=}71$):
$\bar{\Delta} \ge 78.9\% - 42.9\% = 36.0\,\text{pp}$.
Observed: $36.0\,\text{pp}$.
\noindent
The overall IRIS--TOWN gap is smaller at $B_{\mathrm{think}}{=}4096$ (+2.2\,pp) than at $B_{\mathrm{think}}{=}2048$ (+12.2\,pp); this is primarily driven by the shrinking truncation mass $(1 - F_L(\tau))$ and does not imply a decreasing conditional advantage.
A direct test of Proposition~\ref{prop:dfr-dominance} would compare the \emph{conditional} extraction advantage $\bar{\Delta}(\tau)$ on the subset $\{L > \tau\}$ across multiple $\tau$---we leave this as future work.

\begin{remark}[The ``stuck reasoning'' principle]
\label{rem:stuck}
Proposition~\ref{prop:dfr-dominance} formalizes an intuition:
\emph{the longer a model has been reasoning without finishing,
the more it benefits from switching to answer-extraction mode}.
Under DFR chain-length distributions, a chain that has consumed
$\tau$ tokens without completing is \emph{less} likely to
complete in the next $b_a$ tokens than a chain at $\tau' < \tau$
would be.
This means the think-mode continuation becomes increasingly
futile, while the truncated trace grows richer---making
nothink extraction increasingly valuable.
The DFR property is empirically plausible for reasoning chains:
models that are ``stuck'' in elaborate reasoning paths tend to
remain stuck, exhibiting worse-than-memoryless behavior.
Formal validation of the DFR assumption (empirical hazard
estimation with confidence bands) is an important direction
for future work.
\end{remark}